\newtheorem{theorem}{Theorem}
\newtheorem{corollary}{Corollary}
\newtheorem{lemma}{Lemma}
\newtheorem{definition}{Definition}
\newcommand{\vars}{\mathtt{vars}}
\def\X{{\mathbf X}}
\def\Y{{\mathbf Y}}
\def\S{{\mathbf S}}
\def\R{{\mathbf R}}
\def\C{{\mathbf C}}
\def\Q{{\mathbf Q}}
\def\x{{\mathbf x}}
\newcommand{\dup}[1]{[#1]}
\newcommand{\ndup}[2]{{#1}^{#2}}
\def\cls{{\mathbf C}}
\newcommand\pr{{\it Pr}}
\newcommand\n[1]{{\bar{#1}}}
\def\jtree{{\cal T}}
\def\hosts{{\cal H}}
\def\JT{{\langle\jtree,\hosts\rangle}}
\def\TJT{{\langle\jtree^t,\hosts^t\rangle}}
\tikzset{state/.style = {shape=circle,draw,thick,minimum size=3.0em}}
\tikzset{dstate/.style = {shape=circle,draw,thick,double,minimum size=3.0em}}
\tikzset{point/.style = {circle, draw, thick, inner sep=0.05cm,fill,node contents={}}}
\title{On the Complexity of Counterfactual Reasoning}
\author{
Yunqiu Han
\and
Yizuo Chen
\and
Adnan Darwiche
\affiliations
University of California, Los Angeles
\emails
yunqiu21@g.ucla.edu,
yizuo.chen@ucla.edu,
darwiche@cs.ucla.edu
}
\begin{document}

\maketitle

\begin{abstract}
We study the computational complexity of counterfactual reasoning in relation to the complexity of associational and interventional reasoning on
structural causal models (SCMs). We show that counterfactual reasoning is no harder than associational or interventional reasoning 
on fully specified SCMs in the context of two computational frameworks. 
The first framework is based on the notion of {\em treewidth}
and includes the classical variable elimination and jointree algorithms. The second framework is based on the more recent and refined
notion of {\em causal treewidth} which is directed towards models 
with functional dependencies such as SCMs. Our results are constructive
and based on bounding the (causal) treewidth of twin networks---
used in standard counterfactual reasoning that contemplates 
two worlds, real and imaginary---to the 
(causal) treewidth of the underlying SCM structure. In particular,
we show
that the latter (causal) treewidth is no more than twice the former plus one. Hence,
if associational or interventional reasoning is tractable on a fully
specified SCM
then counterfactual reasoning is tractable too. We extend our results to general counterfactual reasoning that requires contemplating more than two
worlds and discuss applications of our results to counterfactual reasoning
with partially specified SCMs that are coupled with data. 
We finally present empirical 
results that measure the gap between the complexities of counterfactual
reasoning and associational/interventional reasoning on random SCMs.
\end{abstract}

\section{Introduction}

A theory of causality has emerged over the last few decades based on two parallel hierarchies, 
an {\em information hierarchy} and a {\em reasoning hierarchy,} often called the {\em causal hierarchy}~\cite{pearl18}. 
On the reasoning side, this theory has crystallized three levels of reasoning with increased sophistication and proximity 
to human reasoning: associational, interventional and counterfactual, which are exemplified by the following canonical probabilities.
{\em Associational} \(\pr(y | x)\): probability of \(y\) given that \(x\) was observed. Example: probability that a patient has a flu given they have a fever.
{\em Interventional} \(\pr(y_x)\): probability of \(y\) given that \(x\) was established by an intervention. This is different from \(\pr(y | x)\). 
Example: seeing the barometer fall tells us about the weather but moving the barometer needle won't bring rain.
{\em Counterfactual} \(\pr(y_x | \n{y}, \n{x})\): probability of \(y\) if we were to establish \(x\) by an intervention
 given that neither \(x\) nor \(y\) are true. Example: probability that a patient who
did not take a vaccine and died would have recovered had they been vaccinated.
On the information side, these forms of reasoning were shown to require different
levels of knowledge, encoded as (1)~associational models, (2)~causal models and (3)~functional (mechanistic) models, respectively,
with each class of models containing more information than the preceding one.
In the framework of probabilistic graphical models~\cite{kollerbook}, this information is encoded by (1)~Bayesian networks~\cite{DarwicheBook09,Pearl88b}, 
(2)~causal Bayesian networks~\cite{pearl00b,PetersBook,SpirtesBook}, and (3)~functional Bayesian networks~\cite{uai/BalkeP95,pearl00b}.

\begin{figure}[tb]
\begin{center}
 \includegraphics[width=0.98\linewidth]{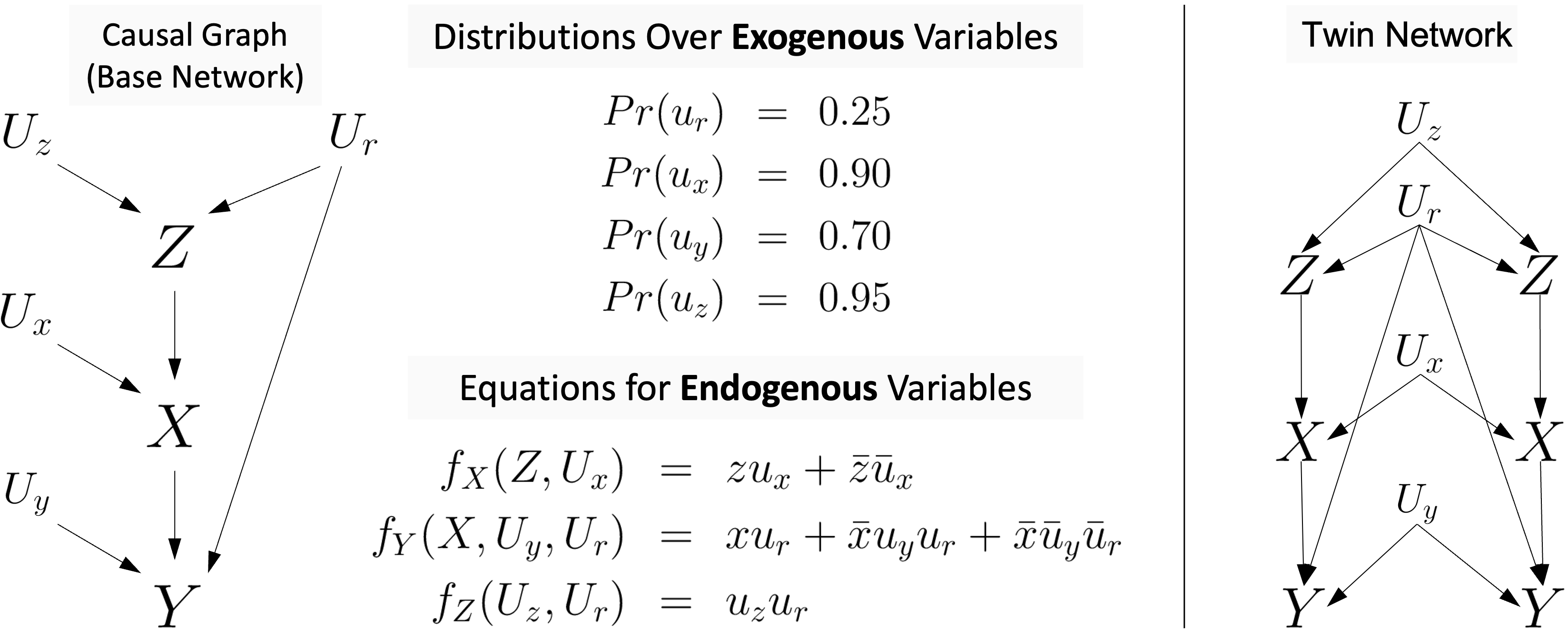}
\end{center}
\caption{\small A structural causal model~{\protect\cite{Bareinboim20211OP}} and its twin network.
Endogenous variables represent treatment (\(X\)), the outcome of (\(Y\)), and the presence of (\(Z\)), hypertension. 
Exogenous variables represent natural resistance to disease (\(U_r\)) and 
sources of variation affecting endogenous variables 
(\(U_x, U_y, U_z\)).}
\label{fig:SCM}
\end{figure}

Counterfactual reasoning has received much interest as it inspires both introspection and contemplating scenarios that 
have not been seen before, and is therefore viewed by many as a hallmark of human intelligence.
Figure~\ref{fig:SCM} depicts a functional Bayesian network, also known as a {\em structural causal model (SCM)}~\cite{galles1998axiomatic,halpern2000axiomatizing}, which can be used
to answer counterfactual queries. 
Variables without causes are called {\em exogenous} or {\em root} and variables with causes are called {\em endogenous} or {\em internal.} 
The only uncertainty in SCMs concerns the states of exogenous variables and this uncertainty is quantified using distributions over these variables. 
Endogenous variables are assumed to be {\em functional:}
they are functionally determined by their causes where the functional
relationships, also known as {\em causal mechanisms,} are specified by structural equations.\footnote{These equations can also be specified using
conditional probability tables (CPTs)  that are normally used in Bayesian networks, but the CPTs will contain only deterministic distributions.}
These equations and the distributions over exogenous variables define the {\em parameters}
of the causal graph, leading to a fully specified SCM which can be used to evaluate associational, interventional and
counterfactual queries. For example, the SCM in Figure~\ref{fig:SCM} has enough information to evaluate the counterfactual query 
\(\pr(y_x | \n x, \n y)\): the probability that a patient who did not take the treatment and died would have been alive had they been given the treatment (\(2.17\%\)). A causal Bayesian network contains less information than a functional one (SCM) as it does not require endogenous variables to be functional, 
but it is sufficient to compute associational and interventional probabilities. 
A Bayesian network contains even less information as it does not require network edges to have 
a causal interpretation, only that the conditional independences encoded by the network are correct, so
it can only compute associational probabilities.

All three forms of reasoning (and models) involve a directed acyclic graph (DAG) which we call the {\em base network;} see left of Figure~\ref{fig:SCM}.
Associational and interventional reasoning can be implemented by applying
classical inference algorithms to the base network. The time
complexity can be bounded by \(n \cdot \exp(w)\), where \(n\) is the
number of network nodes and \(w\) is its treewidth (a graph-theoretic measure of connectivity).
Counterfactual reasoning is more sophisticated and is based on a three-step process that involves abduction, intervention and prediction~\cite{aaai/BalkeP94}. When contemplating two worlds, 
this process can be implemented by applying classical inference algorithms
to a {\em twin network}~\cite{aaai/BalkeP94}, obtained by duplicating endogenous nodes in the base network; 
see right of Figure~\ref{fig:SCM}. To compute the counterfactual query \(\pr(y_x | \n{y}, \n{x})\), one asserts \(\n{y}, \n{x}\) as an observation
on one side of the twin network (real world) and computes the interventional query \(\pr(y_x)\) on the other side of the network (imaginary world).
The time complexity can be bounded by \(n^t \cdot \exp(w^t)\), where \(n^t\) is the number of nodes in
the twin network and \(w^t\) is its treewidth. 
A recent result provides a much tighter bound using the notion of {\em causal treewidth}~\cite{uai/ChenDarwiche22,causalityAC}, which
is no greater than treewidth but applies only when certain nodes in the base network are functional --- in SCMs every endogenous node 
is functional.

One would expect the more sophisticated counterfactual reasoning with twin networks to be more expensive than associational/interventional reasoning 
with base networks since the former networks are larger and have more complex topologies. But the question is: How much more expensive?
For example, can counterfactual reasoning be intractable
on a twin network when associational/interventional reasoning 
is tractable on its base network?
We address this question in the context of reasoning algorithms whose complexity is exponential only in the (causal) treewidth,
such as the jointree algorithm~\cite{jointreeAlg}, the variable elimination algorithm~\cite{uai/ZhangP94,dechterUAI96} and circuit-based algorithms~\cite{DarwicheJACM03,neusys22}.
In particular, we show in Sections~\ref{sec:tw}~\&~\ref{sec:ctw}
that the (causal) treewidth of a twin network is at most twice the (causal) treewidth of its base network plus one. Hence, the complexity of counterfactual reasoning on fully specified SCMs is no more than quadratic in the 
complexity of associational and interventional reasoning, so the former must be tractable if the latter is tractable. We extend our results in Section~\ref{sec:n-worlds} to counterfactual reasoning 
that requires contemplating more than two worlds, where we also
discuss a class 
of applications that require this type of reasoning and for
which fully specified SCMs can be readily available. Our
results apply directly to counterfactual reasoning on fully specified SCMs but
we also discuss in Section~\ref{sec:pscms} how they can sometimes be used in
the context of  
counterfactual reasoning on data and a partially specified SCM.
We finally present empirical results in Section~\ref{sec:expS} which reveal that, on average,
the complexity gap between counterfactual and associational/interventional reasoning on fully specified SCMs can be smaller than what our worst-case bounds may suggest.

\section{Technical Preliminaries}
\label{sec:prelim}

We next review the notions of treewidth~\cite{jal/RobertsonS86} and 
causal treewidth~\cite{uai/ChenDarwiche22,causalityAC,DarwicheECAI20b} which we use to characterize the 
computational complexity of counterfactual reasoning on fully specified SCMs.
We also review the notions of elimination orders, jointrees and thinned jointrees which are the basis for defining (causal) treewidth
and act as data structures that characterize the computational
complexity of various reasoning algorithms.
We use these notions extensively when stating and proving our results (proofs of all results are in Appendix~\ref{app:proof}).
We assume all variables are discrete.
A variable is denoted by an uppercase letter (e.g. $X$) and its 
values by a lowercase letter (e.g. $x$). 
A set of variables is denoted by a bold uppercase letter (e.g. $\X$) and its instantiations by a bold lowercase letter (e.g. $\x$).

\subsection{Elimination Orders and Treewidth}
These are total orders of the network variables which drive,
and characterize the complexity of, the classical variable elimination
algorithm when computing associational, interventional and counterfactual queries.
Consider a DAG $G$ where every node represents a variable. 
An \emph{elimination order} $\pi$ for $G$ is a total ordering of the variables in $G$, where $\pi(i)$ is the $i^{th}$ variable in the order, starting from $i=1$. 
An elimination order defines an elimination process on the moral graph of DAG \(G\) which is used to define the treewidth of \(G\).
The \emph{moral graph} $G_m$ is obtained from $G$ by adding an undirected edge between every pair of common parents and 
then removing directions from all directed edges. 
When we \emph{eliminate} variable $\pi(i)$ from $G_m$, we connect every pair of neighbors of $\pi(i)$ in $G_m$ 
and remove $\pi(i)$ from $G_m$. 
This elimination process induces a \emph{cluster sequence} $\cls_1,\cls_2,\dots,\cls_n$, where $\cls_i$ is $\pi(i)$ and its neighbors in 
$G_m$ just before eliminating \(\pi(i)\). The \emph{width} of an elimination order is the size of its largest induced cluster minus~\(1\). 
The \emph{treewidth} for DAG $G$ is the minimum width of any elimination order for $G$.
The variable elimination algorithm computes
queries in \(O(n \cdot \exp(w))\) time where \(n\) is the number of nodes
in the (base or twin) network and \(w\) is the width of a corresponding elimination 
order.
Elimination orders are usually constructed using heuristics that aim to minimize their width.
We use the popular {\em minfill} heuristic~\cite{ELMheuristics} in our experiments 
while noting that more effective heuristics may exist
as shown in~\cite{Kjaerulff1994,Larranaga1997}.

\begin{figure}[tb]
\centering
\begin{subfigure}[b]{0.18\columnwidth}
\centering
\begin{tikzpicture}[->,>=stealth,shorten >=1pt,auto,scale=0.36,transform shape]
\node[state,thin,font=\huge] (B) at (-1.5,-1) {$A$};
\node[state,thin,font=\huge] (C) at (1.5,-1) {$B$};
\node[state,thin,font=\huge] (D) at (-1.5,-3) {$D$};
\node[state,thin,font=\huge] (E) at (0,-2) {$C$};
\node[state,thin,font=\huge] (G) at (0,-4.5) {$F$};
\node[state,thin,font=\huge] (F) at (1.5,-3) {$E$};
\path (B) edge (E);
\path (C) edge (E);
\path (B) edge (D);
\path (D) edge (G);
\path (E) edge (G);
\path (D) edge (F);
\path (C) edge (F);
\end{tikzpicture}
\caption{DAG}
\label{fig:dag-rep}
\end{subfigure}
\hfill
\begin{subfigure}[b]{0.34\columnwidth}
\centering
\includegraphics[width=.95\linewidth]{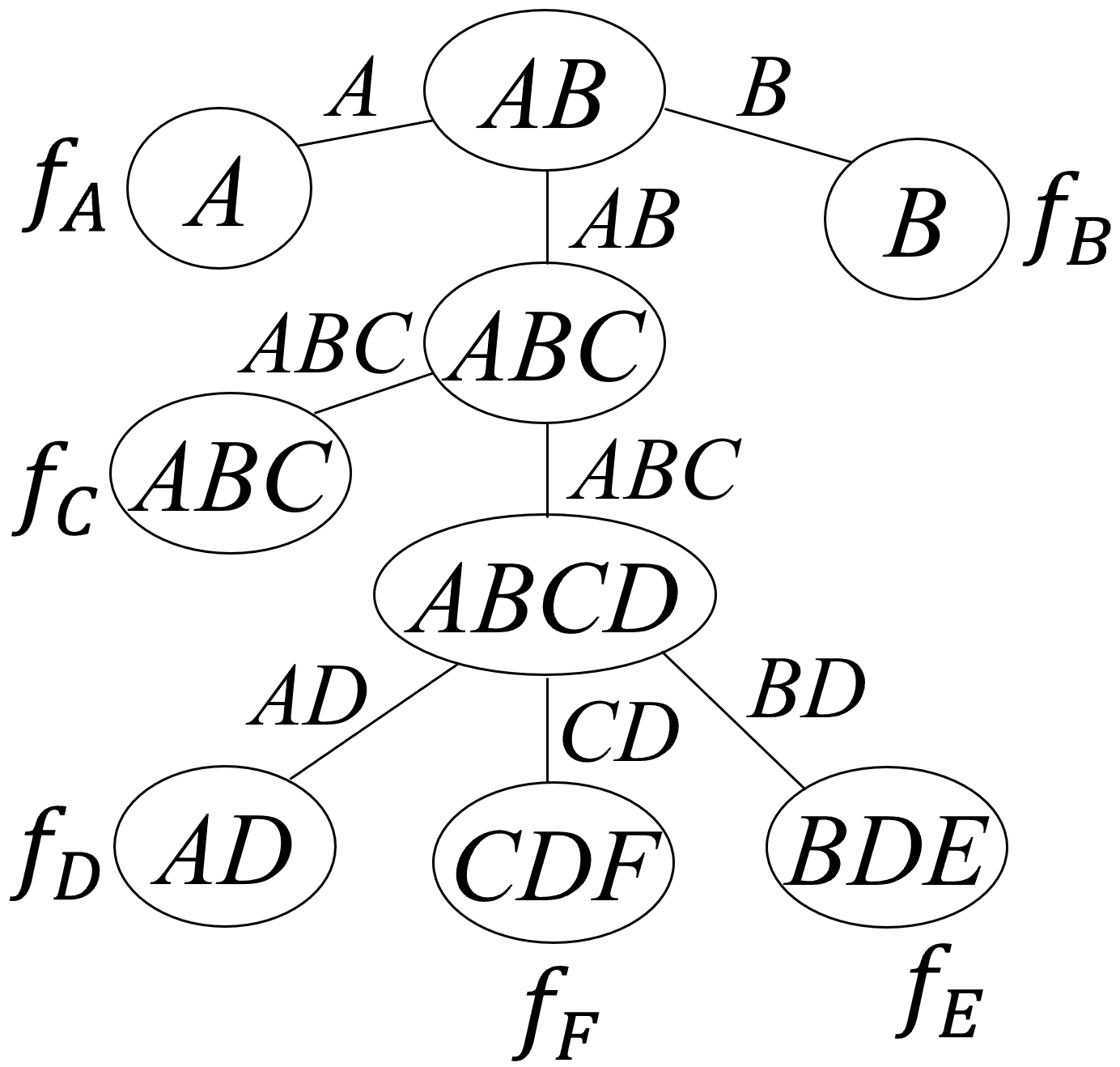}
\caption{jointree (width \(3\))}
\label{fig:jt1}
\end{subfigure}
\hfill
\begin{subfigure}[b]{0.44\columnwidth}
\centering
\includegraphics[width=.95\linewidth]{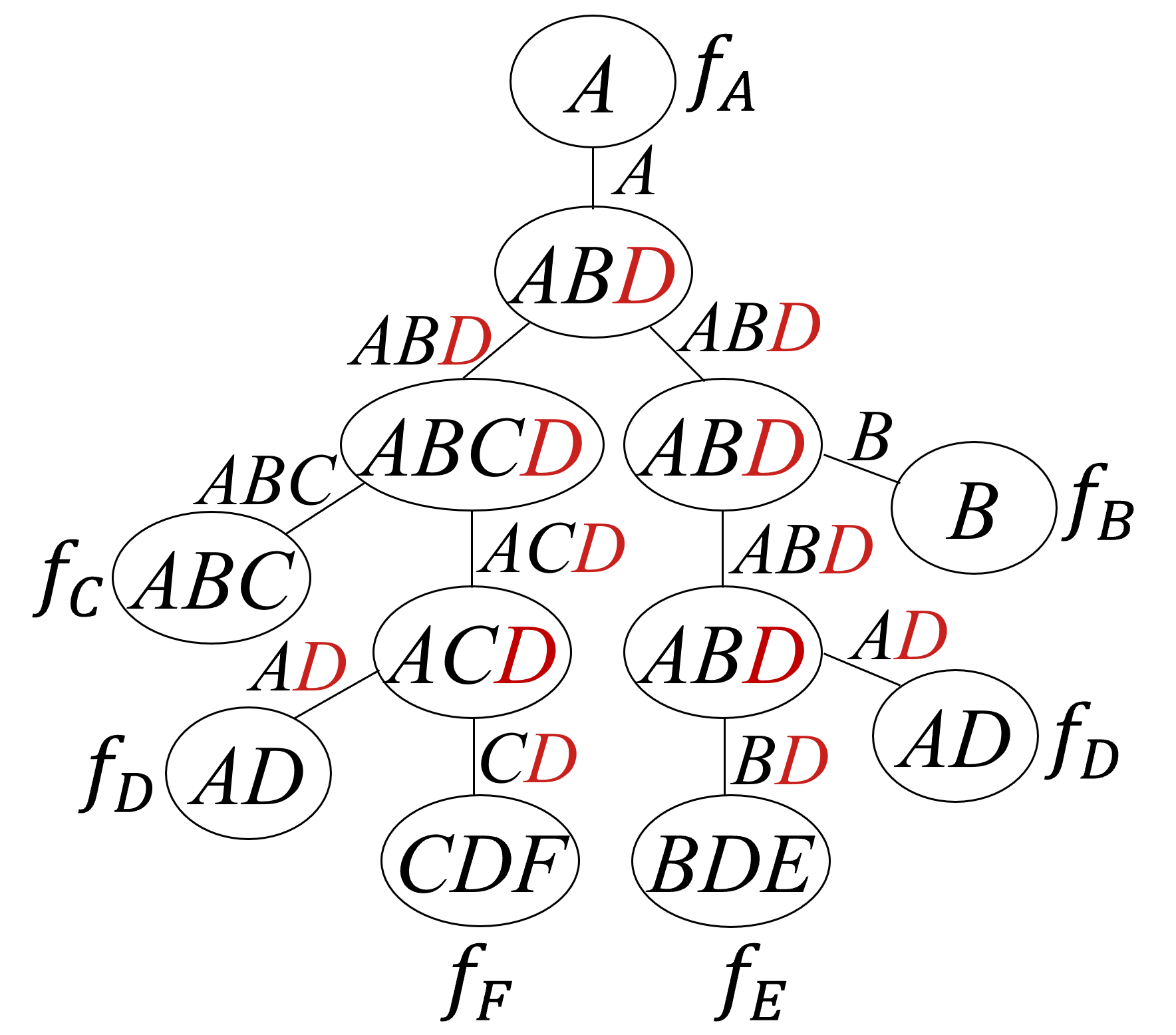}
\caption{thinned jointree (width \(2\))}
\label{fig:tjt}
\end{subfigure}
\hfill
\caption{A family \(f\) appears next to a jointree 
node \(i\) iff \(f\) is hosted by \(i\) (\(i \in \hosts(f)\)).
$D$ is functional and red variables are thinned.
\label{fig:ex1}}
\end{figure}
\begin{figure}[tb]
\begin{center}
\begin{subfigure}[b]{0.22\columnwidth}
\centering
\includegraphics[width=\linewidth]{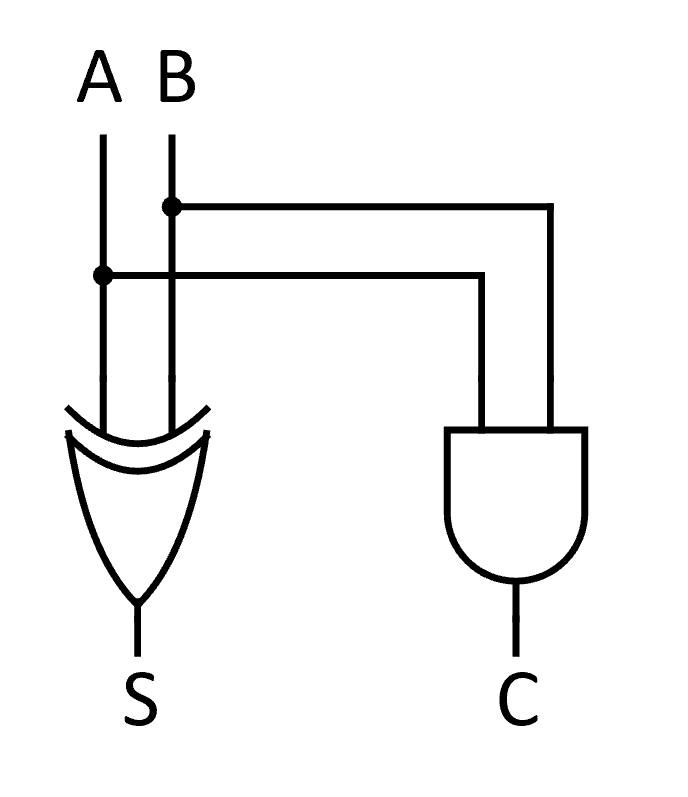}
\caption{half adder}
\label{fig:half-adder}
\end{subfigure}
\hfill
\begin{subfigure}[b]{0.25\columnwidth}
\centering
\begin{tikzpicture}[->,>=stealth,shorten >=1pt,auto,scale=0.35,transform shape]
\node[state,thin,font=\huge] (U) at (0,0) {$U$};
\node[state,thin,font=\huge] (X) at (-2,0) {$X$};
\node[state,thin,font=\huge] (Y) at (2,0) {$Y$};
\node[state,thin,font=\huge] (A) at (-1,-1.5) {$A$};
\node[state,thin,font=\huge] (B) at (1,-1.5) {$B$};
\node[state,thin,font=\huge] (S) at (-2,-3) {$S$};
\node[state,thin,font=\huge] (C) at (2,-3) {$C$};
\path (U) edge (A);
\path (U) edge (B);
\path (A) edge (S);
\path (B) edge (S);
\path (X) edge (S);
\path (A) edge (C);
\path (B) edge (C);
\path (Y) edge (C);
\end{tikzpicture}
\caption{base network}
\label{fig:base-net}
\end{subfigure}
\hfill
\begin{subfigure}[b]{0.25\columnwidth}
\centering
\begin{tikzpicture}[->,>=stealth,shorten >=1pt,auto,scale=0.35,transform shape]
\node[state,thin,font=\huge] (U) at (0,0) {$U$};
\node[state,thin,font=\huge] (X) at (-2,0) {$X$};
\node[state,thin,font=\huge] (Y) at (2,0) {$Y$};
\node[state,thin,font=\huge] (A) at (-1,-1.5) {$A$};
\node[state,thin,font=\huge] (B) at (1,-1.5) {$B$};
\node[state,thin,font=\huge] (S) at (-2,-3) {$S$};
\node[state,thin,font=\huge] (C) at (2,-3) {$C$};

\node[state,thin,font=\huge] (A1) at (-1,1.5) {$\dup{A}$};
\node[state,thin,font=\huge] (B1) at (1,1.5) {$\dup{B}$};
\node[state,thin,font=\huge] (S1) at (-2,3.5) {$\dup{S}$};
\node[state,thin,font=\huge] (C1) at (2,3.5) {$\dup{C}$};

\path (U) edge (A);
\path (U) edge (B);
\path (A) edge (S);
\path (B) edge (S);
\path (X) edge (S);
\path (A) edge (C);
\path (B) edge (C);
\path (Y) edge (C);

\path (U) edge (A1);
\path (U) edge (B1);
\path (A1) edge (S1);
\path (B1) edge (S1);
\path (X) edge (S1);
\path (A1) edge (C1);
\path (B1) edge (C1);
\path (Y) edge (C1);

\end{tikzpicture}
\caption{twin network}
\label{fig:twin-net}
\end{subfigure}
\hfill
\begin{subfigure}[b]{0.25\columnwidth}
\centering
\begin{tikzpicture}[->,>=stealth,shorten >=1pt,auto,scale=0.35,transform shape]
\node[state,thin,font=\huge] (U) at (0,0) {$U$};
\node[state,thin,font=\huge] (X) at (-2,0) {$X$};
\node[state,thin,font=\huge] (Y) at (2,0) {$Y$};
\node[dstate,thin,font=\huge] (A) at (-1,-1.5) {$A$};
\node[dstate,thin,font=\huge] (B) at (1,-1.5) {$B$};
\node[dstate,thin,font=\huge] (S) at (-2,-3) {$S$};
\node[dstate,thin,font=\huge] (C) at (2,-3) {$C$};

\node[dstate,thin,font=\huge] (A1) at (-1,1.5) {$\dup{A}$};
\node[dstate,thin,font=\huge] (B1) at (1,1.5) {$\dup{B}$};
\node[state,thin,font=\huge] (S1) at (-2,3.5) {$\dup{S}$};
\node[state,thin,font=\huge] (C1) at (2,3.5) {$\dup{C}$};

\path (U) edge (A);
\path (U) edge (B);
\path (A) edge (S);
\path (B) edge (S);
\path (X) edge (S);
\path (A) edge (C);
\path (B) edge (C);
\path (Y) edge (C);

\path (A1) edge (S1);
\path (B1) edge (S1);
\path (X) edge (S1);
\path (A1) edge (C1);
\path (B1) edge (C1);
\path (Y) edge (C1);

\end{tikzpicture}
\caption{mutilated}
\label{fig:twin-net2}
\end{subfigure}
\end{center}
\hfill
\caption{Internal nodes in the base network (Figure (b)) are functional. Double-circled nodes have evidence. }
\end{figure}

\subsection{Jointrees and Treewidth}
These are data structures that drive, and characterize the
complexity of, the classical jointree algorithm; see Figure~\ref{fig:jt1}.
Let the {\em family} of variable \(X\) in DAG \(G\) be the set \(f_X\) containing \(X\) and its parents in \(G\).
A \emph{jointree} for DAG $G$ is a pair \(\JT\) where \(\jtree\) is a tree and \(\hosts\) is a function that maps each family \(f\) of \(G\) into nodes \(\hosts(f)\) in \(\jtree\) called the {\em hosts} of family \(f\). 
The requirements are: only nodes with a single neighbor (called {\em leaves}) can be hosts; each leaf node hosts exactly one family;
and each family must be hosted by at least one node.\footnote{The 
standard definition of jointrees allows any node to be a host of 
any number of families. Our definition facilitates the
upcoming treatment and does not preclude optimal jointrees.}
This induces a {\em cluster} \(\C_i\) for each jointree node \(i\)
and a {\em separator} \(\S_{ij}\) for each jointree edge \((i,j)\) which are defined as follows.
Separator \(\S_{ij}\) is the set of variables hosted at both sides of edge \((i,j)\).
If jointree node \(i\) is a leaf then cluster \(\C_i\) is the family hosted by \(i\); 
otherwise, \(\C_i\) is the union of separators adjacent to node \(i\). 
The \emph{width} of a jointree is the size of its largest cluster minus~\(1\). 
The minimum width attained by any jointree for $G$ corresponds to the treewidth of \(G\).
The jointree algorithm computes queries in \(O(n \cdot \exp(w))\) 
time where \(n\) is the number of nodes 
and \(w\) is the width of a corresponding jointree.
Jointrees are usually constructed from elimination orders, and
there are polytime, width-preserving transformations between elimination orders and jointrees; see~\cite[Ch~9]{DarwicheBook09} for details.

\subsection{Thinned Jointrees and Causal Treewidth} 

To {\em thin} a jointree is to remove some variables from its separators (and hence clusters, which are defined in terms of separators);
see Figure~\ref{fig:tjt}.
Thinning can reduce the jointree width quite significantly, leading to exponential savings in reasoning time. 
Thinning is possible only when some variables in the network are 
functional, even without knowing the specific functional 
relationships (i.e., structural equations).
The {\em causal treewidth} is the minimum width for any thinned jointree.
Causal treewidth is no greater than treewidth and the former can be bounded when the latter is not. 
While this notion can be applied broadly as in~\cite{DarwicheECAI20b}, it is particularly relevant to counterfactual reasoning since every internal node in an SCM is functional so the causal treewidth for SCMs
can be much smaller
than their treewidth.
There are alternate definitions of thinned jointrees. The next
definition is based on thinning rules~\cite{uai/ChenDarwiche22}.

A thinning rule removes a variable from a separator under certain conditions.
There are two thinning rules which apply only to functional variables. 
The first rule removes variable \(X\) from a separator
\(\S_{ij}\) if edge \((i,j)\) is on the path between two leaf nodes 
that host the family of \(X\) and every separator on that path contains \(X\). 
The second rule removes variable \(X\) from a separator \(\S_{ij}\) if no other separator \(\S_{ik}\) contains \(X\), or no other separator \(\S_{kj}\)
contains \(X\). A thinned jointree is obtained by applying these rules to
exhaustion. Figure~\ref{fig:ex1} depicts an optimal, classical jointree and a thinned jointree for the same DAG (the latter has smaller width).

The effectiveness of thinning rules depends on the number of jointree
nodes that host a family \(f\), \(|\hosts(f)|\), and the location
of these nodes in the jointree. One can enable more thinnings by 
increasing the number of jointree nodes that host each family \(f\).
This process is called {\em replication} where \(|\hosts(f)|\) is
called the number of {\em replicas} for family \(f\). Replication
comes at the expense of increasing the number of jointree nodes
so the definition
of causal treewidth limits this growth by requiring the 
jointree size to be a polynomial in the number of nodes in the underlying DAG; see~\cite{uai/ChenDarwiche22} for details.\footnote{Thinning
rules will not trigger if families are not replicated (\(|\hosts(f)|=1\) 
for all \(f\)). Replication usually increases the width 
of a jointree from \(w\) to \(w_r\) with the goal of having thinning
rules reduce width \(w_r\) to width \(w_t < w \leq w_r\). The replication 
strategy may sometimes not be effective on certain networks, 
leading to \(w < w_t \leq w_r\). We use the strategy
in~\cite{uai/ChenDarwiche22}
which exhibits this behavior on certain networks.\label{foot:replication}}

\section{The Treewidth of Twin Networks}
\label{sec:tw}
Consider Figure~\ref{fig:half-adder} which depicts a 2-bit half adder. Suppose the binary inputs $A$ and $B$ are randomly sampled from some distribution and the gates may not be functioning properly. This circuit can be modeled using the network in Figure~\ref{fig:base-net}. Variables $A,B,S,C$ represent the inputs and outputs of the circuit; $X,Y$ represent the health of the XOR gate and the AND gate; and $U$ represents an unknown external random sampler that decides the state of inputs $A$ and $B$. 
Suppose that currently input $A$ is high, input $B$ is low, yet both
outputs $C$ and $S$ are low which is an abnormal circuit behavior. We wish to know whether the half adder would still behave correctly when we turn both inputs $A$ and $B$ on. This question can be formulated using the following counterfactual query: $Pr((c,\n{s})_{a,b}|a,\n{b},\n{c},\n{s})$. 
This query can be answered using a twin network as shown in Figure~\ref{fig:twin-net}, where each non-root variable $V$ has a 
duplicate $\dup{V}$. 
The current evidence \(a,\n{b},\n{c},\n{s}\)
is asserted on the variables \(A,B,C,S\) representing
the real world and the 
interventional query $Pr((c,\n{s})_{a,b})$ is computed 
on the duplicate variables \(\dup{A},\dup{B},\dup{C},\dup{S}\)
representing the imaginary world. This is done by removing
the edges incoming into the intervened upon variables 
$\dup{A},\dup{B}$, asserting evidence $\dup{a},\dup{b}$ and 
finally computing the probability of $\dup{c},\dup{\n{s}}$ 
as shown in Figure~\ref{fig:twin-net2}; see~\cite{pearl00b}
for an elaborate discussion of these steps.
This basically illustrates how a counterfactual query
can be computed using algorithms for associational
queries, like variable elimination,
but on a mutilated twin network instead of the base network.

We next show that the treewidth of a twin network is at most
twice the treewidth of its base network plus one,
which allows us to relate the complexities of assocational,
interventional and counterfactual reasoning on fully specified SCMs.
We first recall the definition of twin networks as proposed by~\cite{aaai/BalkeP94}. 

\begin{definition}
Given a base network $G$, its \underline{twin network} $G^t$ is constructed as follows.
For each internal variable $X$ in $G$, add a new variable labeled $\dup{X}$. For each parent $P$ of $X$, if $P$ is an internal variable, make $\dup{P}$ a parent of $\dup{X}$; otherwise, make $P$ a parent of $\dup{X}$. We will call $X$ a \underline{base variable} and $\dup{X}$ a \underline{duplicate variable}.
\end{definition}

For convenience, we use $\dup{U}=U$ when \(U\) is root.
For variables $\X$, we use $\dup{\X}$ 
to denote $\{\dup{X} | X \in \X\}$.
Figure~\ref{fig:twin-net} depicts the twin network for the base 
network in Figure~\ref{fig:base-net}. 

\subsection{Twin elimination orders}

Our result on the treewidth of twin networks is based
on converting every elimination order for the base network
into an elimination order for its twin network while
providing a guarantee on the width of the latter in terms
of the width of the former. We provide a
similar result for jointrees that we use when 
discussing the causal treewidth of twin networks.

\begin{definition}~\label{def:teo}
Consider an elimination order $\pi$ for a base network $G$.
The \underline{twin elimination order} $\pi^t$ is an elimination order for its twin network $G^t$ constructed by replacing each non-root variable $X$ in $\pi$ by $X,\dup{X}$.
\end{definition}

Consider the base network in Figure~\ref{fig:base-net} and
its elimination order $\pi = A$, $B$, $X$, $Y$, $S$, $C$, $U$.
The twin elimination order will be $\pi^t = A$, $\dup{A}$, $B$, $\dup{B}$, $X$, $Y$, $S$, $\dup{S}$, $C$, $\dup{C}$, $U$.
Recall that eliminating variables \(\pi(i), \ldots, \pi(n)\)
from a base network $G$ induces a cluster sequence \(\C_1, \ldots, \C_n\). 
We use \(\C(X)\) to denote the cluster of eliminated
variable \(X\). Similarly, eliminating variables from a twin network $G^t$ induces a cluster sequence and we use \(\C^t(X)\) to 
denote the cluster of eliminated variable $X$ and \(\C^t(\dup{X})\)
to denote the cluster of its eliminated duplicate $\dup{X}$. 

\begin{theorem}\label{thm:twin-cls}
Suppose we are eliminating variables from base network \(G\)
using an elimination order \(\pi\) and
eliminating variables from its twin network \(G^t\) using the
twin elimination order \(\pi^t\).
For every variable \(X\) in \(G\), we have
$\cls^t(X) \subseteq \cls(X) \cup \dup{\cls(X)}$ and $\cls^t(\dup{X}) \subseteq \cls(X) \cup \dup{\cls(X)}$.
\end{theorem}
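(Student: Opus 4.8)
The plan is to prove the two containments simultaneously by induction on the elimination step, tracking how the moral graph of the twin network evolves relative to the moral graph of the base network. The key structural fact to establish first is a relationship between the fill-in/moral graphs: at any point in the elimination process, whenever two base variables $X,Y$ are neighbors in the (partially eliminated) moral graph of $G$, the corresponding duplicates satisfy a mirrored adjacency in $G^t$ — specifically, if $X \sim Y$ in $G_m$ then (i) $\dup{X} \sim \dup{Y}$, and (ii) an edge $X \sim \dup{Y}$ is present only when it is ``forced'' by the twin construction, i.e. through a shared root parent or through an interaction already recorded in $\C(X)\cup\C(Y)$. I would phrase this as an invariant: after eliminating the prefix $\pi^t(1),\dots,\pi^t(k)$ from $G^t$ and the corresponding prefix from $G$, the neighbor set of any surviving base variable $V$ in $G^t_m$ is contained in $N_{G_m}(V) \cup \dup{N_{G_m}(V)}$, and likewise the neighbor set of any surviving duplicate $\dup{V}$ is contained in $N_{G_m}(V) \cup \dup{N_{G_m}(V)}$.

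First I would verify the invariant at initialization, i.e. for the moral graphs before any elimination. This is where the twin-network definition does the work: the parents of $\dup{X}$ are exactly $\{\dup{P} : P \text{ internal parent of } X\} \cup \{P : P \text{ root parent of } X\}$, using the convention $\dup{U}=U$ for roots. Moralization adds edges among co-parents; because the co-parents of $\dup{X}$ are precisely the duplicates (or root copies) of the co-parents of $X$, every moral edge on the duplicate side mirrors one on the base side, and the only edges crossing between base and duplicate variables run through shared root variables $U$, which lie in $\C(X)$ for any child $X$ of $U$. So $N_{G^t_m}(X) \subseteq N_{G_m}(X) \cup \dup{N_{G_m}(X)}$ and symmetrically for $\dup{X}$ — the base case.

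For the inductive step I would use the fact that $\pi^t$ eliminates $X$ immediately followed by $\dup{X}$ (for non-root $X$), and roots are eliminated exactly as in $\pi$. When we eliminate $X$ from $G^t_m$, we fill in its neighbors; by the invariant these neighbors live in $N_{G_m}(X)\cup\dup{N_{G_m}(X)}$, so all fill-in edges created stay within the ``mirrored'' world, and they match (under the base/duplicate correspondence) the fill-in edges created when eliminating $X$ from $G_m$. Then eliminating $\dup{X}$ — whose surviving neighbors are again within the mirrored set, now possibly including the just-added fill-ins — creates only mirrored fill-ins as well. One checks that after this pair of eliminations the invariant is restored for all remaining variables. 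The cluster statement then follows immediately: $\C^t(X)$ is $X$ together with its neighbors in $G^t_m$ at elimination time, which by the invariant sits inside $\C(X) \cup \dup{\C(X)}$ (note $\C(X) = \{X\}\cup N_{G_m}(X)$ at the matching step, and $X \in \C(X)$, $\dup{X}\in\dup{\C(X)}$), and similarly for $\C^t(\dup{X})$.

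The main obstacle I anticipate is the bookkeeping around root variables and the asymmetry they introduce: a root $U$ is shared between the two worlds, so it can appear as a genuine cross-world neighbor, and one must be careful that $U \in \C(X)$ whenever $U$ is needed in $\C^t(\dup{X})$ — this is true because $U$ is a parent of both $X$ and $\dup{X}$, hence $U \in \C(X)$ at the step where $X$ is eliminated (or $U$ was eliminated earlier, in which case the relevant interactions were already pushed into some $\C(\cdot)$ and carried along). A second delicate point is making sure the induction hypothesis is stated strongly enough to survive a single elimination but not so strong as to be false after moralization-induced fill-ins; getting the exact form of the invariant right (mirrored neighbor sets, with cross edges only through shared roots) is the crux, and once it is pinned down the rest is a routine edge-chase.
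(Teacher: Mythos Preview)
Your proposal is correct and matches the paper's approach almost exactly: the paper isolates your invariant as a separate lemma (for each elimination step $i$, $G^t_i(X) \subseteq G_i(X) \cup \dup{G_i(X)}$ and likewise for $\dup{X}$), proves it by induction with the pair $\{X,\dup{X}\}$ eliminated as a single step in the twin graph sequence, and then reads off the cluster containments directly. Your anticipated obstacle about roots dissolves under the convention $\dup{U}=U$, and the extra stipulation that cross edges arise ``only through shared roots'' is not needed---the plain neighbor-set containment is already strong enough and is all the paper uses.
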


This theorem has two key corollaries. The first
relates the widths of an elimination order and 
its twin elimination order.

\begin{corollary}
\label{cor:teo-width}
Let $w$ be the width of  elimination order $\pi$ 
for base network $G$ and let $w^t$ be the width of twin 
elimination order $\pi^t$ for twin network $G^t$. 
We then have $w^t \leq 2w+1$.
\end{corollary}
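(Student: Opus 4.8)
The plan is to read off the bound directly from Theorem~\ref{thm:twin-cls} together with the definition of the width of an elimination order; essentially all the combinatorial work has been front-loaded into that theorem, and what remains is bookkeeping on set sizes. First I would recall that the width of an elimination order is the size of its largest induced cluster minus one. Applied to the hypothesis that $\pi$ has width $w$, this gives $|\cls(X)| \le w+1$ for every variable $X$ in the base network $G$. Likewise, $w^t$ is the size of the largest cluster in the cluster sequence that $\pi^t$ induces on the twin network $G^t$, minus one, so it suffices to upper bound the size of an arbitrary such cluster.

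Next I would observe that every cluster in the sequence induced by $\pi^t$ has the form $\cls^t(X)$ for some base variable $X$ or $\cls^t(\dup{X})$ for some duplicate variable $\dup{X}$: the order $\pi^t$ enumerates exactly the variables of $G^t$, which (using $\dup{U}=U$ for roots) are the base variables together with the duplicates of the non-root variables. Theorem~\ref{thm:twin-cls} states that in either case the cluster is contained in $\cls(X) \cup \dup{\cls(X)}$, so its cardinality is at most $|\cls(X) \cup \dup{\cls(X)}| \le |\cls(X)| + |\dup{\cls(X)}|$.

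The last step is the cardinality count. Since the map $V \mapsto \dup{V}$ is injective on any set of variables (a non-root variable goes to a fresh, distinct duplicate, and a root variable goes to itself), we have $|\dup{\cls(X)}| = |\cls(X)| \le w+1$. Hence every cluster in the cluster sequence of $G^t$ has size at most $2(w+1)$, and therefore $w^t \le 2(w+1) - 1 = 2w+1$, which is the claim. I would add a one-line remark that the union $\cls(X) \cup \dup{\cls(X)}$ is in fact strictly smaller whenever $\cls(X)$ contains a root variable, so the inequality is never tight for that reason alone.

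I do not expect a genuine obstacle in this corollary: the only point that warrants a moment of care is confirming that duplication does not identify two distinct base variables with a single duplicate (needed for $|\dup{\cls(X)}| = |\cls(X)|$), which is immediate from the twin-network construction. All the real difficulty lies in establishing Theorem~\ref{thm:twin-cls}, which I am taking as given here.
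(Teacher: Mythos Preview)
Your proposal is correct and follows essentially the same approach as the paper: invoke Theorem~\ref{thm:twin-cls} to bound each twin cluster by $\cls(X)\cup\dup{\cls(X)}$, note that this union has size at most $2|\cls(X)|\le 2(w+1)$, and subtract one. The paper's proof is just a terser version of what you wrote; your added remarks about injectivity of $V\mapsto\dup V$ and the strict inequality in the presence of root variables are correct side observations but not needed for the argument.
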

The above bound is tight as shown in Appendix~\ref{app:tight}.
The next corollary gives us our first major result.

\begin{corollary}
~\label{cor:twin-treewidth}
If \(w\) is the treewidth of base network $G$ and
\(w^t\) is the treewidth of its twin network $G^t$,
then $w^t \leq 2w+1$.
\end{corollary}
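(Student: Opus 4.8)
The plan is to obtain this as an immediate consequence of Corollary~\ref{cor:teo-width}. First I would fix an elimination order $\pi$ for the base network $G$ whose width equals the treewidth $w$ of $G$; such an order exists by the very definition of treewidth as the minimum width over all elimination orders for $G$. I would then form its twin elimination order $\pi^t$ as in Definition~\ref{def:teo}, and dispatch the one routine point that needs checking: that $\pi^t$ is a legitimate elimination order for $G^t$, i.e.\ it lists each variable of $G^t$ exactly once. This holds because the variables of $G^t$ are precisely the root variables of $G$ (which $\pi^t$ inherits unchanged from $\pi$, using the convention $\dup{U}=U$ for roots) together with every non-root variable $X$ of $G$ and its duplicate $\dup{X}$ (which $\pi^t$ inserts as the consecutive pair $X,\dup{X}$ in place of $X$); so $\pi^t$ is a total order on exactly the variables of $G^t$.

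Next I would simply apply Corollary~\ref{cor:teo-width} to $\pi$ and $\pi^t$: since $\pi$ has width $w$, the twin order $\pi^t$ has width at most $2w+1$. Finally, since the treewidth $w^t$ of $G^t$ is the minimum width over all elimination orders for $G^t$, and $\pi^t$ is one such order, we conclude $w^t \leq 2w+1$.

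I do not expect any genuine obstacle here: all the real work is already carried by Theorem~\ref{thm:twin-cls} and Corollary~\ref{cor:teo-width}. The only things to be careful about are the bookkeeping above, that $\pi^t$ is well defined, and the logical direction of the argument --- namely that we need $\pi$ to be \emph{optimal} for $G$ but only need $\pi^t$ as a \emph{witness} order for $G^t$, not an optimal one, so that the inequality $w^t \leq \text{width}(\pi^t) \leq 2w+1$ goes through even if $\pi^t$ happens not to achieve the treewidth of $G^t$.
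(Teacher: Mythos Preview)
Your proposal is correct and matches the paper's own proof essentially line for line: pick an optimal elimination order $\pi$ for $G$, apply Corollary~\ref{cor:teo-width} to bound the width of $\pi^t$ by $2w+1$, and conclude since treewidth is a minimum over all orders. The extra bookkeeping you do (verifying $\pi^t$ is a bona fide elimination order for $G^t$) is fine but already folded into Definition~\ref{def:teo} in the paper.
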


\subsection{Twin jointrees}
\begin{figure}[tb]
\centering
\begin{subfigure}[b]{0.32\columnwidth}
\centering
\includegraphics[width=1\columnwidth]{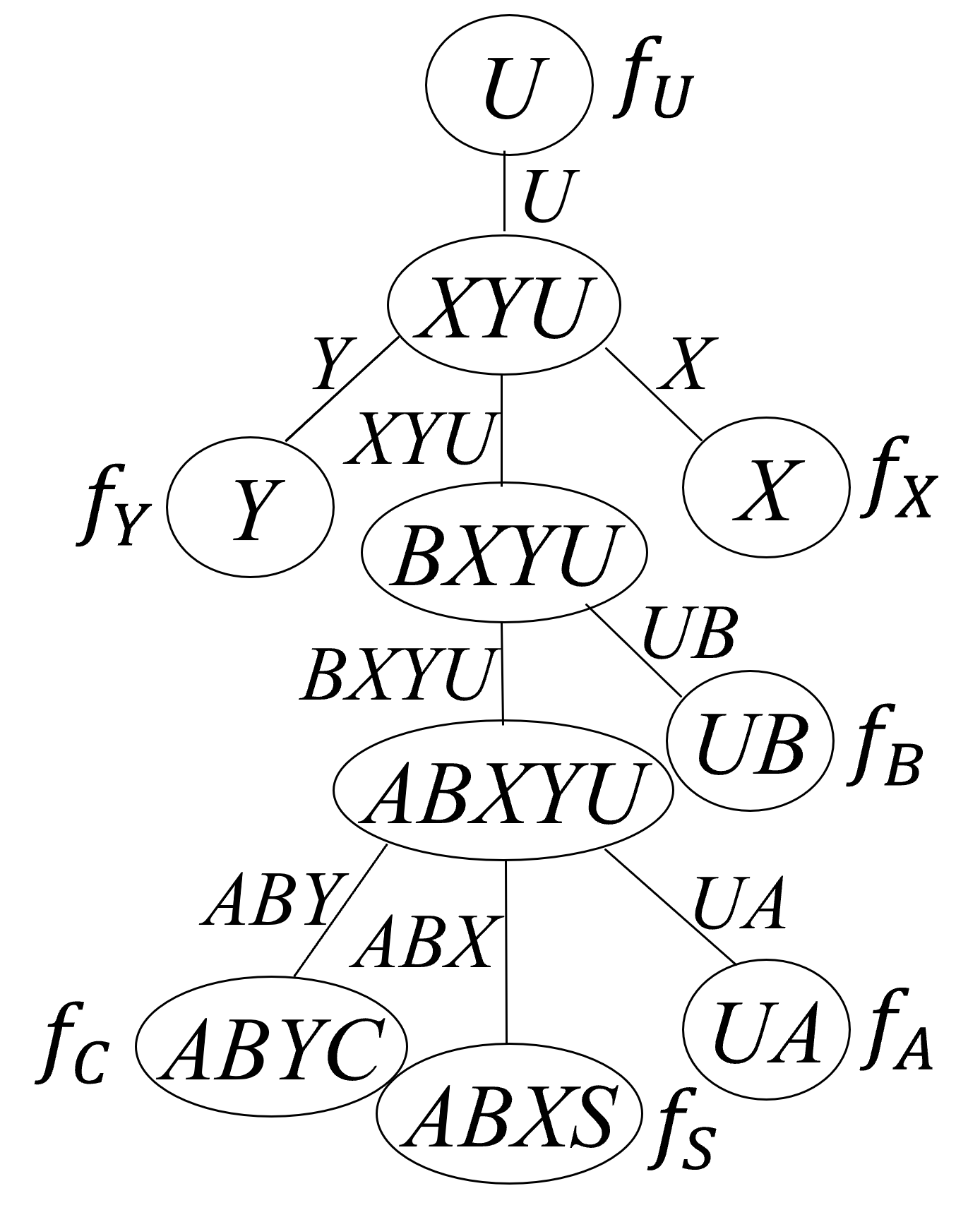}
\caption{base jointree}
\label{fig:base-jt}
\end{subfigure}
\hfill
\begin{subfigure}[b]{0.57\columnwidth}
\centering
\includegraphics[width=1\columnwidth]{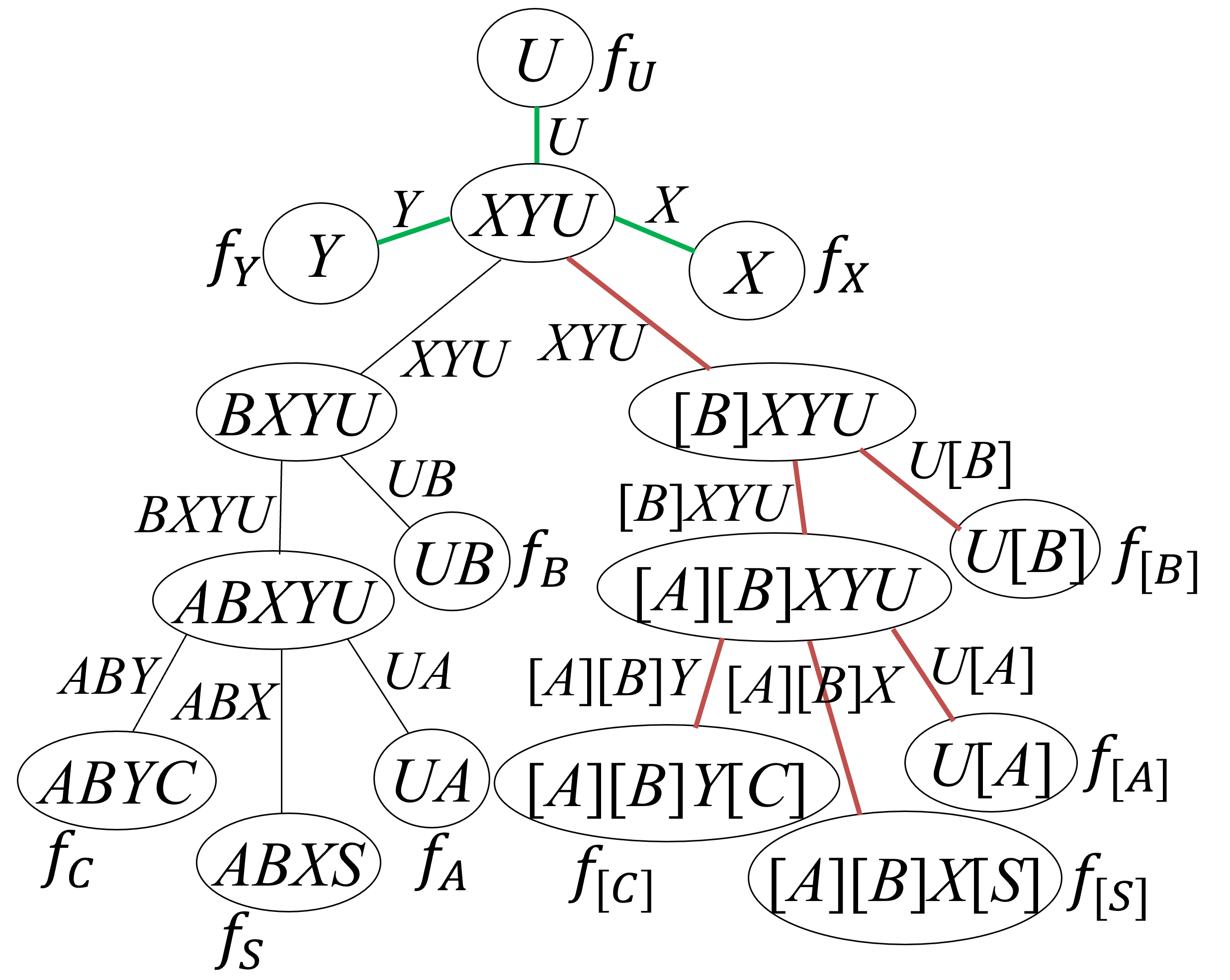}
\caption{twin jointree using Algorithm~\ref{alg:jt}}
\label{fig:twin-jt}
\end{subfigure}
\hfill
\caption{A family \(f\) appears next to a jointree node \(i\)
iff the family is hosted by that node (\(i \in \hosts(f)\)).
\label{fig:alg-ex}}
\end{figure}
We will now provide a similar result for jointrees. That is,
we will show how to convert a jointree \(\JT\)
for a base network \(G\) into a jointree \(\TJT\) for its twin network \(G^t\) while providing a guarantee
on the width/size of the twin jointree in terms of the width/size of 
the base jointree. This may seem like a redundant result given
Corollary~\ref{cor:teo-width} but the provided conversion 
will actually be critical for our later result on bounding the causal treewidth of twin networks. It can also be significantly more efficient 
than constructing a jointree by operating on the (larger) twin network.

Our conversion process operates on a {\em jointree} after 
directing its edges away from some node \(r\), call it
a {\em root.}
This defines a single parent for each jointree node \(i \neq r\), which is the neighbor of \(i\) closest to root \(r\), with all other neighbors of \(i\) being its children. These parent-child relationships are invariant
when running the algorithm. We also use a subroutine for
{\em duplicating the jointree nodes rooted at some node \(i\).}
This subroutine duplicates node \(i\) and
its descendant while also duplicating the edges connecting these nodes. 
If a duplicated node \(j\) hosts a family \(f\), 
this subroutine will make \(\dup{j}\)
host the duplicate family \(\dup{f}\) (so \(j\in\hosts(f)\)
iff \(\dup{j} \in \hosts(\dup{f})\)).

\begin{algorithm}[tb]
\begin{small}
\caption{Jointree to Twin Jointree}
\label{alg:jt}
\begin{algorithmic}[1]
\Procedure{Make-Twin-Jointree}{$\JT,r,p$}
\State \(\Sigma \gets\) leaf nodes at or below node \(r\) 
\If {nodes in \(\Sigma\) only host families for root variables}
\State \Return
\EndIf	
\If {nodes in \(\Sigma\) only host families for internal variables}
\State duplicate the jointree nodes rooted at node $r$
\State add $\dup{r}$ as a child of $p$ 
\Else
\For{each child $k$ of node $r$}
\State $\Call{Make-Twin-Jointree}{\JT,k,r}$
\EndFor
\EndIf	
\EndProcedure
\end{algorithmic}
\end{small}
\end{algorithm}

The conversion process is given in Algorithm~\ref{alg:jt} which
should be called initially with a root \(r\) that does not host a 
family for an internal DAG node and \(p=null\).
The twin jointree in Figure~\ref{fig:twin-jt} was obtained
from the base jointree in Figure~\ref{fig:base-jt} by this algorithm
which simply adds nodes and edges to the base jointree. If an edge \((i,j)\)
in the base jointree is duplicated by Algorithm~\ref{alg:jt}, 
we call \((i,j)\) a {\em duplicated edge} and \((\dup{i},\dup{j})\) 
a {\em duplicate edge.} Otherwise, we call \((i,j)\) an {\em invariant edge.} 
In Figure~\ref{fig:twin-jt}, duplicate edges are shown in red and 
invariant edges are shown in green. We now have the following key result
on these twin jointrees.

\begin{theorem}~\label{thm:twin-sep}
If the input jointree to Alg.~\ref{alg:jt} has separators
$\S$ and the output jointree has separators $\S^t$, then
for duplicated edges $(i,j)$, $\S^t_{ij} = \S_{ij}$;
for duplicate edges $(\dup{i},\dup{j})$, $\S^t_{\dup{i} \dup{j}} = \dup{\S_{ij}}$; and
for invariant edges $(i,j)$, $\S^t_{ij} = \S_{ij} \cup \dup{\S_{ij}}$.
\end{theorem}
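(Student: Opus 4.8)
The plan is to exploit the very explicit structure of the twin jointree produced by Algorithm~\ref{alg:jt}: it is precisely the base jointree $\JT$ with a family of \emph{grafted subtrees} attached. Concretely, the recursion carves the (rooted) base jointree into three kinds of regions --- \emph{pure-root} subtrees (every leaf hosts a root family; these trigger the first \textbf{Return} and are left untouched), \emph{maximal pure-internal} subtrees $T_{r_1}, T_{r_2}, \dots$ (every leaf hosts an internal family; each $T_{r_k}$ is duplicated and its copy $\dup{T_{r_k}}$ is attached to the parent $p_k$ of $r_k$), and \emph{backbone} nodes on which the recursion merely descends. Using only the jointree conventions of this paper (every family is hosted, hosts are leaves, each leaf hosts exactly one family) together with the branching logic of the algorithm, I would first record two facts: (i) every internal family of $G$ is hosted at a leaf lying inside one of the duplicated subtrees $T_{r_k}$; and (ii) every root family $\{U\}$ is hosted at a leaf lying outside all the $T_{r_k}$. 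A corollary of (ii) that will be invoked repeatedly: if a root variable $U$ occurs anywhere inside a pure-internal region $T_{r_k}$ --- which can happen only because $U$ is a parent appearing in some internal family hosted there --- then $U$ is also hosted at a leaf outside all the $T_{r_k}$, hence in particular outside $T_{r_k}$.

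The second step is to compute, for every edge of the twin jointree, the set of variables hosted on each of its two sides, expressed in terms of the base jointree. The guiding observations are: (a) the side of an edge lying ``below'' a base-jointree node $j$, read in the twin jointree, hosts exactly the base variables it hosted in $\JT$, together with the duplicate variables contributed by whatever grafted subtrees $\dup{T_{r_k}}$ sit at or below $j$; and (b) for an \emph{internal} variable $X$, ``$X$ is hosted below $j$ in $\JT$'' is equivalent to ``$\dup{X}$ is hosted below $j$ in $\jtree^t$'', because an internal $X$ is hosted only through internal families and those live inside the $T_{r_k}$, so $\dup{X}$ tracks $X$ exactly. With (a) and (b) in hand, the three cases of Theorem~\ref{thm:twin-sep} become bookkeeping. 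For a duplicated edge $(i,j)$ --- both endpoints inside one $T_{r_k}$, or else the ``stalk'' edge $(p_k, r_k)$ joining $r_k$ to its parent --- the side below $j$ is unchanged, every grafted subtree lies on the other side, and the corollary of (ii) absorbs the root variables those grafts re-expose there, giving $\S^t_{ij} = \S_{ij}$. For the matching duplicate edge $(\dup{i},\dup{j})$ --- including the new stalk $(p_k, \dup{r_k})$, which under the convention $\dup{U}=U$ is exactly the copy of the stalk $(p_k, r_k)$ --- one side hosts precisely the image under the duplication map of the variables hosted in $T_{r_k}$, and intersecting with the other side (again using the corollary of (ii)) yields $\dup{\S_{ij}}$. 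For an invariant edge $(i,j)$ --- both endpoints outside every $T_{r_k}$ --- the side below $j$ additionally hosts the duplicate internal variables of those $T_{r_k}$ that sit below $j$, and the symmetric statement holds above $j$; intersecting and using (b) gives $\S^t_{ij} = \S_{ij} \cup \dup{\S_{ij}}$.

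The one genuinely non-routine point --- and the place I expect to spend the most care --- is the convention $\dup{U}=U$ for root variables: a grafted subtree $\dup{T_{r_k}}$ does \emph{not} host only fresh duplicate variables; it re-hosts every root parent $U$ of the internal families it contains, as the \emph{base} variable $U$. One must therefore verify that these re-hosted root variables never enter a separator where they do not already belong, and this is exactly what the corollary of fact (ii) guarantees: such a $U$ is already hosted at a leaf outside $T_{r_k}$, so it already appears on the ``other'' side in the base jointree and contributes nothing new. A secondary bit of care is the correct classification of the stalk edge $(p_k, r_k)$ as \emph{duplicated} (its copy being $(p_k, \dup{r_k})$) rather than as invariant; pinning this down is what makes the three cases exhaustive and mutually consistent. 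An alternative organisation is a direct induction on the recursion tree of Algorithm~\ref{alg:jt}, treating each subtree returned by a recursive call; this avoids naming the regions $T_{r_k}$ explicitly but carries essentially the same two facts through the induction, so the total work is the same.
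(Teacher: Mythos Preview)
Your approach is correct and essentially the same as the paper's: both compute, for each edge type, the set of variables hosted on each side of the edge in the twin jointree in terms of the base jointree, and then intersect. The paper packages your observation~(b) together with the corollary of fact~(ii) into a single lemma (for every invariant edge, a variable $X$ is hosted on a given side iff $\dup{X}$ is), whereas you work through the explicit regional decomposition; the two delicate points you flag---the convention $\dup{U}=U$ for root parents and the classification of the stalk edge $(p_k,r_k)$ as duplicated---are precisely the places where care is needed.
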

One can verify that the separators in Figure~\ref{fig:alg-ex} satisfy 
these properties. The following result bounds the 
width and size of twin jointrees generated by Algorithm~\ref{alg:jt}.

\begin{corollary}~\label{cor:tjt-width}
Let \(w\) be the width of a jointree for base network $G$ and let
\(n\) be the number of jointree nodes.
Calling Algorithm~\ref{alg:jt} on this jointree will generate
a jointree for twin network \(G^t\) whose width is no 
greater than $2w+1$ and whose number of nodes is no greater than $2n$.
\end{corollary}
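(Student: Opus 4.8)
The plan is to derive Corollary~\ref{cor:tjt-width} directly from Theorem~\ref{thm:twin-sep} together with a structural read of Algorithm~\ref{alg:jt}. First I would establish the size bound. Algorithm~\ref{alg:jt} only ever \emph{adds} nodes to the base jointree, and it adds them exclusively by the ``duplicate the jointree nodes rooted at node $r$'' step. Each invocation of that step on a subtree rooted at $r$ adds exactly one copy of every node at or below $r$. The key observation is that the recursive calls partition the jointree: once a subtree is duplicated, the recursion does not descend into it, and a node is duplicated at most once because the recursion explores disjoint subtrees. Hence the total number of added nodes is at most the number of base nodes $n$, giving at most $2n$ nodes in the twin jointree. (I should double-check the base case where $\Sigma$ hosts only root-variable families — there no duplication happens — and confirm this does not break the disjointness argument; it does not, since that branch simply returns.)

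Next I would bound the width. By Theorem~\ref{thm:twin-sep}, every separator in the output jointree is of one of three forms: $\S_{ij}$ (duplicated edge), $\dup{\S_{ij}}$ (duplicate edge), or $\S_{ij}\cup\dup{\S_{ij}}$ (invariant edge). Since $|\dup{\S_{ij}}| = |\S_{ij}| \le w+1$ and $|\S_{ij}\cup\dup{\S_{ij}}| \le 2|\S_{ij}| \le 2(w+1)$, every separator of the twin jointree has size at most $2(w+1)$. Now recall that for a leaf node the cluster is the hosted family, and for an internal node the cluster is the union of its adjacent separators. I would need to argue the cluster sizes are controlled. For leaf nodes: a leaf of the twin jointree either is a base leaf (hosting a base family $f$, whose size is at most $w+1$ since $\C_i = f$ was a cluster of the base jointree) or is a duplicate leaf $\dup{j}$ hosting $\dup{f}$, which has the same size as $f$, hence at most $w+1 \le 2w+1$. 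For internal nodes: I claim the cluster of an internal node $i$ in the twin jointree is contained in $\C_i \cup \dup{\C_i}$ where $\C_i$ is its base cluster — because each adjacent twin separator is contained in the union of the corresponding base separator and its duplicate, and $\C_i$ is the union of the base separators adjacent to $i$. This yields $|\text{cluster}| \le 2|\C_i| \le 2(w+1)$, so the width is at most $2(w+1)-1 = 2w+1$.

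The main obstacle I anticipate is the bookkeeping for internal nodes that acquire a \emph{new} neighbor during the algorithm — specifically the node $p$, which gains $\dup{r}$ as a child (line ``add $\dup{r}$ as a child of $p$''). For such a $p$, its twin cluster is the union of its original adjacent separators (each contained in $\S\cup\dup{\S}$ form) \emph{plus} the new separator $\S^t_{p\dup{r}}$. I need Theorem~\ref{thm:twin-sep}, or a direct inspection of how the algorithm sets this separator, to confirm $\S^t_{p\dup{r}}$ is itself contained in $\C_p \cup \dup{\C_p}$ — intuitively it should equal $\dup{\S_{pr}}$ (the duplicate of the separator on the edge $p$ lost, or retained alongside, when $r$'s subtree was duplicated), which is contained in $\dup{\C_p}$. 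Once that case is pinned down, the containment $\text{cluster}^t(i) \subseteq \C_i \cup \dup{\C_i}$ holds uniformly and the width bound follows immediately. I would also remark that the tightness of the $2w+1$ bound was already noted for elimination orders in Appendix~\ref{app:tight}, and the jointree construction inherits it.
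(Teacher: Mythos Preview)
Your proposal is correct and follows essentially the same route as the paper: invoke Theorem~\ref{thm:twin-sep} to get $\S^t_{ij} \subseteq \S_{ij}\cup\dup{\S_{ij}}$ for every twin-jointree edge, conclude that each twin cluster satisfies $\C^t_i \subseteq \C_i \cup \dup{\C_i}$ and hence $|\C^t_i|\le 2|\C_i|$, and bound the node count by observing that each base node is duplicated at most once. Your explicit treatment of leaf clusters and of the new edge $(p,\dup{r})$ (correctly resolved as $\S^t_{p\dup{r}}=\dup{\S_{pr}}\subseteq\dup{\C_p}$) is more careful than the paper's own write-up, which simply asserts $\C^t_i=\bigcup_j\S^t_{ij}\subseteq\C_i\cup\dup{\C_i}$ without isolating those cases.
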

The above bound on width is tight as shown in Appendix~\ref{app:tight}.
Since treewidth can be defined in terms of jointree width, the
above result leads to the same guarantee of
Corollary~\ref{cor:twin-treewidth} on the treewidth of
twin networks.
However, the main role of the construction in this 
section is in bounding the causal treewidth of
twin networks. This is discussed next.

\section{The Causal Treewidth of Twin Networks}
\label{sec:ctw}

Recall that causal treewidth is a more refined notion than treewidth
as it uses more information about the network. In particular, this notion is relevant when we know that some variables in the
network are functional, without needing
to know the specific functions (equations) of these variables. By exploiting this information, one can construct thinned jointrees
that have smaller separators and clusters compared to classical jointrees, which can lead to exponential savings in reasoning time~\cite{uai/ChenDarwiche22,causalityAC,DarwicheECAI20b}.
As mentioned earlier, the causal treewidth corresponds to the 
minimum width of any thinned jointree. This is guaranteed to be no greater
than treewidth and can be bounded when treewidth is not~\cite{causalityAC}. We next show that the causal treewidth of a twin network is also at most twice the causal treewidth of its base network plus one. 

\begin{theorem}
\label{thm:thin-sep}
Consider a twin jointree constructed by Algorithm~\ref{alg:jt}
from a base jointree with thinned separators \(\S\). 
The following are valid thinned separators for this twin jointree:
for duplicated edges $(i,j)$, $\S^{t}_{ij} = \S_{ij}$; 
for duplicate edges $(\dup{i},\dup{j})$; $\S^{t}_{\dup{i}\dup{j}} = \dup{\S_{ij}}$; and
for invariant edges $(i,j)$, $\S^{t}_{ij} = \S_{ij} \cup \dup{\S_{ij}}$.
\end{theorem}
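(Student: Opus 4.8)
The plan is to verify that the three claimed families of separators survive the thinning rules, i.e.\ that they can be obtained by applying the two thinning rules (to exhaustion) to the twin jointree. Since Theorem~\ref{thm:twin-sep} already tells us that the \emph{unthinned} separators of the twin jointree produced by Algorithm~\ref{alg:jt} are exactly $\S_{ij}$ on duplicated edges, $\dup{\S_{ij}}$ on duplicate edges, and $\S_{ij}\cup\dup{\S_{ij}}$ on invariant edges, it suffices to show that from these unthinned separators the thinning rules allow us to delete precisely the variables that were deleted by thinning in the \emph{base} jointree (together with their duplicates), arriving at the claimed thinned separators. The key structural fact I would establish first is a correspondence between leaf hosts: by the replication-preserving property of the duplication subroutine ($j\in\hosts(f)\iff\dup{j}\in\hosts(\dup{f})$), and by the fact that Algorithm~\ref{alg:jt} only adds nodes/edges and never deletes them, the hosts of a base family $f_X$ in the twin jointree are the same leaf nodes that hosted $f_X$ in the base jointree, and the hosts of a duplicate family $\dup{f_X}$ are their duplicates. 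Also, root families $f_U$ (with $U$ exogenous) are \emph{not} duplicated, so their host sets are unchanged.

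Next I would handle the two rules separately. For the first thinning rule applied in the base jointree---removing $X$ (necessarily functional) from every separator along the path between two leaf hosts of $f_X$---I want to lift it to the twin jointree. Here I distinguish whether $X$ is a base variable (whose duplicate $\dup{X}$ is also functional in $G^t$) or whether we are dealing with a duplicate variable. If the path between two base hosts of $f_X$ in the base jointree passes only through invariant edges, then in the twin jointree the corresponding path carries separators $\S_{ij}\cup\dup{\S_{ij}}$, and the first rule lets us drop $X$ from all of them; simultaneously the duplicate hosts of $\dup{f_X}$ lie on the duplicated/duplicate copy of that path, letting us drop $\dup{X}$ there. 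The subtle case is when the base path crosses an edge that Algorithm~\ref{alg:jt} duplicates: there the base host path and the duplicate host path physically separate, and I need to argue that each of them still forms a valid "path between two leaf hosts with $X$ (resp.\ $\dup{X}$) in every separator" in the twin jointree, which follows because duplication copies the subtree wholesale, so a host that was a descendant of the duplicated node $r$ becomes a descendant of $\dup{r}$, and the separators on the two half-paths are $\S_{ij}$ and $\dup{\S_{ij}}$ respectively by Theorem~\ref{thm:twin-sep}. For the second thinning rule (removing $X$ from $\S_{ij}$ when $X$ is absent from some other separator incident to $i$, or from some other separator incident to $j$): I would observe that this rule's applicability is a purely local condition on the multiset of separators around a node, and since the twin jointree's local neighborhoods are either (a) an untouched copy of a base neighborhood with $\S\mapsto\S\cup\dup\S$, (b) a pure duplicate neighborhood with $\S\mapsto\dup\S$, or (c) a neighborhood where a duplicate subtree was attached, the condition transfers. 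In case (c) attaching $\dup r$ as a child of $p$ adds a separator equal to $\C_{\dup r}\cap(\text{stuff above})$, and I must check this does not accidentally block a second-rule deletion; but the invariant/duplicated/duplicate classification keeps base and duplicate variables from interfering, so it does not.

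Finally I would assemble these into a fixpoint argument: applying in the twin jointree exactly the base-jointree thinnings on base variables, their mirror images on duplicate variables, and nothing on exogenous variables (which are never thinned and never duplicated), lands us at separators $\S_{ij}$, $\dup{\S_{ij}}$, $\S_{ij}\cup\dup{\S_{ij}}$ on the three edge types respectively, matching the claim; and I would note that whether or not the rules could thin \emph{further} is irrelevant, since the theorem only asserts these are \emph{valid} thinned separators, i.e.\ reachable by the rules. The main obstacle I anticipate is the bookkeeping in the duplicated-edge case of the first rule: showing that splitting a base host-to-host path at a duplicated edge yields, in the twin jointree, two separate valid invocations of rule one (one entirely on base-variable separators $\S_{ij}$, one entirely on duplicate-variable separators $\dup{\S_{ij}}$), and in particular that no host of $f_X$ or $\dup{f_X}$ is "stranded" on the wrong side of the split. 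This requires a careful look at where Algorithm~\ref{alg:jt} chooses to duplicate---namely at a node $r$ all of whose leaf descendants host only internal families---to confirm that a base family and its duplicate never need to be connected across an invariant edge after the split.
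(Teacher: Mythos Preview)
Your approach is essentially the paper's: lift the base thinning sequence step by step to the twin jointree (the paper calls each step a \emph{parallel thinning}) and verify that each lifted step is itself a valid rule application, maintaining the separator correspondence as an invariant. So the high-level strategy is right, and your observation that the theorem only asks for \emph{reachable} thinned separators (not maximally thinned ones) is also correct.

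The gap is in your case analysis for the first thinning rule. Since $X$ is functional it is internal, so $f_X$ is a family for an internal variable and hence every leaf host of $f_X$ lies inside some duplicated subtree. Consequently the base path between two hosts $l,r$ always decomposes as (duplicated edges)--(invariant edges)--(duplicated edges), with the invariant middle possibly empty; your ``path passes only through invariant edges'' case cannot occur. More importantly, in the twin jointree the path from $l$ to $r$ traverses exactly those base edges, while the path from $\dup{l}$ to $\dup{r}$ traverses the duplicate edges on the two ends and the \emph{same} invariant edges in the middle. So your anticipated resolution---``two separate valid invocations of rule one, one entirely on base-variable separators $\S_{ij}$, one entirely on duplicate-variable separators $\dup{\S_{ij}}$''---is wrong: both invocations pass through the shared invariant edges, whose separators are $\S_{ij}\cup\dup{\S_{ij}}$, and that is precisely where both $X$ and $\dup{X}$ must be (and are) removed. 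Getting this path structure right is what makes the Rule~1 lift go through for both duplicated and invariant edges simultaneously.

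A secondary point: the paper phrases the argument as an induction along the sequence $Q_1,\dots,Q_T$ of base thinnings, checking rule applicability in the twin jointree using the inductive hypothesis on the current (partially thinned) separators. Your ``single lift'' description glosses over this; Rule~1's precondition that $X$ appear on \emph{every} separator along the path depends on what has already been thinned, so the sequential invariant is what licenses each step.
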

This theorem shows that a thinned, base jointree can be easily 
converted into a thinned, twin jointree.
This is significant for two reasons. First,
this method avoids the explicit construction of thinned jointrees 
for twin networks which can be quite expensive computationally~\cite{uai/ChenDarwiche22}.
Second, we have the following guarantee on the width of thinned, twin
jointrees constructed by Theorem~\ref{thm:thin-sep}.

\begin{corollary}\label{cor:thinning-1}
Consider the thinned, base and twin jointrees in Theorem~\ref{thm:thin-sep}.
If the thinned, base jointree has width \(w\), then the thinned, twin jointree
has width no greater than \(2w+1\).
\end{corollary}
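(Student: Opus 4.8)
The plan is to derive Corollary~\ref{cor:thinning-1} directly from Theorem~\ref{thm:thin-sep} by examining the clusters of the thinned, twin jointree. Recall that in the jointree framework a cluster of an internal (non-leaf) node is the union of the separators adjacent to it, while a cluster of a leaf node is the family it hosts; the width is the size of the largest cluster minus one. So the first step is to observe that I only need to bound the size of each cluster $\C^t_i$ in the twin jointree by $2w+2$, where $w$ is the width of the thinned base jointree, i.e.\ $|\C_i| \le w+1$ for every cluster in the base jointree.

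First I would handle the leaf nodes. Algorithm~\ref{alg:jt} only adds duplicate leaves $\dup{j}$ that host the duplicate family $\dup{f}$, and it never merges or otherwise alters existing leaves; a leaf hosting family $f$ has cluster $f$ in the base jointree (size $\le w+1$ since thinning does not touch leaf clusters), and a duplicate leaf $\dup{j}$ hosts $\dup{f}$, whose size equals $|f| \le w+1 \le 2w+2$. So leaf clusters are fine. Next I would handle the internal nodes by splitting into the three edge types of Theorem~\ref{thm:thin-sep}. For a node $i$ all of whose incident edges are invariant, every adjacent thinned separator has the form $\S_{ij} \cup \dup{\S_{ij}}$, so $\C^t_i = \bigcup_j (\S_{ij}\cup\dup{\S_{ij}}) = \C_i \cup \dup{\C_i}$, which has size $\le 2(w+1) = 2w+2$. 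For a node $i$ inside a duplicated-and-hence-copied region, its incident edges are all duplicated (on the original copy) giving $\C^t_i = \C_i$, or all duplicate (on the copy $\dup{i}$) giving $\C^t_{\dup i} = \dup{\C_i}$, both of size $\le w+1$. The only case needing a moment's care is a node $i$ incident to a mix of invariant and duplicated edges --- this is the ``boundary'' node where Algorithm~\ref{alg:jt} attached the new copy; there $\C^t_i$ is a union of sets of the form $\S_{ij}$ and $\S_{ij}\cup\dup{\S_{ij}}$, so it is still contained in $\C_i \cup \dup{\C_i}$, of size $\le 2w+2$. In every case $|\C^t_i| \le 2w+2$, hence the twin jointree has width $\le 2w+1$.

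The step I expect to be the main (minor) obstacle is verifying that Theorem~\ref{thm:thin-sep} really does assign, to \emph{every} edge of the output jointree, a separator of one of the three stated forms --- in particular that no edge is left unclassified and that the boundary between the invariant part and a duplicated subtree behaves as claimed (the edge from $p$ to the freshly added $\dup r$ in Algorithm~\ref{alg:jt}, and the edge from $p$ to $r$ itself). Once that bookkeeping is in hand, the cluster bound is just the observation that each $\C^t_i$ is sandwiched between $\C_i$ (or $\dup{\C_i}$) and $\C_i \cup \dup{\C_i}$, so the factor-of-two-plus-one on width is immediate. I would also remark that this recovers, for thinned jointrees, exactly the bound of Corollary~\ref{cor:tjt-width}, and that combined with the definition of causal treewidth as the minimum thinned-jointree width it yields the analogue of Corollary~\ref{cor:twin-treewidth} for causal treewidth.
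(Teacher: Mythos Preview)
Your proposal is correct and follows essentially the same approach as the paper. The paper does not give a separate proof of Corollary~\ref{cor:thinning-1}; it is meant to follow from Theorem~\ref{thm:thin-sep} exactly as Corollary~\ref{cor:tjt-width} follows from Theorem~\ref{thm:twin-sep}, namely by observing that in every case \(\S^{t}_{ij}\subseteq \S_{ij}\cup\dup{\S_{ij}}\) and hence \(\C^{t}_i\subseteq \C_i\cup\dup{\C_i}\). Your argument is the same in spirit but more careful: you treat leaf clusters, duplicate nodes \(\dup{i}\), and the boundary node \(p\) (which acquires the extra neighbor \(\dup{r}\)) explicitly, whereas the paper's proof of Corollary~\ref{cor:tjt-width} writes only ``let \(i\) be a node in jointree \(T\)'' and takes \(\C^t_i=\bigcup_j \S^t_{ij}\), tacitly leaving those cases to the reader. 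Your concern about whether every edge of the output jointree is covered by the three-way classification is the right thing to check; once you note that the edge \((p,r)\) counts as duplicated and \((p,\dup{r})\) as its duplicate (with \(\dup{p}=p\)), the bookkeeping closes and the bound \(|\C^t_i|\le 2(w+1)\) follows.
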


Due to space constraints,
we include a thinned jointree for the base network
and the corresponding thinned, twin jointree constructed by
Algorithm~\ref{alg:jt} and Theorem~\ref{thm:thin-sep} in 
Appendix~\ref{app:fig}.
We can now bound the causal treewidth of twin networks.

\begin{corollary}~\label{cor:causal-tw}
If \(w\) and \(w^t\) are the causal treewidths of a base network
and its twin network, then $w^t \leq 2w+1$.
\end{corollary}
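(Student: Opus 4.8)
The plan is to derive Corollary~\ref{cor:causal-tw} as an immediate consequence of Corollary~\ref{cor:thinning-1} together with the definition of causal treewidth. Recall that the causal treewidth of a network is the minimum width over all thinned jointrees (subject to the polynomial size bound on the number of jointree nodes). So to prove $w^t \leq 2w+1$ it suffices to exhibit, for any optimal thinned jointree of the base network $G$, a thinned jointree for the twin network $G^t$ of width at most $2w+1$ and of admissible (polynomial) size.

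The key steps, in order, would be as follows. First, let $\JT$ be a thinned jointree for $G$ achieving the causal treewidth $w$, with thinned separators $\S$; pick a root $r$ that does not host a family for an internal DAG node, as required by Algorithm~\ref{alg:jt} (such a root exists since one can always attach/designate a leaf hosting a root-variable family, and the exogenous variables of an SCM are non-empty, or more carefully, one invokes the same rooting convention used for Theorem~\ref{thm:twin-sep}). Second, run Algorithm~\ref{alg:jt} to obtain a twin jointree $\TJT$, and apply Theorem~\ref{thm:thin-sep} to equip it with the stated thinned separators $\S^t$ (duplicated edges keep $\S_{ij}$, duplicate edges get $\dup{\S_{ij}}$, invariant edges get $\S_{ij}\cup\dup{\S_{ij}}$); Theorem~\ref{thm:thin-sep} guarantees these are \emph{valid} thinned separators, i.e., obtainable by the thinning rules. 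Third, invoke Corollary~\ref{cor:thinning-1} to conclude that the width of this thinned, twin jointree is at most $2w+1$. Fourth, check the size constraint: by Corollary~\ref{cor:tjt-width} the twin jointree has at most $2n$ nodes, so if $n$ is polynomial in the size of $G$ then $2n$ is polynomial in the size of $G^t$ (which has at least as many nodes as $G$), meaning $\TJT$ is an admissible thinned jointree for the definition of causal treewidth. Therefore $w^t$, being the minimum width over all such thinned jointrees, is at most $2w+1$.

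I do not expect a serious obstacle here, since all the heavy lifting is done by Theorem~\ref{thm:thin-sep} and Corollary~\ref{cor:thinning-1}; the proof is essentially a bookkeeping argument that the constructed object qualifies as a competitor in the minimization defining $w^t$. The one point that needs a little care is the polynomial-size side condition built into the definition of causal treewidth (footnote~\ref{foot:replication} and the replication discussion): I would state explicitly that Algorithm~\ref{alg:jt} at most doubles the node count (Corollary~\ref{cor:tjt-width}), so the polynomial bound is preserved under the conversion, and hence the minimization defining $w^t$ ranges over a set that includes our constructed jointree. A second minor subtlety is ensuring the rooting requirement of Algorithm~\ref{alg:jt} can always be met without increasing width; this is the same caveat already used for the non-thinned twin jointree construction, so I would just reuse it. With these two remarks in place, the chain $w \;\xrightarrow{\text{Alg.}~\ref{alg:jt},\,\text{Thm.}~\ref{thm:thin-sep}}\; \text{thinned twin jointree} \;\xrightarrow{\text{Cor.}~\ref{cor:thinning-1}}\; \text{width}\leq 2w+1 \;\Rightarrow\; w^t\leq 2w+1$ completes the argument.
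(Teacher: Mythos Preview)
Your proposal is correct and matches the paper's intended argument: the corollary is immediate from Corollary~\ref{cor:thinning-1} (which bounds the width of the constructed thinned twin jointree by $2w+1$) together with the definition of causal treewidth as a minimum over admissible thinned jointrees, and you have correctly identified the polynomial-size side condition (handled via the node-doubling bound of Corollary~\ref{cor:tjt-width}) and the rooting convention for Algorithm~\ref{alg:jt} as the only bookkeeping points to mention. The paper does not spell out a separate proof for this corollary, treating it as an immediate consequence of the preceding results, so your write-up is if anything more explicit than the original.
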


\section{Counterfactual Reasoning Beyond Two Worlds}
\label{sec:n-worlds}

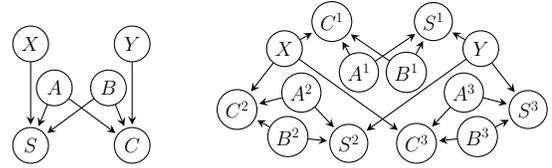
\begin{figure}
\centering
\centering
\begin{tikzpicture}[->,>=stealth,shorten >=1pt,auto,scale=0.45,thin,font=\huge,transform shape]
\node[state,thin,font=\huge] (X) at (-1.5,0) {$X$};
\node[state,thin,font=\huge] (Y) at (1.5,0) {$Y$};
\node[state,thin,font=\huge] (A) at (-0.8,-1.3) {$A$};
\node[state,thin,font=\huge] (B) at (0.8,-1.3) {$B$};
\node[state,thin,font=\huge] (S) at (-1.5,-3) {$S$};
\node[state,thin,font=\huge] (C) at (1.5,-3) {$C$};
\path (A) edge (S);
\path (B) edge (S);
\path (X) edge (S);
\path (A) edge (C);
\path (B) edge (C);
\path (Y) edge (C);
\end{tikzpicture}
\quad\quad
\begin{tikzpicture}[->,>=stealth,shorten >=1pt,auto,scale=0.45,transform shape]
\node[state,thin,font=\huge] (Y) at (-2.9,0) {$X$};
\node[state,thin,font=\huge] (X) at (2.9,0) {$Y$};

\node[state,thin,font=\huge] (A1) at (-0.7,-0.7) {$\ndup{A}{1}$};
\node[state,thin,font=\huge] (B1) at (0.7,-0.7) {$\ndup{B}{1}$};
\node[state,thin,font=\huge] (S1) at (1.5,0.8) {$\ndup{S}{1}$};
\node[state,thin,font=\huge] (C1) at (-1.5,0.8) {$\ndup{C}{1}$};

\node[state,thin,font=\huge] (A2) at (-2.4,-1.3) {$\ndup{A}{2}$};
\node[state,thin,font=\huge] (B2) at (-2.8,-2.6) {$\ndup{B}{2}$};
\node[state,thin,font=\huge] (S2) at (-1,-2.8) {$\ndup{S}{2}$};
\node[state,thin,font=\huge] (C2) at (-4.3,-1.8) {$\ndup{C}{2}$};

\node[state,thin,font=\huge] (A3) at (2.4,-1.3) {$\ndup{A}{3}$};
\node[state,thin,font=\huge] (B3) at (2.8,-2.6) {$\ndup{B}{3}$};
\node[state,thin,font=\huge] (S3) at (4.3,-1.8) {$\ndup{S}{3}$};
\node[state,thin,font=\huge] (C3) at (1,-2.8) {$\ndup{C}{3}$};

\path (A1) edge (S1);
\path (B1) edge (S1);
\path (X) edge (S1);
\path (A1) edge (C1);
\path (B1) edge (C1);
\path (Y) edge (C1);

\path (A2) edge (S2);
\path (B2) edge (S2);
\path (X) edge (S2);
\path (A2) edge (C2);
\path (B2) edge (C2);
\path (Y) edge (C2);

\path (A3) edge (S3);
\path (B3) edge (S3);
\path (X) edge (S3);
\path (A3) edge (C3);
\path (B3) edge (C3);
\path (Y) edge (C3);

\end{tikzpicture}
\caption{A base network and its 3-world network.
\label{fig:half-adder-bn2}
\label{fig:half-adder-3tn}}
\end{figure}

Standard counterfactual reasoning contemplates two
worlds, one real and another imaginary, while assuming
that exogenous variables
correspond to causal mechanisms that govern both worlds. 
This motivates the notion of a twin network as it ensures 
that these causal mechanisms are invariant. 
We can think
of counterfactual reasoning as a kind of {\em temporal reasoning}
where endogenous variables can change their states over time. 
A more general setup arises when we allow some exogenous variables to change their states over time.
For example, consider again the half
adder in Figure~\ref{fig:half-adder} and its base network
in Figure~\ref{fig:half-adder-bn2}. 
Suppose we set inputs $A$ and $B$ to high and low
and observe outputs $S$ and $C$ to be high and low,
which is a normal behavior.
We then set both inputs to low and observe
that the outputs do not change, which is an abnormal behavior.
We then aim to predict the state of outputs if we were to set both
inputs to high. This scenario involves three time
steps (worlds). Moreover, while the health
of gates \(X\) and \(Y\) are invariant over time, we
do not wish to make the same assumption about the inputs
\(A\) and \(B\). 
We can model this situation using the
network in Figure~\ref{fig:half-adder-3tn}, which is
a more general
type of networks that we call 
\(N\)-world networks.

\begin{definition}\label{def:n-twin}
Consider a base network $G$ and let $\R$ be a subset of its
roots and \(N \geq 1\) be an integer. The \underline{$N$-world network} $G^N$ of \(G\) 
is constructed as follows.
For each variable $X$ in $G$ that is not in $\R$, replace it with N duplicates of $X$, labeled $\ndup{X}{1},\ndup{X}{2},\dots,\ndup{X}{N}$. For each parent $P$ of $X$, if $P$ is in $\R$, make $P$ a parent of $\ndup{X}{i}$ for all $i \in 1,2,\dots,N$. Otherwise, make $\ndup{P}{i}$ a parent of $\ndup{X}{i}$ for all $i \in 1,2,\dots,N$.
\end{definition}

This definition corresponds to the notion of a
{\em parallel worlds model}~\cite{ijcai/AvinSP05} 
when \(\R\) contains all roots
in the base network. Moreover, twin networks fall as a special 
case when \(N=2\) and \(\R\) contains all 
roots of the base network. 
We next bound the (causal) treewidth of \(N\)-world networks
by the (causal) treewidth of their base networks.

\begin{theorem}~\label{thm:ntwin-tw}
If \(w\) and \(w^t\) are the (causal) treewidths of a base network and 
its \(N\)-world network, then \(w^t \leq N(w+1)-1\).
\end{theorem}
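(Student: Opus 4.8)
The plan is to generalize the two-world constructions of Sections~\ref{sec:tw}~and~\ref{sec:ctw} to $N$ worlds, treating the treewidth and causal treewidth cases uniformly through the jointree/thinned-jointree machinery. First I would define an \emph{$N$-world elimination order} $\pi^N$ analogously to Definition~\ref{def:teo}: replace each non-root variable $X$ in $\pi$ by the block $\ndup{X}{1},\ndup{X}{2},\dots,\ndup{X}{N}$ (roots in $\R$ are left in place). I then claim the $N$-world analogue of Theorem~\ref{thm:twin-cls}: eliminating $\ndup{X}{k}$ from $G^N$ under $\pi^N$ induces a cluster $\cls^N(\ndup{X}{k}) \subseteq \bigcup_{i=1}^{N}\ndup{\cls(X)}{i}$, where $\cls(X)$ is the cluster of $X$ under $\pi$ in the base network. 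The proof is by the same induction on the elimination sequence: when $\ndup{X}{k}$ is eliminated, its neighbors in the (partially eliminated) moral graph of $G^N$ are duplicates $\ndup{Y}{i}$ of variables $Y$ that were neighbors of $X$ at the moment $X$ was eliminated in $G$ (plus, for roots $R\in\R$, the shared copy $R$, which we identify with $\ndup{R}{i}$ for every $i$ exactly as the excerpt identifies $\dup{U}=U$). The key point is that fill-in edges in $G^N$ only ever connect same-world duplicates $\ndup{\cdot}{i}$ to each other or to shared roots, because the $N$ worlds are disjoint except through $\R$, and within world $i$ the elimination mimics the base elimination. Since $|\bigcup_{i=1}^N \ndup{\cls(X)}{i}| \le N|\cls(X)|$, the width bound $w^N \le N(w+1)-1$ follows immediately, giving the treewidth half of the theorem.

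For the causal treewidth half I would mirror Section~\ref{sec:ctw}: generalize Algorithm~\ref{alg:jt} to make $N-1$ further copies (instead of one) of each maximal subtree all of whose hosted families are for non-$\R$ variables, hanging copies $\ndup{r}{2},\dots,\ndup{r}{N}$ off the parent $p$, and recursing otherwise. The $N$-world analogue of Theorems~\ref{thm:twin-sep} and~\ref{thm:thin-sep} then reads: for a duplicated edge $(i,j)$ the separator becomes $\bigcup_{i'=1}^{N}\ndup{\S_{ij}}{i'}$ on the ``stem'' side and $\ndup{\S_{ij}}{k}$ on each copy $k$, while invariant edges get $\bigcup_{i'=1}^{N}\ndup{\S_{ij}}{i'}$ (with shared-root variables in $\R$ not duplicated). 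The same proof structure works — invariant separators lie on paths that only pass through $\R$-hosting regions, hence their variables are either in $\R$ or appear in every world's copy; duplicated/copy separators are faithful relabelings of the base separators, so the thinning rules (which are purely structural and depend only on host locations and which separators on a leaf-to-leaf path contain a variable) transfer world by world. Each cluster of $G^N$ is then contained in a union of $N$ relabeled copies of a base cluster, yielding width $\le N(w+1)-1$ for both the classical and thinned constructions, and the jointree-size blowup is a factor $\le N$, keeping the polynomial-size requirement in the definition of causal treewidth satisfied. Taking minima over (thinned) jointrees gives $w^t \le N(w+1)-1$ in both readings of ``(causal) treewidth''.

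The main obstacle I anticipate is not the width arithmetic but verifying that the separators produced by the generalized Algorithm~\ref{alg:jt} actually constitute a \emph{valid} (thinned) jointree for $G^N$ — i.e. that the jointree property (running intersection / each duplicated family hosted, connectedness of the set of nodes containing each variable) is preserved when we attach $N-1$ copies of a subtree at a single parent $p$ rather than one. In the two-world case this is Theorem~\ref{thm:twin-sep}'s implicit content; for general $N$ the shared-root variables in $\R$ now appear in the stem region and in all $N$ copies simultaneously, so I must check their occurrence set remains connected through $p$. I expect this to go through because $p$ (and the stem above it) hosts only $\R$-related and cross-world-shared structure by the algorithm's branching condition, so every world-$k$ copy attaches to a common boundary — but pinning down the invariant precisely (``every separator strictly above the first branch point hosting a non-$\R$ family contains only variables in $\R$'') is the delicate step, and it is what makes the disjoint-union-through-$\R$ picture rigorous. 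A secondary, minor check is the boundary behavior when $N=1$ (the bound reads $w^t\le w$, and $G^1=G$, so it must be an equality-friendly statement) and when $\R$ is empty versus all roots, both of which the construction handles automatically.
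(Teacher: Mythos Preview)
Your overall plan matches the paper's proof exactly: define an $N$-world elimination order, prove the cluster-containment $\cls^N(\ndup{X}{k})\subseteq\bigcup_{i=1}^N\ndup{\cls(X)}{i}$ by induction on the elimination step (the paper does this via two lemmas that generalize Lemma~\ref{lem:neighbor-set} and Theorem~\ref{thm:twin-cls}), and for causal treewidth extend Algorithm~\ref{alg:jt} to make $N-1$ copies with the obvious separator formulas. The paper's write-up of the causal-treewidth half is in fact even more terse than yours (``by analogous arguments''), so your sketch is adequate on that front.

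There is, however, a real gap in your justification of the cluster-containment. You write that ``fill-in edges in $G^N$ only ever connect same-world duplicates $\ndup{X}{i}$ to each other or to shared roots, because the $N$ worlds are disjoint except through $\R$.'' This is false already in the moral graph, before any elimination: if $X\notin\R$ has a parent $R\in\R$, then $R$ is a common parent of $\ndup{X}{1},\dots,\ndup{X}{N}$ in $G^N$, so moralization makes all of $\ndup{X}{1},\dots,\ndup{X}{N}$ pairwise adjacent. Eliminating $\ndup{X}{1}$ then connects $\ndup{X}{2},\dots,\ndup{X}{N}$ to the world-$1$ neighbors of $\ndup{X}{1}$, creating cross-world fill-in between non-root variables. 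The worlds are disjoint in the DAG $G^N$ but not in its moral graph or in the subsequent elimination graphs.

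Fortunately the lemma you stated is still correct; only your proof of it needs replacing. The paper's induction does not try to keep the worlds separate. It maintains directly the invariant $G^N_i(\ndup{X}{j})\subseteq\bigcup_{k=1}^N\ndup{G_i(X)}{k}$ for every $i,j$: if this holds before eliminating $\{\ndup{Y}{k}\}_{k=1}^N$, then any new neighbor $\ndup{X}{j}$ acquires lies in $\bigcup_k G^N_{i-1}(\ndup{Y}{k})\subseteq\bigcup_k\ndup{G_{i-1}(Y)}{k}$, and since $Y\in G_{i-1}(X)$ we have $G_{i-1}(Y)\subseteq G_i(X)\cup\{Y\}$, which gives the inclusion at step $i$. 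Replace your ``same-world'' sentence with this argument and the treewidth half goes through. (A small terminological cleanup: in your separator description for the generalized Algorithm~\ref{alg:jt}, what you call ``the stem side'' of a duplicated edge is really the \emph{invariant} edge case; each of the $N$ copied edges---including the original---carries $\ndup{\S_{ij}}{k}$, and only invariant edges carry $\bigcup_{k}\ndup{\S_{ij}}{k}$.)
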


The class of \(N\)-world networks
is a subclass
of {\em dynamic Bayesian networks}~\cite{Dean} 
and is significant for a number of reasons. 
First, as illustrated above,
it arises when reasoning about the behavior of systems
consisting of function blocks (e.g., gates)~\cite{MBD-book}.
These kinds of physical systems can be easily modeled
using fully specified SCMs, where the structural equations
correspond to component behaviors and the distributions over exogenous variables
correspond to component reliabilities; see~\cite[Ch~5]{DarwicheBook09}
for a textbook discussion and~\cite{tsmc/MengshoelCCPDU10} for a case study of
a real-world electrical power system. More broadly, \(N\)-world networks allow counterfactual
reasoning that involves conflicting observations and actions that 
arise in multiple worlds as in the {\em unit selection problem}~\cite{corr/LiP22a}---for example,~\cite{Huang-Unit} used Theorem~\ref{thm:ntwin-tw} to obtain bounds on the complexity of this problem.
See also~\cite{ijcai/AvinSP05,uai/ShpitserP07,jmlr/ShpitserP08} 
for further applications of \(N\)-world 
networks in the context of counterfactual reasoning. 
Appendix~\ref{app:gn-world-net}
shows that the treewidth bound of
Theorem~\ref{thm:ntwin-tw} holds for a generalization of $N$-world networks that permits the duplication of only a subset of base nodes and allows certain edges that extend between worlds.

Our complexity bounds thus far apply to any counterfactual query. 
For a specific counterfactual query,
we can further reduce the complexity of inference
by pruning nodes and edges as in~\cite[Ch~6]{DarwicheBook09}
and merging nodes which leads to {\em counterfactual graphs} as in~\cite{uai/ShpitserP07}. 

\section{Counterfactual Reasoning with Partially Specified SCMs}
\label{sec:pscms}

The results we presented on \(N\)-world networks, which
include twin networks, apply directly to fully specified SCMs. In particular,
in the context of variable elimination and jointree algorithms, these results
allow us to bound the complexity of computing counterfactual queries  
in terms of the complexity of computing associational/interventional
queries. Moreover, they provide efficient methods for constructing 
elimination orders and jointrees that can be used for computing counterfactual queries 
based on the ones used for answering associational/interventional queries, 
while ensuring that the stated bounds will be realized. Recall again that our
bounds and constructions apply to both traditional treewidth 
and the more 
recent causal treewidth.

Causal reasoning can also be conducted on
partially specified SCMs and data, which is a more common and challenging
task. 
A partially specified SCM typically includes the
SCM structure and some information about its 
parameters (i.e., its structural equations and the distributions 
over its exogenous variables). 
For example, we may not know any of the SCM paramaters, 
or we may know the structural equations but not the
distributions over exogenous variables as assumed in~\cite{nips/Zaffalon}.
A central question in this setup is whether the available
information, which includes data, is sufficient to obtain a point estimate for the causal
query of interest, in which case the query is said to be identifiable. A significant
amount of work has focused on characterizing conditions under
which causal queries (both counterfactual and interventional) are identifiable; see,
\cite{pearl00b,SpirtesBook} for textbook discussions of this subject and
\cite{jmlr/ShpitserP08,nips/CorreaLB21} for some results on the 
identification of counterfactual queries.

When a query is identifiable, the classical approach for estimating it
is to derive an estimand using techniques such as the do-calculus
for interventional queries~\cite{do-calculus-95,aaai/TianP02,aaai/ShpitserP06}.\footnote{See \cite{aaai/Jung0B21,icml/Jung0B21,nips/Jung0B20} for
some recent work on estimating identifiable interventional queries from finite data.} 
Some recent approaches take a different direction by first estimating 
the SCM parameters to yield a fully specified SCM that is then used to answer (identifiable)
interventional and counterfactual queries using classical inference 
algorithms~\cite{bias/Zaffalon,nips/Zaffalon,causalityAC}.
Our results on twin and \(N\)-world networks apply directly
in this case as they can be used when conducting inference on the fully 
parameterized SCM.
For unidentifiable queries, the classical approach is
to derive a closed-form bound on the query; see, for example,~\cite{Balke1994,synthese/Pearl99,amai/TianP00,dawid2017,Rosset2018,Evans2018,Zhang2021,ijcai/MuellerLP22}. 
Some recent approaches take a different direction for establishing
bounds, such as reducing the problem into one of polynomial programming~\cite{corr/discrete,icml/Zhang0B22}
or inference on credal networks~\cite{Zaffalon0C20,ai/Cozman00,MAUA2020133}.
Another recent direction is to establish (approximate) bounds by
estimating SCM parameters and then using classical inference 
algorithms on the fully specified 
SCM to obtain point estimates~\cite{nips/Zaffalon,bias/Zaffalon}. 
Since the query is not identifiable, different parametrizations can lead
to different point estimates which are employed to improve (widen) 
the computed bounds. 
Our results can also be used in this case for computing
point estimates based on a particular 
parametrization (fully specified SCM) within the overall process of
establishing bounds. 

\section{Experimental Results}
\label{sec:expS}
We consider experiments that target random
networks whose structures emulate the structures of SCMs used in 
counterfactual reasoning. We have a few objectives in mind. 
First, we wish to compare the widths of base and twin jointrees,
with and without thinning. These widths do not correspond
to (causal) treewidth since the jointrees are constructed using
heuristics (finding optimal jointrees is NP-hard).
Next, we want to compare the quality of twin jointrees constructed
by Algorithm~\ref{alg:jt} (TWIN-ALG1), which operates directly on a base jointree, to the quality of twin jointrees obtained by applying the minfill heuristic to a twin network (TWIN-MF). Recall that the former
method is more efficient than the latter method.
Finally, we wish to conduct a similar comparison between the thinned, twin 
jointrees constructed according to Theorem~\ref{thm:thin-sep} (TWIN-THM3) 
and the thinned, twin jointrees obtained by applying the minfill
heuristic and thinning rules to a twin network (TWIN-MF-RLS).
Again, the former method is more efficient than the latter.
The widths of these jointrees will be compared to the widths
of base jointrees constructed by minfill (BASE-MF)
and thinned, base jointrees constructed by minfill and
thinning rules (BASE-MF-RLS).

We generated random networks according to the method used
in~\cite{DarwicheECAI20b}. Given a number of nodes \(n\)
and a maximum number of parents \(p\), the method chooses
the parents of node \(X_i\) randomly from the set \(X_1,\ldots,X_{i-1}\).
The number of parents for node \(X_i\) is chosen randomly from the set
\(0,\ldots,min(p,i-1)\). We refer to these networks as rNET.
We then consider each internal node \(N\) and add a unique 
root \(R\) as parent for \(N\). This is meant
to emulate the structure of SCMs as the exogenous variable \(R\)
can be viewed as representing the different causal mechanisms for
endogenous variable \(N\). We refer to these modified networks
as rSCM. The twin networks of rSCM are more complex than 
those for rNET since more variables are shared between the two
slices representing the real and imaginary worlds (i.e., more 
information is shared between the two worlds).
We used 
$n \in \{50,75,100,125,150,200,250,300\}$ and 
$p \in \{3,5,7\}$. For each combination of \(n\) and \(p\),
we generated \(50\) random, base networks and reported averages
of two properties for the constructed jointrees: 
width and {\em normalized width.} If a jointree
has clusters \(\C_1, \ldots, \C_n\), then normalized width is
\(\log_2 \sum_{i=1}^n 2^{|\C_i|}\).
This accounts for all clusters in the jointree (instead of just 
the largest one)
and the jointree size.
The data we generated occupies significant
space so we included it in Appendix~\ref{app:exp} while providing
representative plots in Figure~\ref{fig:exp1} for jointree widths 
under \(p=5\).
We next discuss patterns exhibited in these plots and 
the full data in Appendix~\ref{app:exp}, which also includes experiments
using random networks generated according to the method in~\cite{sbia/IdeC02}. 

First, the widths of twin jointrees are always less than twice
the widths of their base jointrees and often significantly 
less than that. This is not guaranteed by our theoretical
bounds as those apply to (causal) treewidth not to the widths
of jointrees produced by heuristics --- the latter widths are an
upper bound on the former. Second, constructing a twin jointree by directly applying Algorithm~\ref{alg:jt} to a base jointree (TWIN-ALG1) 
is relatively comparable to constructing the 
twin jointree by operating on the twin network (TWIN-MF),
as would normally be done. 
This also holds for thinned jointrees (TWIN-THM3 vs TWIN-MF-RLS)
and is encouraging since the former methods are much more 
efficient than the latter ones.  Third, the employment of thinned jointrees can lead to significant
reduction in width and hence an exponential reduction in reasoning
time. This can be seen by comparing the widths of 
twin jointrees TWIN-THM3 and TWIN-ALG1 since the former is thinned
but the latter is not (similarly for TWIN-MF-RLS and TWIN-MF). Fourth, the twin jointrees of rSCM have larger widths than
those of rNET. Recall that in rSCM, every endogenous variable
has its own exogenous variable as a parent.
Therefore, the distribution over exogenous variables
has a larger space in rSCM compared to rNET.
Since this distribution needs to be shared between the real
and imaginary worlds, counterfactual reasoning with rSCM 
is indeed expected to be more complex computationally than
reasoning with rNET. Finally, consider Figure~\ref{fig:rand-scm-thinned-jt}
for a bottom-line comparison between the complexity 
of counterfactual reasoning and the complexity of
associational/interventional reasoning in practice. 
Jointrees BASE-MF have the smallest widths for base networks
so these are the jointrees one would use for 
associational/interventional reasoning.
The best twin jointrees are TWIN-MF-RLS which are thinned.
This is what one would use for counterfactual reasoning.
The widths of latter jointrees are always less than twice
the widths of the former, and quite often significantly 
much less.\footnote{See footnote~\ref{foot:replication}
for why BASE-MF is better than BASE-MF-RLS for rSCM.}
\begin{figure}[tb]
\begin{subfigure}[b]{0.47\columnwidth}
\centering
\includegraphics[width=.9\linewidth]{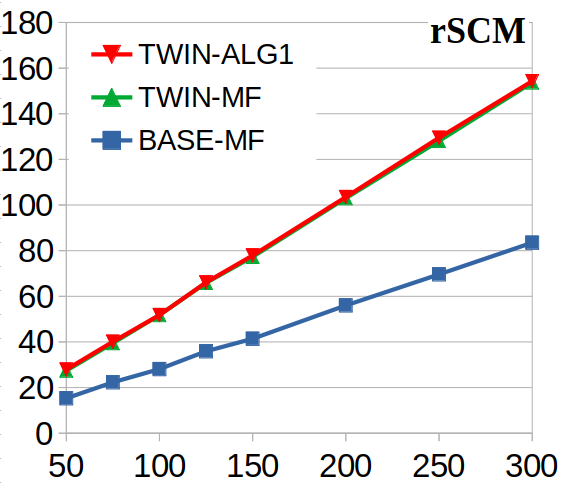}
\caption{classical jointrees
\label{fig:rand-scm-jt}}
\end{subfigure}
\hfill
\begin{subfigure}[b]{0.47\columnwidth}
\centering
\includegraphics[width=.9\linewidth]{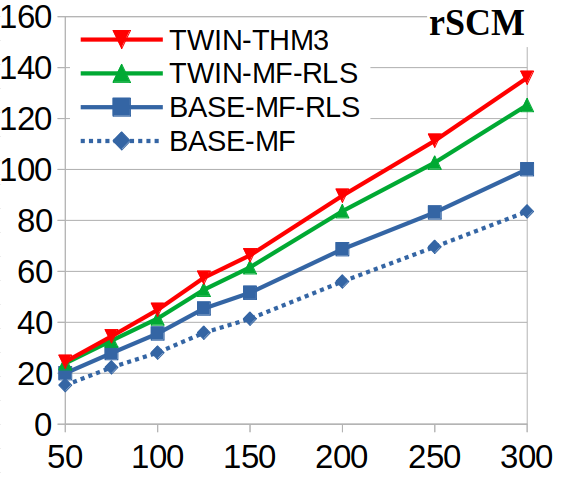}
\caption{thinned jointrees
\label{fig:rand-scm-thinned-jt}}
\end{subfigure}
\hfill
\caption{Width of jointrees (y-axis) against number of base 
network nodes (x-axis) for maximum number of parents \(p=5\).
\label{fig:exp1}
\label{fig:exp2}
}
\end{figure}
\section{Conclusion}
We studied the complexity of counterfactual reasoning
on fully specified SCMs in relation to the complexity of
associational and interventional reasoning on these models. 
Our basic finding is that in the context of algorithms
based on (causal) treewidth, the former complexity 
is no greater than quadratic in the latter when counterfactual
reasoning involves only two worlds.
We extended these results to counterfactual reasoning that requires
multiple worlds, showing that the gap in complexity is bounded
polynomially by the number of needed worlds.
Our empirical results suggest that for two types of random SCMs,
the complexity of counterfactual reasoning is closer
to that of associational and interventional reasoning 
than our worst-case theoretical analysis may suggest.
While our results directly target counterfactual reasoning
on fully specified SCMs, we also discussed cases when they
can be applied to counterfactual reasoning on partially 
specified SCMs that are coupled with data.

\appendix

\section*{Acknowledgements}
We wish to thank Haiying Huang, Scott Mueller, Judea Pearl, Ilya Shpitser, Jin Tian and Marco
Zaffalon for providing useful feedback on an earlier version of this paper. This work has been partially
supported by ONR grant N000142212501.
\bibliographystyle{named}
\bibliography{ijcai23}
\newpage

\onecolumn
\appendix
\section{Proofs} 
\label{app:proof}

\subsection*{Proof of Theorem 1}
Consider a base network $G$ and its twin network $G^t$. We first introduce a set notation $\{X,\dup{X}\}$, which contains both the base and duplicate variable for $X$ if $X$ is an internal variable and collapses to a single variable if $X$ is a root variable. 

For an elimination order $\pi$ on a base network $G$, \cite{DarwicheBook09} defines a \emph{graph sequence} $G_1,G_2,\dots,G_n$ induced by $\pi$, where $G_1$ is the moral graph for $G$, and $G_{i+1}$ is the result of eliminating $\pi(i)$ from $G_i$. Similarly, we define a \emph{twin graph sequence} $G^t_1,G^t_2,\dots,G^t_n$, where $G^t_1$ is the moral graph for $G^t$, and $G^t_{i+1}$ is the result of eliminating $\{\pi(i),\dup{\pi(i)}\}$ from $G^t_i$.

Consider $G_i$ in the graph sequence induced by $\pi$ on $G$. For each variable $X$ in $G_i$, let $G_i(X)$ be the set consisting of $X$ and its neighbors in $G_i$. Similarly, let $G^t_i(X)$ and $G^t_i(\dup{X})$ denote the set consisting of $X$ and its neighbors, and the set consisting of $\dup{X}$ and its neighbors in $G^t_i$, respectively. By definition, $G_i(\pi(i)) = \cls(\pi(i))$ and $G^t_i(\pi(i)) = \cls^t(\pi(i))$.

We first propose a Lemma that relates $G^t_i(X)$ and $G_i(X)$.

\begin{lemma}
\label{lem:neighbor-set}
Suppose we apply elimination order $\pi$ to a base network $G$ and apply $\pi^t$ to its twin network $G^t$. Then for each variable $X$ in $G_i$, we have $G^t_i(X) \subseteq G_i(X) \cup \dup{G_i(X)}$ and $G^t_i(\dup{X}) \subseteq G_i(X) \cup \dup{G_i(X)}$.
\end{lemma}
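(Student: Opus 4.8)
The plan is an induction on $i$, the index in the two graph sequences, establishing simultaneously both containments $G^t_i(X) \subseteq G_i(X) \cup \dup{G_i(X)}$ and $G^t_i(\dup{X}) \subseteq G_i(X) \cup \dup{G_i(X)}$ for every surviving variable. Throughout I use the standard neighbor-update rule for elimination: when a variable $V$ is eliminated from a graph $H$, a surviving variable $X$ adjacent to $V$ acquires neighbor set $(H(X) \cup H(V)) \setminus \{V\}$, while a variable not adjacent to $V$ keeps its neighbor set. On the twin side one passes from $G^t_i$ to $G^t_{i+1}$ by applying this rule to $\pi(i)$ and then to $\dup{\pi(i)}$ in succession (a single application when $\pi(i)$ is a root, since then $\dup{\pi(i)} = \pi(i)$), which gives the crude but sufficient bound $G^t_{i+1}(X) \subseteq (G^t_i(X) \cup G^t_i(\pi(i)) \cup G^t_i(\dup{\pi(i)})) \setminus \{\pi(i), \dup{\pi(i)}\}$, with $G^t_{i+1}(X) = G^t_i(X)$ whenever $X$ is adjacent in $G^t_i$ to neither $\pi(i)$ nor $\dup{\pi(i)}$.

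For the base case $i=1$ I unfold the definitions of the twin network and of moralization. A base variable $X$ keeps exactly its $G$-parents and $G$-children in $G^t$ (the duplicate children are attached to $\dup{X}$ instead, and an internal $X$ is never a ``root parent'' of anyone), so moralization introduces no new neighbors and $G^t_1(X) \subseteq G_1(X)$. A proper duplicate $\dup{X}$ has parents among $\{\dup{P}, P\}$ for the $G$-parents $P$ of $X$, children among the $\dup{Y}$ for $G$-children $Y$ of $X$, and co-parents (w.r.t.\ a child $\dup{Y}$) among $\{\dup{Q}, Q\}$ for the remaining $G$-parents $Q$ of $Y$; since all such $P,Q$ lie in $G_1(X)$, this yields $G^t_1(\dup{X}) \subseteq G_1(X) \cup \dup{G_1(X)}$. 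A root $U$ (where $\dup{U} = U$) additionally gains the duplicate children $\dup{Y}$ of its $G$-children $Y$, whose co-parents again fall in $G_1(U) \cup \dup{G_1(U)}$, so both required containments hold.

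For the inductive step, write $Y = \pi(i)$ and assume the lemma for $G^t_i$. The crux is a compatibility between the two case splits: if $X$ or $\dup{X}$ is adjacent in $G^t_i$ to $Y$ or $\dup{Y}$, then $X$ is adjacent to $Y$ in $G_i$. This follows from the induction hypothesis $G^t_i(Y), G^t_i(\dup{Y}) \subseteq G_i(Y) \cup \dup{G_i(Y)}$, together with the elementary facts that a base variable belongs to $\dup{G_i(Y)}$ only if it (being a root) already belongs to $G_i(Y)$, and a proper duplicate $\dup{X}$ belongs to $G_i(Y) \cup \dup{G_i(Y)}$ exactly when $X \in G_i(Y)$. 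Granting this, if $X \notin G_i(Y)$ then $X$ and $\dup{X}$ are untouched on the twin side, $G_{i+1}(X) = G_i(X)$, and both containments are immediate from the induction hypothesis. If instead $X \in G_i(Y)$, then $G_{i+1}(X) = (G_i(X) \cup G_i(Y)) \setminus \{Y\}$; feeding the induction hypothesis for $G^t_i(X)$, $G^t_i(\dup{X})$, $G^t_i(Y)$, $G^t_i(\dup{Y})$ into the crude twin bound gives $G^t_{i+1}(X), G^t_{i+1}(\dup{X}) \subseteq (G_i(X) \cup \dup{G_i(X)} \cup G_i(Y) \cup \dup{G_i(Y)}) \setminus \{Y, \dup{Y}\}$, and I then verify this set equals $G_{i+1}(X) \cup \dup{G_{i+1}(X)}$. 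This last equality is pure bookkeeping: when $Y$ is internal, $\dup{Y}$ lies in none of the base neighbor sets and $Y$ lies in none of the duplicate ones, so deleting $\{Y\}$ from the base part and $\{\dup{Y}\}$ from the duplicate part is the same as deleting $\{Y,\dup{Y}\}$ from the union; the root case $\dup{Y} = Y$ is simpler still.

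I expect the only real obstacle to be making this case analysis airtight — specifically the compatibility claim above, which is exactly what prevents the twin graph from silently acquiring an edge unaccounted for by the base-graph bound. Once Lemma~\ref{lem:neighbor-set} is established, Theorem~1 follows immediately by specializing to $X = \pi(i)$ and using $G_i(\pi(i)) = \cls(\pi(i))$, $G^t_i(\pi(i)) = \cls^t(\pi(i))$, and $G^t_i(\dup{\pi(i)}) = \cls^t(\dup{\pi(i)})$.
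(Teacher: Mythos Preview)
Your proof of the lemma is correct and follows essentially the same approach as the paper's: induction on $i$, the same compatibility observation (adjacency to $Y$ or $\dup{Y}$ on the twin side forces $X \in G_i(Y)$ on the base side, via the induction hypothesis applied to $Y$), and the same case split and union bound; you simply spell out the base case and the final set-equality bookkeeping more carefully than the paper, which dispatches the base case with ``by definition of twin networks.'' One caveat on your closing remark about Theorem~1: $\cls^t(\dup{\pi(i)})$ is \emph{not} $G^t_i(\dup{\pi(i)})$ but rather the neighbor set of $\dup{\pi(i)}$ \emph{after} $\pi(i)$ has already been eliminated from $G^t_i$, so Theorem~1 still needs one extra line (as the paper provides) rather than following ``immediately'' from the lemma.
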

\begin{proof}
We will prove this by induction on $G^t_i$. The statement holds initially for $G^t_1$ by the definition of twin networks. Suppose the statement holds for $G^t_{i-1}$, i.e. $G^t_{i-1}(X) \subseteq G_{i-1}(X) \cup \dup{G_{i-1}(X)}$ and $G^t_{i-1}(\dup{X}) \subseteq G_{i-1}(X) \cup \dup{G_{i-1}(X)}$ for every $\{X,\dup{X}\}$ in $G^t_{i-1}$, we need to show the statement holds for $G^t_i$.

For simplicity, let $Y$ denote the variable being eliminated at step $i$, i.e. $Y=\pi(i-1)$. WLG, consider each base variable $X$ in $G^t_i$ (similar argument can be applied to each duplicate variable $\dup{X}$). $G^t_i(X)$ is affected by the elimination of $\{Y,\dup{Y}\}$ iff $X$ is a neighbor of $\{Y,\dup{Y}\}$ in $G^t_{i-1}$. Moreover, by induction, $X$ is a neighbor of $\{Y,\dup{Y}\}$ in $G^t_{i-1}$ only if $X$ is a neighbor of $Y$ in $G_{i-1}$.

When $X$ is not a neighbor of $Y$, then $G^t_i(X) = G^t_{i-1}(X)$ and $G_i(X) = G_{i-1}(X)$, so the statement holds.

When $X$ is a neighbor of $Y$, $G_i(X) = G_{i-1}(X) \cup G_{i-1}(Y) \setminus \{Y\}$ by the definition of variable elimination. We can then bound $G^t_i(X)$ as follows:
\begin{eqnarray*}
G^t_i(X) &\subseteq & (G^t_{i-1}(X) \cup G^t_{i-1}(Y) \cup G^t_{i-1}(\dup{Y})) \setminus \{Y,\dup{Y}\} \\
& & \:\:\:\mbox{(eliminating $\{Y,\dup{Y}\}$ on $G^t$)}\\
 &\subseteq& (G_{i-1}(X) \cup \dup{G_{i-1}(X)} \cup G_{i-1}(Y) \cup \dup{G_{i-1}(Y)}) \setminus \{Y,\dup{Y}\} \\
&  & \:\:\:\mbox{(by inductive hypothesis)}  \\
&=& (G_{i-1}(X) \cup G_{i-1}(Y)) \cup (\dup{G_{i-1}(X)} \cup \dup{G_{i-1}(Y)}) \setminus \{Y,\dup{Y}\}\\
&= & G_i(X) \cup \dup{G_i(X)}.
\qedhere
\end{eqnarray*}
\end{proof}

\begin{proof}[Proof of Theorem~\ref{thm:twin-cls}]
Consider each variable $X$ that is eliminated at step $i$, i.e. $X=\pi(i)$. By Lemma~\ref{lem:neighbor-set}, $\cls^t(X) = G^t_{i-1}(X) \subseteq G_{i-1}(X) \cup \dup{G_{i-1}(X)} = \cls(X) \cup \dup{\cls(X)}$.

We next bound $\cls^t(\dup{X})$ if $X$ is an internal variable, 
\begin{eqnarray*}
{\cls^t}(\dup{X}) &\subseteq& (G^t_{i-1}(\dup{X}) \cup \cls^t(X)) \setminus \{X\} \\
& & \:\:\:\mbox{(eliminating $X$ from $G^t_{i-1}$)}\\
&\subseteq& G_{i-1}(X) \cup \dup{G_{i-1}(X)} \setminus \{X\} \\
& & \:\:\:\mbox{(by Lemma~\ref{lem:neighbor-set})}\\
&\subseteq & \cls(X) \cup \dup{\cls(X)} .
\qedhere
\end{eqnarray*}
\end{proof}

\begin{proof}[Proof of Corollary~\ref{cor:teo-width}]
By Theorem~\ref{thm:twin-cls}, for every base variable $X \in G^t$, $|\cls^t(X)| \leq |\cls(X) \cup \dup{\cls(X)}| \leq 2|\cls(X)|$.
Similarly, for every duplicate variable $\dup{X} \in G^t$, $|\cls^t(\dup{X})| \leq |\cls(X) \cup \dup{\cls(X)}| \leq 2|\cls(X)|$.
So
$w^t = \max_{X \in G^t} {|\cls^t(X)|} - 1
\leq 2\max_{X \in G} {|\cls(X)|} - 1
= 2(w+1)-1
= 2w+1$.
\end{proof}

\begin{proof}[Proof of Corollary~\ref{cor:twin-treewidth}]
Consider an optimal elimination order $\pi$ for base network $G$ with width $w$. By Corollary~\ref{cor:teo-width}, the twin elimination order for $G^t$ with width no more than $2w+1$. It follows that the treewidth of $G^t$ is no more than $2w+1$.
\end{proof}

\subsection*{Proof of Theorem 2}
We first state a key observation from Algorithm 1, which is formulated as Lemma~\ref{lem:inv-edge}. For simplicity, we say a leaf node \emph{hosts variable $X$} if it hosts a family that contains $X$.

\begin{lemma}~\label{lem:inv-edge}
Consider each invariant edge $(i,j)$ in a twin jointree constructed from Algorithm 1. A variable $X$ is hosted in some leaf on the $i$-side of the edge iff $\dup{X}$ is also hosted in some leaf on the $i$-side of the edge.
\end{lemma}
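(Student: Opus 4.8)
\textbf{Proof proposal for Lemma~\ref{lem:inv-edge}.}

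The plan is to induct on the recursive calls to Algorithm~\ref{alg:jt}, tracking which leaf nodes of the (growing) twin jointree host which variables, and how invariant edges are created. First I would set up the bookkeeping: an invariant edge $(i,j)$ is, by definition, an edge of the base jointree that Algorithm~\ref{alg:jt} does \emph{not} duplicate, which happens exactly when the recursion descends through $(i,j)$ via the \textbf{else}-branch (some leaf at or below the child hosts an internal-variable family and some does not). I would fix such an edge and consider the partition of base-jointree leaves into the $i$-side and the $j$-side. The key structural fact I want is: for each base variable $X$, a leaf on the $i$-side hosts $X$ if and only if some leaf on the $i$-side hosts $\dup{X}$ (and symmetrically for the $j$-side). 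Since $X$ is a root variable iff $\dup{X}=X$, the root case is immediate, so the content is in the internal case.

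The main step is to understand exactly where duplicate leaves $\dup{k}$ (hosting $\dup{f}$ for a family $f$ originally hosted at leaf $k$) get attached. By inspection of Algorithm~\ref{alg:jt}, whenever the subtree rooted at some node $r$ is duplicated (the case where all leaves at/below $r$ host only internal-variable families), the entire duplicate subtree — including all duplicate leaves $\dup{k}$ for $k$ at/below $r$ — is hung off the \emph{parent} $p$ of $r$, hence on the same side of $p$ as $r$ itself, and therefore on the same side of every invariant edge that lies strictly above $p$. So for an invariant edge $(i,j)$ with, say, $j$ the parent-side, the duplicate leaves that appear on the $i$-side are precisely the $\dup{k}$ for which $k$ is at/below $i$ and sits inside a maximal all-internal subtree rooted somewhere at/below $i$. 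The claim then reduces to: a base leaf $k$ on the $i$-side hosts an internal-variable family (equivalently, its duplicate got created) iff it lies in such a maximal all-internal subtree at/below $i$ — which follows because the \textbf{else}-branch only recurses past $(i,j)$ when the $i$-side already contains \emph{some} internal-variable leaf, and the recursion will eventually wrap every internal-variable leaf below $i$ in exactly one duplicated subtree (a leaf hosting an internal family is a trivial all-internal subtree, so it is duplicated at the latest when the recursion reaches it).

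I would organize this as induction on the recursion: the base cases (\textbf{return} when all leaves at/below $r$ host only root families; \textbf{duplicate} when all leaves at/below $r$ host only internal families) are checked directly, and in the inductive \textbf{else}-case each child $k$ of $r$ is handled by the inductive hypothesis, after which one observes that no new invariant edge is created below $r$ that is not already covered, and the edges $(r,k)$ themselves are invariant precisely when the $r$-subtree is mixed — at which point both the base and duplicate leaves for every internal variable below $k$ end up on the $k$-side, giving the "iff". The main obstacle I anticipate is the careful case analysis pinning down the attachment point of duplicated subtrees and confirming that every duplicate leaf $\dup{X}$ ends up on the \emph{same} side of a given invariant edge as its base counterpart $X$ — i.e., ruling out that a duplicate subtree is ever reattached across an invariant edge. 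This hinges entirely on the fact that Algorithm~\ref{alg:jt} attaches $\dup{r}$ to $p$, the parent of $r$, and never higher, so once that is made precise the rest is routine.
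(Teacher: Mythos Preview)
Your core insight is right and matches the paper's: a leaf $k$ hosting an internal variable $X$ necessarily hosts an \emph{internal} family, hence $k$ sits inside some maximal all-internal subtree rooted at $u$, and Algorithm~\ref{alg:jt} attaches $\dup{u}$ to $u$'s parent~$p$, placing $\dup{k}$ on the same side of any invariant edge as $k$. The paper proves the lemma by exactly this direct argument (plus the symmetric converse starting from a duplicate leaf $\dup{k}$), with no induction on the recursion at all. Your inductive scaffolding is correct but unnecessary overhead; once you have the one structural fact ``$\dup{u}$ is attached to the parent of $u$,'' the lemma follows in a couple of lines.

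One imprecision worth fixing: your characterization ``invariant edge $=$ edge the recursion descends through via the \textbf{else}-branch with a mixed child'' is not quite right. Edges lying entirely inside an all-root subtree are also invariant (the recursion returns before reaching them), as is the edge $(k,r)$ when $r$ is mixed but $k$'s subtree is all-root. These missed cases are harmless for the lemma (the child side then hosts only root variables, so $X=\dup{X}$ there), but you should state the correct characterization: with $j$ the parent of $i$, edge $(i,j)$ is invariant iff the subtree rooted at $i$ contains at least one root-family leaf. Conversely, the edge $(k,r)$ where $k$'s subtree is all-internal should be treated as \emph{duplicated} (its ``duplicate'' is the new edge $(\dup{k},r)$), not invariant; your parenthetical correctly excludes it, but it is worth saying why.
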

\begin{proof}[Proof]
If $X$ is a root variable, then $X=\dup{X}$ and the lemma holds. Now suppose $X$ is an internal variable. Let $k$ be the leaf node on the $i$-side of the edge that hosts $X$, it suffices to show that $k$ is duplicated into a $\dup{k}$ on the $i$-side that hosts $\dup{X}$. 

Since $X$ is an internal variable, $k$ hosts either the family for $X$ or a family for some child of $X$. In either case, $k$ hosts a family for an internal variable. Moreover, $k$ is contained in some subtree rooted at $u$ whose leaves host only families for internal variables. Since $k$ is on the $i$-side of the invariant edge $(i,j)$, $u$ is also on the $i$-side of the edge. Hence the duplicate leaf $\dup{k}$, which hosts $\dup{X}$, is also on the $i$-side of the edge.

Conversely, suppose $\dup{X}$ is hosted by some leaf $\dup{k}$ on the $i$-side of edge $(i,j)$, then $\dup{k}$ hosts some family for a duplicate variable. $\dup{k}$ is contained in some duplicate subtree rooted at $\dup{u}$ whose leaves only host families for duplicate variables. It follows that the base subtree rooted at $u$ is located on the $i$-side of the edge and contains a base node $k$ hosting $X$.
\end{proof}

We first recall the definition of separators: $X \in \S_{ij}$ if and only if $X$ is hosted on both sides of the edge $(i,j)$. For simplicity, we use $\vars(i,j)$ to denote the variables that appear on the $i$-side of the edge $(i,j)$ in the base jointree. Similarly, we use $\vars^t(i,j)$ and to denote the variables that appear on the $i$-side in the twin jointree. By definition, for each edge $(i,j)$, $\S_{ij} = \vars(i,j) \cap \vars(j,i)$ and $\S^t_{ij} = \vars^t(i,j) \cap \vars^t(j,i)$. Given a jointree and its root, we say that a jointree node $j$ is \emph{above} a jointree node $i$ if $j$ is closer to the root than $i$, and that $j$ is \emph{below} $i$ if $j$ is further from the root than $i$.

\begin{proof}[Proof of Theorem~\ref{thm:twin-sep}]
We derive the separators for each type of edges. WLG, for each edge $(i,j)$, assume that $j$ is above $i$. 
First consider each duplicated edge $(i,j)$, we have $\vars^t(i,j) = \vars(i,j)$ by Algorithm 1. Moreover, $\vars^t(j,i)$ can only contain extra duplicate variables comparing to $\vars(j,i)$. Thus, 
$\S^t_{ij} = \S_{ij}$.

For each duplicate edge $(\dup{i},\dup{j})$, we have  $\vars^t(\dup{i},\dup{j}) = \dup{\vars(i,j)}$ by Algorithm 1. We next show that for each $\dup{X} \in \vars^t(\dup{i},\dup{j})$, $\dup{X} \in \vars^t(\dup{j},\dup{i})$ iff $X \in \vars(j,i)$, which then concludes $\S^t_{\dup{i} \dup{j}} = \dup{\S_{ij}}$. We first show the if-part. Let $u$ be the least common ancestor of $j$ and $\dup{j}$. If $X \in \vars(j,i)$, then $X$ is hosted by some leaf $k$ that appears either below $u$, or above $u$, in the base jointree. Suppose $k$ is below $u$, then $k$'s duplicate $\dup{k}$ hosts $\dup{X}$ on the $\dup{j}$-side in the twin jointree by Algorithm 1, which implies $\dup{X} \in \vars^t(\dup{j},\dup{i})$. Suppose $k$ is above $u$. If $u$ is the root of the jointree, then $X$ is a root variable and $\dup{X} \in \vars^t(\dup{j},\dup{i})$. Otherwise, let $p$ be the parent of $u$, then $(u,p)$ is an invariant edge, and $\dup{X}$ appears on the $p$-side of the edge by Lemma~\ref{lem:inv-edge}. 

Similar argument applies for the only-if part. Suppose $\dup{X} \in \vars^t(\dup{i},\dup{j})$, then $\dup{X}$ is hosted by some duplicate leaf $\dup{k}$ either below or above $u$. If $\dup{k}$ is below $u$, then the duplicated node $k$ is also below $u$. If $\dup{k}$ is above $u$, then the duplicated node $k$ must also appear above $u$ by Lemma~\ref{lem:inv-edge}.

For each invariant edge $(i,j)$, it follows from Lemma~\ref{lem:inv-edge} that $\vars^t(i,j) = \vars(i,j) \cup \dup{\vars(i,j)}$ and $\vars^t(j,i) = \vars(j,i) \cup \dup{\vars(j,i)}$. Hence, $\S^t_{ij} = \S_{ij} \cup \dup{\S_{ij}}$.
\end{proof}

\begin{proof}[Proof of Corollary~\ref{cor:tjt-width}]
Let $T$ be a jointree for $G$, and let $T^t$ be the twin jointree for $G^t$ obtained from Algorithm~1. Let $i$ be a node in jointree $T$. By Theorem~\ref{thm:thin-sep}, for all neighbors $j$ of $i$, $\S^t_{ij} \subseteq \S_{ij} \cup \dup{\S_{ij}}$. So
$\cls^t_i = \bigcup_{j}{\S^t_{ij}} 
\subseteq \bigcup_{j}{\S_{ij} \cup \dup{\S_{ij}}}
= \cls_i \cup \dup{\cls_i}$. So $|\cls^t_i| \leq |\cls_i \cup \dup{\cls_i}| \leq 2|\cls_i|$.
So $w^t = \max_{i \in T^t} {|\cls^t_i|} - 1
\leq 2\max_{i \in T} {|\cls_i|} - 1
= 2(w+1)-1 = 2w+1$.

The number of nodes in the twin jointree is at most twice the number of nodes in the base jointree since Algorithm 1 adds at most one duplicate for each node in the base jointree.
\end{proof}

\subsection*{Proof of Theorem 3}
\begin{proof}[Proof of Theorem~\ref{thm:thin-sep}]
From~\cite{uai/ChenDarwiche22}, a thinned jointree can be obtained by applying a sequence of thinning rules to a base jointree. Let $\Q = \{Q_1,\dots,Q_T\}$ denote the $T$ thinning rules being applied to the base jointree in order. We next construct a thinning sequence $\Q^t$ for the twin jointree, which leads to the thinned separators defined in the Theorem.  

We first define a \emph{parallel thinning step} as simultaneously thinning a functional variable $X$ from the base jointree and thinning $\{X,\dup{X}\}$ from the twin jointree. Consider any thinning rule $Q_i$ that thins $X$ from a separator $\S_{ij}$, the parallel thinning on $\S^t$ is defined as follows:
\begin{itemize}
\item Suppose $(i,j)$ is a duplicated edge, then we thin $X$ from $\S^t_{ij}$ and $\dup{X}$ from $\S^t_{\dup{i}\dup{j}}$
\item Suppose $(i,j)$ is an invariant edge, then we thin $\{X,\dup{X}\}$ from $\S^t_{ij}$
\end{itemize}

First note that the definition of parallel thinning ensures that the relation between $\S$ and $\S^t$ specified in the theorem holds after every parallel thinning step. It remains to show that the parallel thinnings on $\S^t$ are indeed valid.  

Let $\S$, $\S^t$ denote the separators for the base and twin jointree during the parallel thinnings. Consider a parallel thinning step being applied to a duplicated edge, which, by definition, removes $X$ from $\S_{ij}$, $X$ from $\S^t_{ij}$, and $\dup{X}$ from $\S^t_{\dup{i}\dup{j}}$. Suppose the removal of $X$ from $\S_{ij}$ is supported by the first thinning rule, i.e. the edge $(i, j)$ is on the path between two leaf nodes, call them $l$ and $r$, both hosting the family of $X$, and every separator on that path contains $X$. We claim that the removal of $X$ from $\S^t_{ij}$ and the removal of $\dup{X}$ from $\S^t_{\dup{i}\dup{j}}$ are both supported by the first thinning rule. By Algorithm 1, the leaf nodes $\{l,r\}$ host the family of $X$, and the leaf nodes $\{\dup{l}, \dup{r}\}$ host the family of $\dup{X}$ in the twin jointree. Moreover, by the inductive assumption on separators, $X$ appears on every separator between $l$ and $r$ and $\dup{X}$ appears on every separator between $\dup{l}$ and $\dup{r}$. This is based on an observation that the path from $l$ to $r$ can be divided into three sub-paths: a sub-path consisting of only duplicated edges $(l=p_1,\dots,p_s)$, a sub-path consisting of only invariant edges $(p_s,\dots,p_m)$, and a sub-path consisting of only duplicated edges $(p_m,\dots,p_n=r)$, where $1 < s \leq m < n$\footnote{Note that we do not preclude the possibility of having no invariant edge on the path.}. Given the path from $l$ to $r$, we can then express the path from $\dup{l}$ to $\dup{r}$ as three sub-paths as well: a sub-path consisting of only duplicate edges $(\dup{l}=\dup{p_1},\dots,p_s)$, a sub-path consisting of only invariant edges $(p_s,\dots,p_m)$, and a sub-path consisting of only duplicate edges $(p_m,\dots,\dup{p_n}=\dup{r})$. For each duplicated edge $(i,j)$ on the sub-path from $l$ to $p_s$ or on the sub-path from $p_m$ to $r$, $X \in \S_{ij}$ iff $\dup{X} \in \S_{\dup{i}\dup{j}}$.  For each invariant edge $(i,j)$ on the sub-path from $p_s$ to $p_m$, $X \in \S_{ij}$ iff $\dup{X} \in \S_{ij}$. 

Suppose the removal of $X$ from $\S_{ij}$ is supported by the second thinning rule, i.e. no other separators $\S_{ik}$ contains $X$, or no other separators $\S_{kj}$ contains $X$. Then the removal of $X$ from $\S^t_{ij}$ and the removal of $\dup{X}$ from $\S^t_{\dup{i}\dup{j}}$ are both supported by the second thinning rule due to the inductive assumption on separators.

Consider now a parallel thinning step being applied to an invariant edge which removes $X$ from $\S_{ij}$ and $\{X,\dup{X}\}$ from $\S^t_{ij}$. Suppose the removal of $X$ from $\S_{ij}$ is supported by the first thinning rule, where the edge $(i, j)$ is on the path between two leaf nodes $l$ and $r$ both hosting the family of $X$ and every separator on that path contains $X$. Again, by the inductive assumption on separators, $\dup{X}$ appears in all separators on the path between $\dup{l}$ and $\dup{r}$, which also includes the invariant edge $(i,j)$. Hence, $\dup{X}$ can also be removed from $\S^t_{ij}$ using the first thinning rule. Suppose the removal of $X$ from $\S_{ij}$ is supported by the second thinning rule, then we can apply the second thinning rule to remove $\{X,\dup{X}\}$ from $\S^t_{ij}$ due to the inductive assumption.  
\end{proof}

\subsection*{Proof of Theorem 4}
We first extend the notion of twin graph sequence to \emph{$N$-world graph sequence},  denoted as $G^N_1,G^N_2,\dots,G^N_n$, where $G^N_1$ is the moral graph for $G^N$, and $G^N_{i+1}$ is the result of eliminating $\{\ndup{\pi(i)}{1},\dots,\ndup{\pi(i)}{N}\}$ from $G^N_{i}$. For each variable $\ndup{X}{j}$ ($j \in \{1,\dots,N\}$) in $G^N_i$, let $G^N_i(\ndup{X}{j})$ be the set consisting of $\ndup{X}{j}$ and its neighbors in $G^N_i$. For a set of variables $\X$, we use $\X^j$ to denote $\{X^j | X \in \X\}$.

\begin{lemma}\label{lem:n-twin-neighbor}
Consider a base network $G$ and its $N$-world network $G^N$. Let $\pi$ be an elimination order for $G$. Then for each variable $\ndup{X}{j}$ ($j \in \{1,\dots,N\}$) in $G^N_i$, $G^N_i(\ndup{X}{j}) \subseteq \bigcup_{k=1}^N \ndup{G_i(X)}{k}$.
\end{lemma}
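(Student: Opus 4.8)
The plan is to prove Lemma~\ref{lem:n-twin-neighbor} by induction on $i$, generalizing the $N=2$ argument of Lemma~\ref{lem:neighbor-set} to $N$ parallel worlds that share the roots in $\R$. Throughout I adopt the convention already used in the appendix that, for a set of variables $\W$, $\W^{k}=\{Z^{k}\mid Z\in\W\}$ where $Z^{k}=Z$ whenever $Z\in\R$. With this convention the target bound $G^{N}_{i}(X^{j})\subseteq\bigcup_{k=1}^{N}(G_{i}(X))^{k}$ treats root and non-root variables uniformly, and the argument rests on two elementary set identities: $\W^{k}\cup\V^{k}=(\W\cup\V)^{k}$ for all $\W,\V$, and $\bigl(\bigcup_{k}\W^{k}\bigr)\setminus\{Y^{1},\dots,Y^{N}\}=\bigcup_{k}(\W\setminus\{Y\})^{k}$ for every $Y\in\W$. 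These are the only places the reasoning departs from the $N=2$ case, and they are exactly where the bookkeeping around the shared roots must be checked.

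For the base case $i=1$, I would expand the moral graph of $G^{N}$. A neighbor of $X^{j}$ in $G^{N}_{1}$ is a parent of $X^{j}$, a child of $X^{j}$, or a co-parent of $X^{j}$ (a node sharing a child with $X^{j}$). By Definition~\ref{def:n-twin}, each such neighbor has the form $Z^{j}$ where $Z$ is, respectively, a parent, child, or co-parent of $X$ in $G$, hence $Z\in G_{1}(X)$; here I use that children of $X$, and therefore all variables reached through a shared child, are internal or roots in $\R$ and never a discarded base variable. Since also $X\in G_{1}(X)$, this gives $G^{N}_{1}(X^{j})\subseteq(G_{1}(X))^{j}\subseteq\bigcup_{k}(G_{1}(X))^{k}$.

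For the inductive step, let $Y=\pi(i-1)$ be eliminated when passing from $G^{N}_{i-1}$ to $G^{N}_{i}$; on the $N$-world side this removes the set $S=\{Y^{1},\dots,Y^{N}\}$ from $G^{N}_{i-1}$. I would first record the elementary graph fact, proved by a one-line induction on $|S|$, that eliminating a set $S$ of nodes leaves any surviving node $v$ with closed neighborhood contained in $\bigl(N[v]\cup\bigcup_{s\in S}N[s]\bigr)\setminus S$; this also makes the argument insensitive to the order in which $Y^{1},\dots,Y^{N}$ are eliminated. Applied here, $G^{N}_{i}(X^{j})\subseteq\bigl(G^{N}_{i-1}(X^{j})\cup\bigcup_{k=1}^{N}G^{N}_{i-1}(Y^{k})\bigr)\setminus S$. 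If $X$ is not a neighbor of $Y$ in $G_{i-1}$, the inductive hypothesis implies no $Y^{k}$ lies in $G^{N}_{i-1}(X^{j})$, so this neighborhood is unchanged and $G_{i}(X)=G_{i-1}(X)$ closes the case. If $X$ is a neighbor of $Y$ in $G_{i-1}$, I substitute the inductive bounds $G^{N}_{i-1}(X^{j})\subseteq\bigcup_{k}(G_{i-1}(X))^{k}$ and $G^{N}_{i-1}(Y^{k})\subseteq\bigcup_{m}(G_{i-1}(Y))^{m}$, collapse the resulting union via $\W^{k}\cup\V^{k}=(\W\cup\V)^{k}$, delete $S$ using the second set identity (noting $Y\in G_{i-1}(Y)$), and finally invoke the variable-elimination recurrence $G_{i}(X)=(G_{i-1}(X)\cup G_{i-1}(Y))\setminus\{Y\}$ to obtain exactly $G^{N}_{i}(X^{j})\subseteq\bigcup_{k}(G_{i}(X))^{k}$.

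The main obstacle is not any single hard step but the careful handling of the shared roots $\R$: a root $P\in\R$ is a single node $P=P^{1}=\dots=P^{N}$ common to all worlds, so the symmetric superscript identities that drive the induction must be verified to behave correctly whether or not the variables involved (in particular $Y$ itself) lie in $\R$. Once this lemma is in hand, the treewidth half of Theorem~\ref{thm:ntwin-tw} follows as in the twin case: after ordering the elimination of $\pi(i)^{1},\dots,\pi(i)^{N}$ within each step, each cluster of $\pi(i)^{j}$ is contained in $\bigcup_{k}(G_{i}(\pi(i)))^{k}=\bigcup_{k}(\cls(\pi(i)))^{k}$, which has size at most $N\,|\cls(\pi(i))|\le N(w+1)$, giving width at most $N(w+1)-1$; the causal-treewidth half is obtained by the same parallel-thinning construction used in Theorem~\ref{thm:thin-sep}.
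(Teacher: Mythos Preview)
Your proposal is correct and follows essentially the same inductive approach as the paper's proof: induction on $i$, base case by inspection of the moral graph of $G^{N}$, and an inductive step that splits on whether $X$ is a neighbor of $Y=\pi(i-1)$ in $G_{i-1}$, using the containment $G^{N}_{i}(X^{j})\subseteq\bigl(G^{N}_{i-1}(X^{j})\cup\bigcup_{k}G^{N}_{i-1}(Y^{k})\bigr)\setminus\{Y^{k}\}_{k}$ together with the inductive hypothesis and the recurrence $G_{i}(X)=(G_{i-1}(X)\cup G_{i-1}(Y))\setminus\{Y\}$. Your write-up is somewhat more explicit than the paper's---you spell out the parent/child/co-parent analysis for the base case and isolate the closed-neighborhood bound for eliminating a set $S$ as a standalone fact---but the argument is the same.
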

\begin{proof}[Proof]
We will prove this by induction on $G^N_i$. The statement holds initially for $G^N_1$ by the definition of $N$-world networks. Suppose the statement holds for $G^N_{i-1}$, i.e. $G^N_{i-1}(\ndup{X}{j}) \subseteq \bigcup_{k=1}^N \ndup{G_{i-1}(X)}{k}$ for every variable $\ndup{X}{j} \in G^N_{i-1}$, we need to show the statement holds for $G^N_i$.

Let $Y=\pi(i-1)$. Suppose we eliminate variables $\{\ndup{Y}{k}\}_{k=1}^N$ from $G^N$. $G^N_i(X)$ is affected by the elimination of $\{\ndup{Y}{k}\}_{k=1}^N$ iff $X$ is a neighbor of $\{\ndup{Y}{k}\}_{k=1}^N$ in $G^N_i$. Moreover, by induction, $X$ is a neighbor of $\{\ndup{Y}{k}\}_{k=1}^N$ in $G^N_i$ only if $X$ is a neighbor of $Y$ in $G_i$.

When $X$ is not a neighbor of $Y$, then $G^N_i(\ndup{X}{j}) = G^N_{i-1}(\ndup{X}{j})$ for all $j=1,2,\dots,N$ and $G_i(X) = G_{i-1}(X)$, so the statement holds.
When $X$ is a neighbor of $Y$, for all $j=1,2,\dots,N$, we can then bound $G^N_i(\ndup{X}{j})$ as follows:
\begin{eqnarray*}
G^N_i(\ndup{X}{j}) &\subseteq& G^N_{i-1}(\ndup{X}{j}) \cup \bigcup_{k=1}^N (G^N_{i-1}(\ndup{Y}{k}))  \setminus \{\ndup{Y}{k}\}_{k=1}^{N} \\
&\subseteq& (\bigcup_{k=1}^N \ndup{G_{i-1}(X)}{k}) \cup (\bigcup_{k=1}^N \ndup{G_{i-1}(Y)}{k}) \setminus \{\ndup{Y}{k}\}_{k=1}^{N} \\
&& \:\:\:\mbox{(by induction hypothesis)} \\
&=& \bigcup_{k=1}^N(\ndup{G_{i-1}(X)}{k} \cup \ndup{G_{i-1}(Y)}{k} \setminus \{\ndup{Y}{k}\})\\
&=& \bigcup_{k=1}^N \ndup{G_i(X)}{k}.
\qedhere
\end{eqnarray*}
\end{proof}

\begin{lemma}\label{lem:n-twin-cls}
Let $G^N$ be an $N$-world network and let $j \in \{1,\dots,N\}$ be a positive integer. Then $\cls^N(\ndup{X}{j}) \subseteq \bigcup_{k=1}^N \ndup{\cls(X)}{k}$.
\end{lemma}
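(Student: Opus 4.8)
The plan is to mirror the proof of Theorem~\ref{thm:twin-cls} (whose two-world version of this very statement was Lemma~1 combined with the cluster bound) in the $N$-world setting, now that Lemma~\ref{lem:n-twin-neighbor} supplies the neighbor-set analogue of Lemma~\ref{lem:neighbor-set}. Concretely, I would argue directly from the $N$-world graph sequence $G^N_1,\dots,G^N_n$ induced by the elimination order $\pi^N$ that replaces each non-root variable $X$ of $\pi$ by the block $\ndup{X}{1},\dots,\ndup{X}{N}$ (processed in that order, immediately after $X$'s position in $\pi$), and by the definition $\cls^N(\ndup{X}{j}) = G^N_{m-1}(\ndup{X}{j})$ where $m$ is the step at which $\ndup{X}{j}$ is eliminated.

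First I would fix a variable $X = \pi(i)$ in the base network and the corresponding block $\ndup{X}{1},\dots,\ndup{X}{N}$ in $\pi^N$. Let $G^N_i$ be the $N$-world graph just before this block is eliminated, so that $\cls^N(X) = \cls(X)$-analogue starts from $G^N_i(\ndup{X}{j})$. For $j=1$, Lemma~\ref{lem:n-twin-neighbor} gives directly $G^N_i(\ndup{X}{1}) \subseteq \bigcup_{k=1}^N \ndup{G_i(X)}{k} = \bigcup_{k=1}^N \ndup{\cls(X)}{k}$, since $G_i(\pi(i)) = \cls(\pi(i))$ by definition of the cluster sequence. For $j > 1$, I then need to track how eliminating $\ndup{X}{1},\dots,\ndup{X}{j-1}$ (which form the first part of the block) affects the neighbor set of $\ndup{X}{j}$: each such elimination can only add to $\ndup{X}{j}$'s neighbors variables that were neighbors of some $\ndup{X}{\ell}$ ($\ell < j$) in the current graph, and by induction on $\ell$ within the block each of those neighbor sets is contained in $\bigcup_{k=1}^N \ndup{\cls(X)}{k}$. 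Removing the eliminated vertices $\ndup{X}{1},\dots,\ndup{X}{j-1}$ keeps us inside this union (it is a subset of $\bigcup_k \ndup{G_i(X)}{k}$, which contains all the $\ndup{X}{k}$ anyway). Hence $\cls^N(\ndup{X}{j}) = G^N_{m-1}(\ndup{X}{j}) \subseteq \bigcup_{k=1}^N \ndup{\cls(X)}{k}$ for every $j$, which is exactly the claim.

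The main obstacle — and the only place requiring care — is the intra-block bookkeeping: Lemma~\ref{lem:n-twin-neighbor} is stated for the graph sequence whose $i$-th step eliminates the \emph{whole} block $\{\ndup{\pi(i)}{1},\dots,\ndup{\pi(i)}{N}\}$ at once, whereas the actual elimination order $\pi^N$ peels them off one at a time, so I must justify that the final cluster $\cls^N(\ndup{X}{j})$ is no larger than the neighbor set predicted by the coarser, block-at-once sequence. This is a standard fact about variable elimination (eliminating a set of vertices simultaneously versus sequentially yields the same or larger clusters, and the union bound $\bigcup_k \ndup{\cls(X)}{k}$ is stable under both), so I would dispatch it with a short inductive remark rather than a detailed computation. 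Everything else is a direct application of Lemma~\ref{lem:n-twin-neighbor} together with the identity $G_i(\pi(i)) = \cls(\pi(i))$, exactly paralleling the passage from Lemma~\ref{lem:neighbor-set} to Theorem~\ref{thm:twin-cls}.
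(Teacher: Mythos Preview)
Your proposal is correct and follows essentially the same approach as the paper: the base case $j=1$ via Lemma~\ref{lem:n-twin-neighbor}, then induction on $j$ within the block. The ``intra-block bookkeeping'' you flag is exactly what the paper handles in one line via the variable-elimination inequality $\cls^N(\ndup{X}{j}) \subseteq G^N_i(\ndup{X}{j}) \cup \bigcup_{k=1}^{j-1} \cls^N(\ndup{X}{k})$, after which Lemma~\ref{lem:n-twin-neighbor} and the inductive hypothesis finish the job.
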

\begin{proof}[Proof]
Let $X=\pi(i)$.
By Lemma~\ref{lem:n-twin-neighbor}, $\cls^N(\ndup{X}{1}) = G^N_i(\ndup{X}{1}) \subseteq \bigcup_{k=1}^N \ndup{G_i(X)}{k} = \bigcup_{k=1}^N \ndup{\cls(X)}{k}$. We next bound $\C^N(X^j)$ where $j \in \{2,\dots,N\}$ by induction. For each $j \in \{2,\dots,N\}$, assume $\cls^N(\ndup{X}{h}) \subseteq \bigcup_{k=1}^N \ndup{\cls(X)}{k}$ for $h=1,2,\dots,j-1$, then 
\begin{eqnarray*}
\cls^N(\ndup{X}{j}) &\subseteq& G^N_i(\ndup{X}{j}) \cup \bigcup_{k=1}^{j-1} \cls^N(\ndup{X}{k}) \:\:\:\mbox{(by VE definition)} \\
&\subseteq& (\bigcup_{k=1}^N \ndup{G_i(X)}{k}) \cup (\bigcup_{k=1}^N \ndup{\cls(X)}{k})\:\:\:\mbox{(by inductive hypothesis)} \\
&\subseteq& \bigcup_{k=1}^N \ndup{\cls(X)}{k}
\qedhere
\end{eqnarray*}
\end{proof}

\begin{proof}[Proof of Theorem~\ref{thm:ntwin-tw}]
By Lemma~\ref{lem:n-twin-cls}, for all $\ndup{X}{j} \in G^N$, $|\cls(\ndup{X}{j})| \leq \sum_{k=1}^{N} |\ndup{\cls(X)}{k}| = N|\cls(X)|$.
Therefore, the width $w^N$ of $\pi^N$ is
\begin{eqnarray*}
w^t &=& \max_X {\cls^N(X)} - 1 
\leq N\max_X {\cls(X)} - 1
= N(w+1)-1\\
\end{eqnarray*}

For causal treewidth, we can extend Algorithm 1 to construct $N$-world jointrees by making $N-1$ duplicates for each duplicated subtree, and construct $N$-world thinned jointrees by extending Theorem 3. By analogous arguments, we can show that $w^t = N(w+1)-1$.
\end{proof}

\begin{figure}[tb]
\centering
\begin{subfigure}[b]{0.48\columnwidth}
\centering
\begin{tikzpicture}[->,>=stealth,shorten >=1pt,auto,scale=0.45,transform shape]
\node[state,thin,font=\huge] (A) at (0,1) {$A$};
\node[state,thin,font=\huge] (B) at (0,-1) {$B$};
\node[state,thin,font=\huge] (C) at (-2,2) {$C$};
\node[state,thin,font=\huge] (D) at (-2,0) {$D$};
\node[state,thin,font=\huge] (E) at (-2,-2) {$E$};
\node[state,thin,font=\huge] (F) at (-4,0) {$F$};

\path (A) edge (C);
\path (A) edge (D);
\path (B) edge (E);
\path (D) edge (E);
\path (C) edge (D);
\path (C) edge (F);
\path (E) edge (F);
\end{tikzpicture}
\caption{base network, width of \(\pi\) is $w=2$}
\label{fig:tb-bn}
\end{subfigure}
\begin{subfigure}[b]{0.48\columnwidth}
\centering
\begin{tikzpicture}[->,>=stealth,shorten >=1pt,auto,scale=0.45,transform shape]
\node[state,thin,font=\huge] (A) at (0,1) {$A$};
\node[state,thin,font=\huge] (B) at (0,-1) {$B$};
\node[state,thin,font=\huge] (C) at (-2,2) {$C$};
\node[state,thin,font=\huge] (D) at (-2,0) {$D$};
\node[state,thin,font=\huge] (E) at (-2,-2) {$E$};
\node[state,thin,font=\huge] (F) at (-4,0) {$F$};
\node[state,thin,font=\huge] (C1) at (2,2.2) {$\dup{C}$};
\node[state,thin,font=\huge] (D1) at (2,0) {$\dup{D}$};
\node[state,thin,font=\huge] (E1) at (2,-2.2) {$\dup{E}$};
\node[state,thin,font=\huge] (F1) at (4.2,0) {$\dup{F}$};

\path (A) edge (C);
\path (A) edge (D);
\path (B) edge (E);
\path (D) edge (E);
\path (C) edge (D);
\path (C) edge (F);
\path (E) edge (F);
\path (A) edge (C1);
\path (A) edge (D1);
\path (B) edge (E1);
\path (D1) edge (E1);
\path (C1) edge (D1);
\path (C1) edge (F1);
\path (E1) edge (F1);
\end{tikzpicture}
\caption{twin network, width of \(\pi^t\) is $w^t=5$}
\label{fig:tb-tn}
\end{subfigure}
\caption{The elimination order $\pi$ is $(A,B,F,D,C,E)$ and
the twin elimination order
$\pi^t$ is $(A,B,F,\dup{F},D,\dup{D},C,\dup{C},E,\dup{E})$.
We have $w^t=2w+1$.}
\end{figure}
\begin{figure}[tb]
\centering
\begin{subfigure}[b]{0.43\columnwidth}
\centering
\includegraphics[width=0.5\columnwidth]{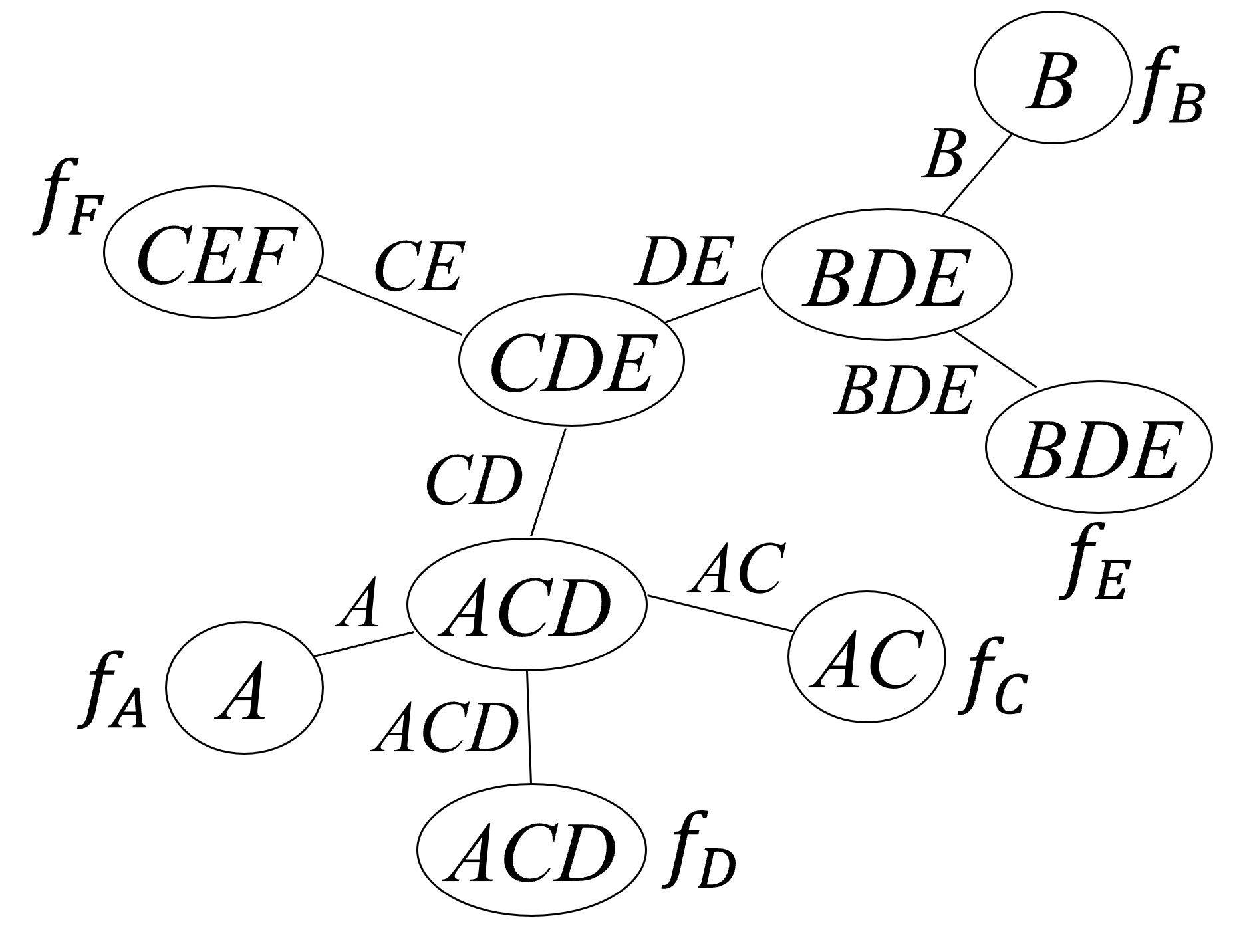}
\caption{base jointree, width $w=2$}
\label{fig:tb-jt}
\end{subfigure}
\begin{subfigure}[b]{0.53\columnwidth}
\centering
\includegraphics[width=0.6\columnwidth]{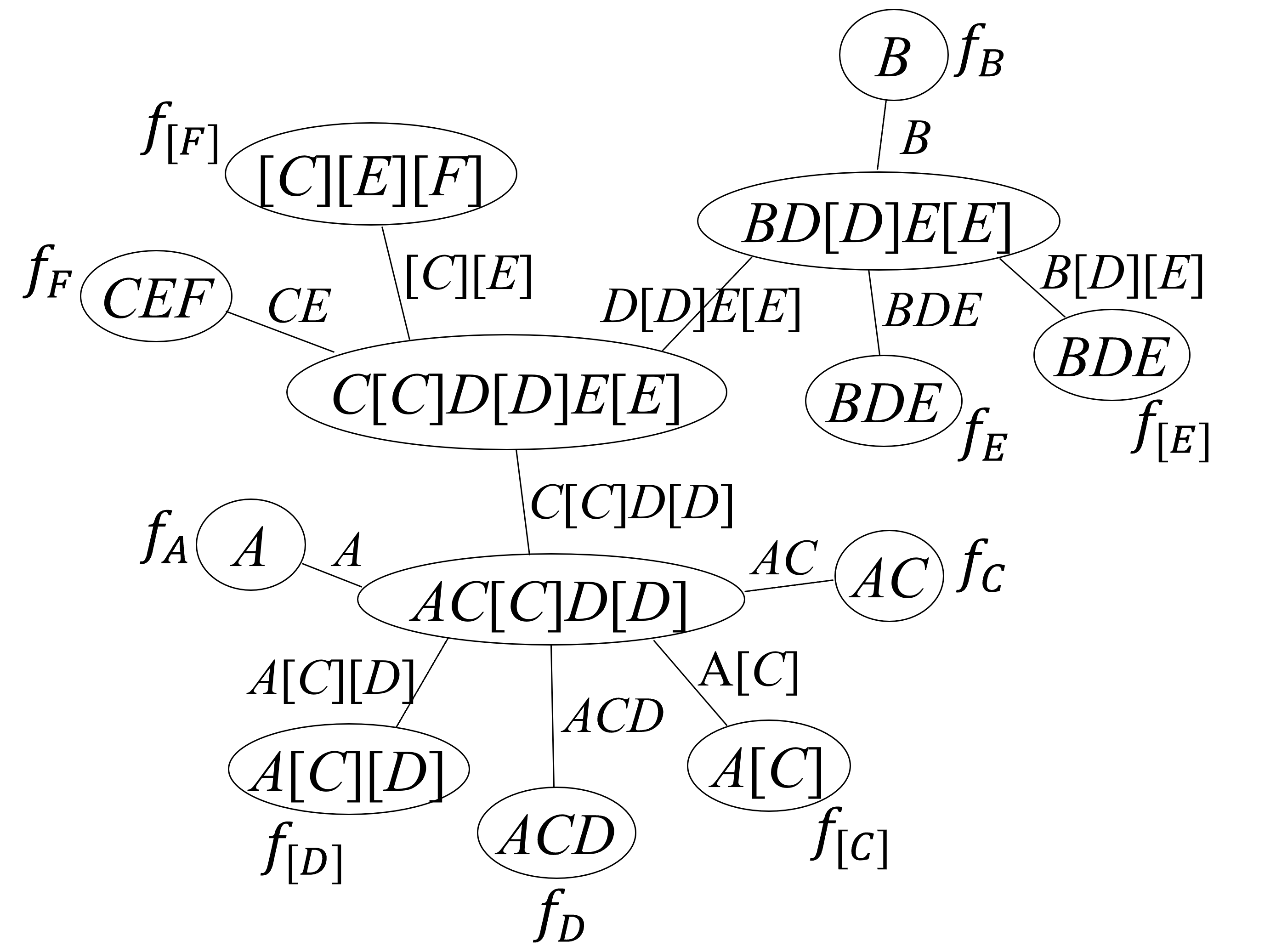}
\caption{twin jointree, width $w^t=5$}
\label{fig:tb-tjt}
\end{subfigure}
\caption{A base jointree and its twin jointree, where $w^t=2w+1$.}
\end{figure}
\begin{figure}[tb]
\centering
\begin{subfigure}[b]{0.48\columnwidth}
\centering
\begin{tikzpicture}[->,>=stealth,shorten >=1pt,auto,scale=0.45,transform shape]
\node[state,thin,font=\huge] (A) at (0,3) {$A$};
\node[state,thin,font=\huge] (B) at (0,1.5) {$B$};
\node[state,thin,font=\huge] (C) at (0,0) {$C$};
\node[state,thin,font=\huge] (D) at (0,-1.5) {$D$};
\node[state,thin,font=\huge] (E) at (-5,3) {$E$};
\node[state,thin,font=\huge] (F) at (-3.5,1.5) {$F$};
\node[state,thin,font=\huge] (G) at (-2,0) {$G$};
\node[state,thin,font=\huge] (H) at (-3.5,-1.5) {$H$};
\node[state,thin,font=\huge] (I) at (-5,0) {$I$};

\path (A) edge (E);
\path (B) edge (F);
\path (C) edge (G);
\path (D) edge (H);
\path (E) edge (F);
\path (F) edge (G);
\path (G) edge (H);
\path (H) edge (I);
\path (E) edge (I);
\end{tikzpicture}
\caption{base network, treewidth $w=2$}
\label{fig:tw-bn}
\end{subfigure}
\begin{subfigure}[b]{0.48\columnwidth}
\centering
\begin{tikzpicture}[->,>=stealth,shorten >=1pt,auto,scale=0.45,transform shape]
\node[state,thin,font=\huge] (A) at (0,3) {$A$};
\node[state,thin,font=\huge] (B) at (0,1.5) {$B$};
\node[state,thin,font=\huge] (C) at (0,0) {$C$};
\node[state,thin,font=\huge] (D) at (0,-1.5) {$D$};
\node[state,thin,font=\huge] (E) at (-5,3) {$E$};
\node[state,thin,font=\huge] (F) at (-3.5,1.5) {$F$};
\node[state,thin,font=\huge] (G) at (-2,0) {$G$};
\node[state,thin,font=\huge] (H) at (-3.5,-1.5) {$H$};
\node[state,thin,font=\huge] (I) at (-5,0) {$I$};
\node[state,thin,font=\huge] (E1) at (5,3) {$\dup{E}$};
\node[state,thin,font=\huge] (F1) at (3.5,1.5) {$\dup{F}$};
\node[state,thin,font=\huge] (G1) at (2,0) {$\dup{G}$};
\node[state,thin,font=\huge] (H1) at (3.5,-1.5) {$\dup{H}$};
\node[state,thin,font=\huge] (I1) at (5,0) {$\dup{I}$};

\path (A) edge (E);
\path (B) edge (F);
\path (C) edge (G);
\path (D) edge (H);
\path (E) edge (F);
\path (F) edge (G);
\path (G) edge (H);
\path (H) edge (I);
\path (E) edge (I);
\path (A) edge (E1);
\path (B) edge (F1);
\path (C) edge (G1);
\path (D) edge (H1);
\path (E1) edge (F1);
\path (F1) edge (G1);
\path (G1) edge (H1);
\path (H1) edge (I1);
\path (E1) edge (I1);
\end{tikzpicture}
\caption{twin network, treewidth $w^t=4$}
\label{fig:tw-tn}
\end{subfigure}
\caption{A base network and its corresponding twin network, where $w^t=2w$.}
\end{figure}
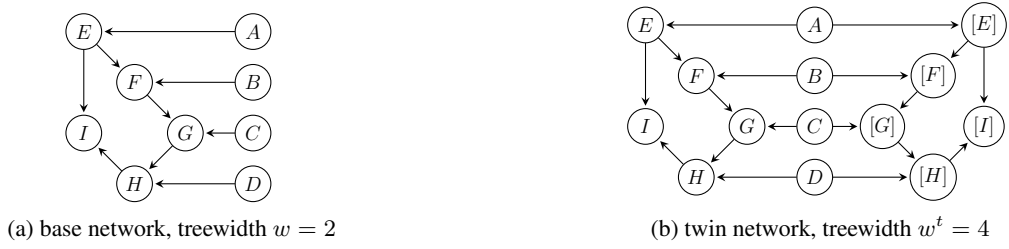

\section{Tightness of Bounds}
\label{app:tight}
As we claimed in the paper, Corollaries~$1$ and~$3$ provide tight bounds for the widths of twin elimination orders defined by Definition~$2$, and twin jointrees constructed by Algorithm~1, respectively. Moreover, we provide a concrete example where a base
network has treewidth \(w\) and its twin network has treewidth \(2w\), which suggests
that our bound on treewidth in Corollary~$2$ may also be tight (we found 
these examples through exhaustive search which we were able to do on small examples
only, given the allotted computational resources).

For the tightness of Corollary~$1$, 
consider the base network $G$ shown in Figure~\ref{fig:tb-bn}. The elimination order $\pi=(A,B,F,D,C,E)$ has width of $2.$ The twin network $G^t$ for $G$ is shown in Figure~\ref{fig:tb-tn}. By Definition~$2$, the twin elimination order for $G^t$ is $\pi^t=(A,B,F,\dup{F},D,\dup{D},C,\dup{C},E,\dup{E})$ which has a width of $5.$

For the tightness of Corollary~$3$, 
Figure~\ref{fig:tb-jt} shows a base jointree for the base network 
\(G\). This base jointree was constructed using elimination order \(\pi\)
and it has a width of \(2\).
Figure~\ref{fig:tb-tjt} shows a twin jointree constructed by Algorithm~1
which has a width of $5.$

For our last example, 
Figure~\ref{fig:tw-bn} shows a base network with a treewidth of $2.$ 
Figure~\ref{fig:tw-tn} shows the corresponding twin network which has a treewidth of $4.$
This shows that if our treewidth bound in Corollary~$2$ is not tight, then it
is off by at most~\(1\).

\section{Additional Figures}
\label{app:fig}
Figure~\ref{fig:thinned-jt} shows a thinned jointree for the base network in Figure~\ref{fig:base-nett}. Figure~\ref{fig:twin-thinned-jt} shows the 
corresponding thinned, twin jointree constructed by Algorithm~1 and annotated with the thinned separators (and clusters) as given by Theorem~3. 
The jointree width is preserved in this case.

\begin{figure*}[h]
\centering
\begin{subfigure}[b]{0.49\columnwidth}
\centering
\begin{tikzpicture}[->,>=stealth,shorten >=1pt,auto,scale=0.45,transform shape]
\node[state,thin,font=\huge] (U) at (0,0) {$U$};
\node[state,thin,font=\huge] (X) at (-2,0) {$X$};
\node[state,thin,font=\huge] (Y) at (2,0) {$Y$};
\node[state,thin,font=\huge] (A) at (-1,-1.5) {$A$};
\node[state,thin,font=\huge] (B) at (1,-1.5) {$B$};
\node[state,thin,font=\huge] (S) at (-2,-3) {$S$};
\node[state,thin,font=\huge] (C) at (2,-3) {$C$};
\path (U) edge (A);
\path (U) edge (B);
\path (A) edge (S);
\path (B) edge (S);
\path (X) edge (S);
\path (A) edge (C);
\path (B) edge (C);
\path (Y) edge (C);
\end{tikzpicture}
\caption{base network}
\label{fig:base-nett}
\end{subfigure}
\hfill
\begin{subfigure}[b]{0.49\columnwidth}
\centering
\includegraphics[width=0.5\columnwidth]{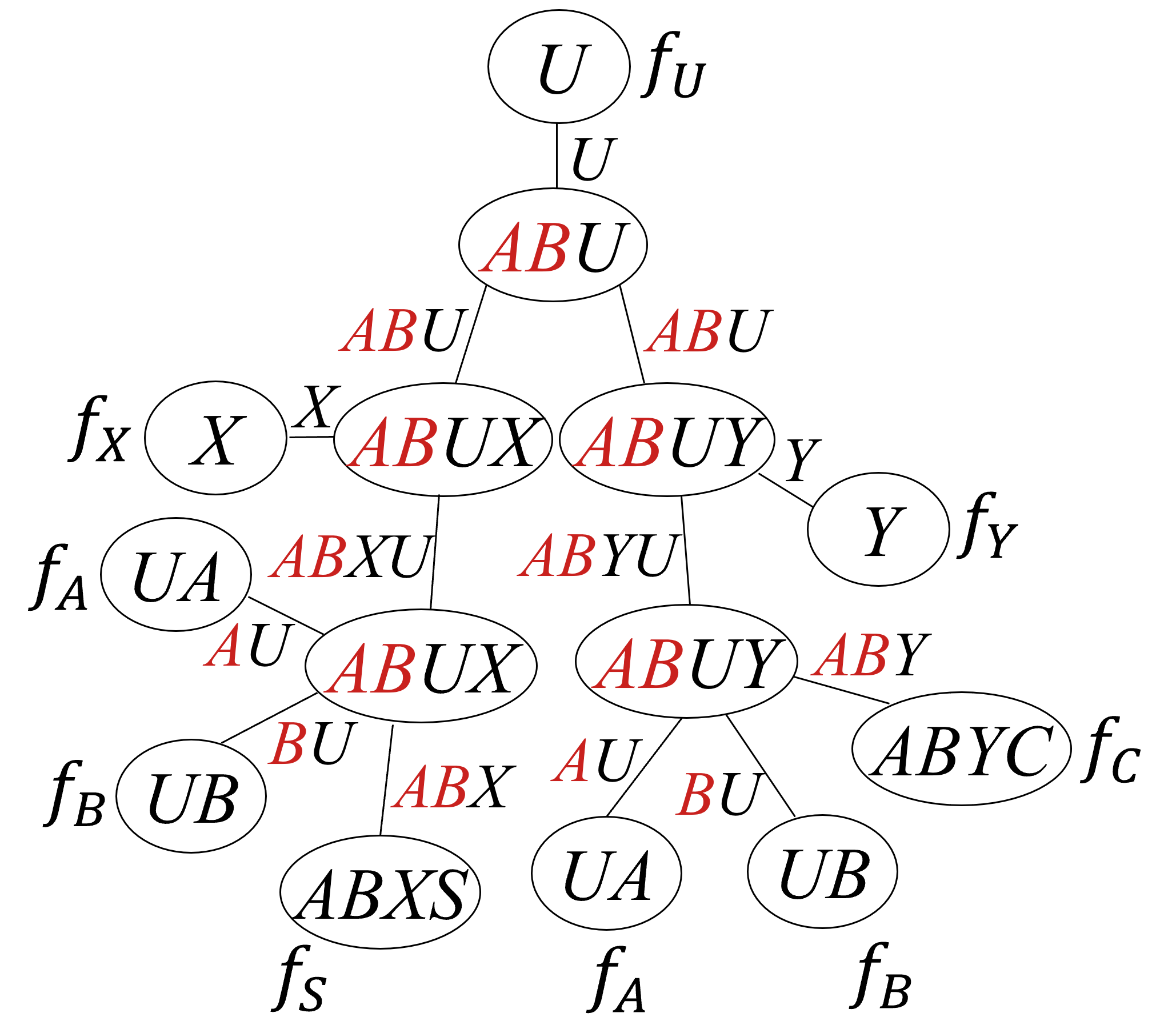}
\caption{thinned base jointreen (width \(3\))}
\label{fig:thinned-jt}
\end{subfigure}
\begin{subfigure}[b]{1.0\columnwidth}
\centering
\includegraphics[width=0.5\columnwidth]{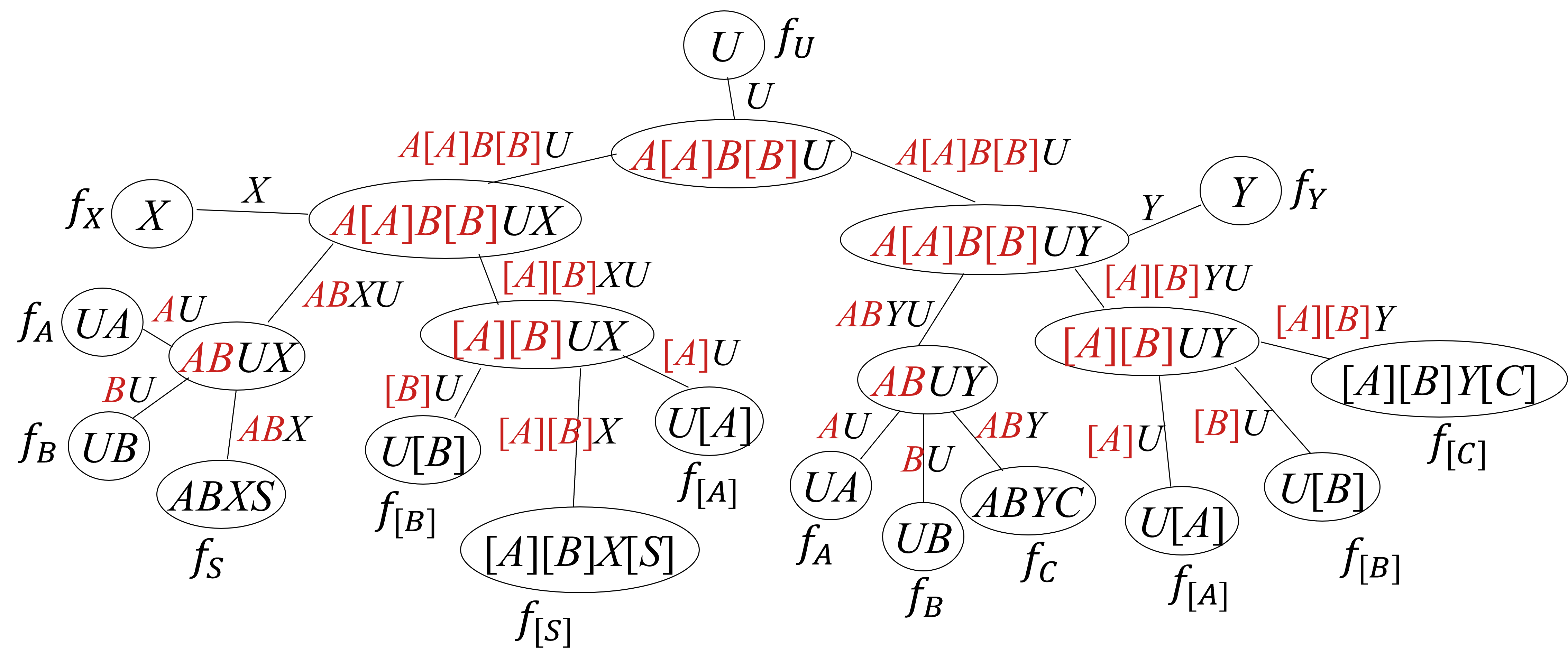}
\caption{thinned twin jointree (width \(3\))}
\label{fig:twin-thinned-jt}
\end{subfigure}
\caption{Illustrating the construction of a thinned, twin jointree
from a thinned, base jointree. }
\end{figure*}

\section{Generalized $N$-World Networks}
\label{app:gn-world-net}
We show that the bound on treewidth also applies to a
more general version of $N$-world networks,
called \emph{generalized $N$-world networks},
that allow any selected, subset of nodes to be duplicated and also allow certain edges
between duplicates of the same variable.
\begin{definition}
\label{def:g-n-world-net}
Let $G$ be a base network
and let $\X$ be any subset of its nodes.
A \underline{generalized $N$-world network}
is constructed as follows. 
We replace each node $X\in\X$ with $N$ duplicates denoted as 
$X^1,\dots,X^N$. 
For each parent $P$ of $X$,
if $P$ is not in $\X$, then make $P$ a parent of 
$X^1,\dots,X^N$;
otherwise, make $P^i$ a parent of $X^i$ 
for each $i\in 1,\dots,N$.
Moreover, for each pair of duplicates $X^i$
and $X^j$ where $1 \leq i < j \leq N$,
an edge may be added from $X^i$ to $X^j$.
\end{definition}

Figure~\ref{fig:gn-tn} shows a generalized $3$-world network for
the base network in Figure~\ref{fig:gn-bn}.
The notion of generalized $N$-world networks is significant
as it covers a vast subclass of dynamic Bayesian networks,
which is widely used for temporal reasoning.

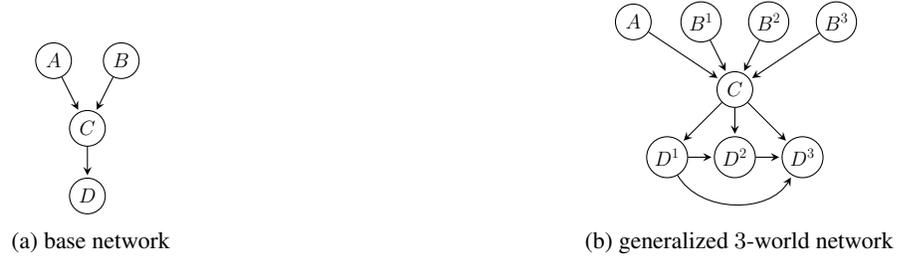
\begin{figure}[h]
\centering
\begin{subfigure}[b]{0.48\columnwidth}
\centering
\begin{tikzpicture}[->,>=stealth,shorten >=1pt,auto,scale=0.45,transform shape]
\node[state,thin,font=\huge] (A) at (0,0) {$A$};
\node[state,thin,font=\huge] (B) at (2,0) {$B$};
\node[state,thin,font=\huge] (C) at (1,-2) {$C$};
\node[state,thin,font=\huge] (D) at (1,-4) {$D$};

\path (A) edge (C);
\path (B) edge (C);
\path (C) edge (D);
\end{tikzpicture}
\caption{base network}
\label{fig:gn-bn}
\end{subfigure}
\begin{subfigure}[b]{0.48\columnwidth}
\centering
\begin{tikzpicture}[->,>=stealth,shorten >=1pt,auto,scale=0.45,transform shape]
\node[state,thin,font=\huge] (A) at (0,0) {$A$};
\node[state,thin,font=\huge] (B1) at (2,0) {$B^1$};
\node[state,thin,font=\huge] (B2) at (4,0) {$B^2$};
\node[state,thin,font=\huge] (B3) at (6,0) {$B^3$};
\node[state,thin,font=\huge] (C) at (3,-2) {$C$};
\node[state,thin,font=\huge] (D1) at (1,-4) {$D^1$};
\node[state,thin,font=\huge] (D2) at (3,-4) {$D^2$};
\node[state,thin,font=\huge] (D3) at (5,-4) {$D^3$};

\path (A) edge (C);
\path (B1) edge (C);
\path (C) edge (D1);
\path (B2) edge (C);
\path (C) edge (D2);
\path (B3) edge (C);
\path (C) edge (D3);

\path (D1) edge (D2);
\path (D2) edge (D3);
\path (D1) edge[bend right=60] (D3);
\end{tikzpicture}
\caption{generalized $3$-world network}
\label{fig:gn-tn}
\end{subfigure}
\caption{A base network and a corresponding generalized $3$-world
network. Nodes $B,D$ were duplicated and edges $D^1 \rightarrow D^2$, $D^2 \rightarrow D^3$, $D^1 \rightarrow D^3$ were added.}
\end{figure}

The following result shows that our bound still applies
to generalized $N$-world networks.
It is worth mentioning that the bound also applies 
for any subgraphs of a generalized $N$-world network,
as the treewidth of any subgraph is no greater than
the treewidth of the original network.
\begin{theorem}
Let $w$ and $w^t$ be the treewidths of a base network
and its generalized $N$-world network, then
$w^t \leq N(w+1)-1$.
\end{theorem}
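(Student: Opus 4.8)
The plan is to replay the proof of Theorem~\ref{thm:ntwin-tw}: take an optimal elimination order $\pi$ for the base network $G$, lift it to an elimination order $\pi^t$ for the generalized $N$-world network $G^t$, and bound the width of $\pi^t$. To lift $\pi$, I scan it and, whenever I reach a node $X\in\X$, I emit the block $\ndup{X}{1},\ndup{X}{2},\dots,\ndup{X}{N}$ in any internal order; whenever I reach a node $Z\notin\X$, I emit the single surviving copy $Z$. For uniform bookkeeping I adopt the convention $\ndup{Z}{k}=Z$ for all $k$ whenever $Z\notin\X$, so that every elimination step removes a ``block'' $\{\ndup{v}{1},\dots,\ndup{v}{N}\}$ that collapses to a singleton exactly when $v$ is not duplicated. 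This is precisely the construction underlying Lemmas~\ref{lem:n-twin-neighbor} and~\ref{lem:n-twin-cls}, of which Definition~\ref{def:n-twin} is the special case where the duplicated set is $G$'s nodes minus some subset $\R$ of roots and no inter-world edges are added; the task is to show both lemmas survive the two new features of Definition~\ref{def:g-n-world-net}, namely partial duplication and the edges $\ndup{X}{i}\!\to\!\ndup{X}{j}$.

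First I would re-establish the analogue of Lemma~\ref{lem:n-twin-neighbor}: for every copy $\ndup{X}{j}$ present in the $i$-th graph $G^t_i$ of the elimination sequence for $G^t$, its neighbor set $G^t_i(\ndup{X}{j})$ (including $\ndup{X}{j}$ itself) is contained in $\bigcup_{k=1}^{N}\ndup{G_i(X)}{k}$, where $G_i(X)$ is the neighbor set of $X$ in the $i$-th graph of the sequence induced by $\pi$ on $G$. The induction over $i$ is word-for-word the appendix argument, since eliminating a block still only unions the neighbor sets of that block and deletes the block, so the algebra $(\bigcup_k \ndup{G_{i-1}(X)}{k}\cup\bigcup_k\ndup{G_{i-1}(Y)}{k})\setminus\{\ndup{Y}{k}\}_k = \bigcup_k\ndup{G_i(X)}{k}$ is unaffected. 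The work is in the base case, the moral graph $G^t_1$: moralizing makes, as neighbors of $\ndup{X}{j}$, copies of its parents, copies of its children, and the co-parents of its children. The vertical parents and children are, by the edge rules of Definition~\ref{def:g-n-world-net}, copies of parents and children of $X$ in $G$, hence copies of members of $G_1(X)$; a co-parent of $\ndup{X}{j}$ inside such a child --- i.e., another parent of a copy of some child $C$ of $X$ in $G$ --- is a copy of a parent of $C$ in $G$, which is a co-parent of $X$ in $G$ and hence lies in $G_1(X)$. The only adjacencies created by the new edges $\ndup{X}{i}\!\to\!\ndup{X}{j}$ are to other copies $\ndup{X}{i}$ of $X$ itself --- which lie in $\bigcup_k\ndup{G_1(X)}{k}$ because $X\in G_1(X)$ --- and, when $\ndup{X}{j}$ is the head of such an edge, to its vertical parents $\ndup{P}{j}$ with $P$ a parent of $X$, so again $P\in G_1(X)$. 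Hence the base case holds and the induction closes.

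Second I would re-establish the analogue of Lemma~\ref{lem:n-twin-cls}: the elimination cluster satisfies $\cls^t(\ndup{X}{j})\subseteq\bigcup_{k=1}^N\ndup{\cls(X)}{k}$. This is the same inner induction on $j$ within a block as in the appendix --- by the definition of variable elimination $\cls^t(\ndup{X}{j})\subseteq G^t_i(\ndup{X}{j})\cup\bigcup_{k<j}\cls^t(\ndup{X}{k})$ (regardless of whether $\ndup{X}{j}$ was adjacent to the earlier copies), the first term being controlled by the neighbor lemma and the rest by the inner hypothesis --- so the internal order of a block is immaterial. Finally, since $|\cls(X)|\le w+1$ for an optimal $\pi$ and replacing each element of $\cls(X)$ by its at most $N$ copies multiplies the count by at most $N$, we get $|\cls^t(\ndup{X}{j})|\le N|\cls(X)|\le N(w+1)$, with the identical bound for every non-duplicated node. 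Taking the maximum over the nodes of $G^t$ gives $\pi^t$ width at most $N(w+1)-1$, hence $G^t$ treewidth at most $N(w+1)-1$; monotonicity of treewidth under subgraphs then extends the bound to every subgraph of a generalized $N$-world network, as remarked before the statement.

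I expect the principal obstacle to be precisely this base case: one must enumerate every way in which moralizing $G^t$ can create an adjacency at a copy $\ndup{X}{j}$ --- notably the co-parents introduced through the extra inter-world edges --- and verify in each case that the neighbor is a copy of a member of $G_1(X)$ or a copy of $X$ itself. Once that is in place the rest is a routine replay of the appendix proof of Theorem~\ref{thm:ntwin-tw}, with the convention $\ndup{Z}{k}=Z$ absorbing the distinction between duplicated and non-duplicated nodes.
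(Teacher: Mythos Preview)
Your plan is correct and follows the same route as the paper: lift an optimal base elimination order to $G^t$ block by block, re-establish the neighbor-set invariant $G^t_i(\ndup{X}{j})\subseteq\bigcup_{k}\ndup{G_i(X)}{k}$, then replay Lemma~\ref{lem:n-twin-cls} to bound clusters. There is one slip in your base-case enumeration: a co-parent of $\ndup{X}{j}$ at a vertical child $\ndup{C}{j}$ is \emph{not} always ``a copy of a parent of $C$ in $G$'' --- it can also be an inter-world parent $\ndup{C}{j''}$ of $\ndup{C}{j}$, i.e., a copy of $C$ itself. This is harmless (since $C\in G_1(X)$), but the sentence as written is false and the case needs to be listed. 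The paper avoids the case split by verifying the base case incrementally: first observe the invariant holds for the moral graph before any inter-world edges are added, then add the edges $\ndup{Y}{j}\!\to\!\ndup{Y}{l}$ one at a time and check that only $\ndup{Y}{j}$, $\ndup{Y}{l}$, and the existing parents of $\ndup{Y}{l}$ acquire a new moral neighbor, which in each case is a copy of a member of the appropriate $G_1(\cdot)$. Either tactic works once the missing co-parent case is patched.
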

\begin{proof}
The proof is similar to the one for Theorem~\ref{thm:ntwin-tw}. We are assuming here that the duplicated nodes of $G$ are $\Y$.
We maintain the following invariant:
for each $i \in 1 \dots, n$ and each
node $X^j$ ($j \in 1 \dots, N$) in $G^N_i$,
$G^N_i(X^j) \subseteq \bigcup_{k=1}^N G_i(X)^k$.
To show the invariant holds initially,
observe that the construction of a generalized $N$-world network
involves two steps:
duplicating the nodes in $\Y$ and adding edges
between the duplicates of $\Y$.
After the first step, the invariant holds
since an edge is added between two nodes in $G^N$
only if the parent-child relationship exists in the base
network.
We next show that the invariant still holds after the 
second step.
Each time we add an edge from one duplicate $Y^j$ to another
duplicate $Y^l$ ($1 \leq j < l \leq N$), 
$Y^j$ becomes a neighbor of $Y^l$ and the parents of $Y^l$.
Since $Y^j$, $Y^l$ and the parents of $Y^l$ all belong to
the set $\bigcup_{k=1}^N G_1(Y)^k$,
$G^N_1(Y^j)$ and $G^N_1(Y^l)$ are still subsets of $\bigcup_{k=1}^N G_1(Y)^k$ after the edge is added.
Similarly, for each parent $P^s$ of $Y^l$ where $1 \leq s \leq N$,
since $Y^j$ already belongs to the set $\bigcup_{k=1}^N G_1(P)^k$,
$G^N_1(P^s)$ remains a subset of $\bigcup_{k=1}^N G_1(P)^k$
after the edge is added.
It follows inductively 
that the invariant still holds after all edges are added.

We next show that the invariant holds after each
elimination step.
This can be done with a same inductive argument as Lemma~\ref{lem:n-twin-neighbor}.
Suppose the invariant holds
in $G^N_{i-1}$ 
and we want to eliminate $\{Y^k\}_{k=1}^N$,
we consider how the eliminations affect
the neighbors of each node $X^j$ in $G^N_{i-1}$.
Suppose $Y$ is not in $G_{i-1}(X)$,
then by the inductive assumption, none of $\{Y^k\}_{k=1}^N$
are in $G^N_{i-1}(X^j)$. Thus $G_{i}(X) = G_{i-1}(X)$
and $G^N_{i}(X^j)=G^N_{i-1}(X^j)$.
Suppose $Y$ is in $G_{i-1}(X)$,
then $G_{i}(X) = G_{i-1}(X) \cup G_{i-1}(Y) \setminus \{Y\}$
and $G^N_{i}(X^j) \subseteq 
(\bigcup_{k=1}^N \ndup{G_{i-1}(X)}{k}) \cup (\bigcup_{k=1}^N \ndup{G_{i-1}(Y)}{k}) \setminus \{\ndup{Y}{k}\}_{k=1}^{N}
=\bigcup_{k=1}^N \ndup{G_i(X)}{k}$. The inductive assumption
holds for both cases.

We finally show that the cluster formed for each $Y^j$ 
($1 \leq j \leq N$),
denoted $\C^N(Y^j)$, is a subset of 
$\bigcup_{k=1}^N \ndup{\C(Y)}{k}$, where $\C(Y) = G_{i-1}(Y)$.
This is similar to Lemma~\ref{lem:n-twin-cls}.
For each $Y^j$, by the definition of variable elimination,
the cluster is bounded as
$\C^N(Y^j) = \bigcup_{k=1}^j G^N_{i-1}(Y^k) \subseteq \bigcup_{k=1}^N \ndup{G_{i-1}(Y)}{k} = \bigcup_{k=1}^N \C(Y)^{k}$.
We can then conclude the bound on widths and treewidths.
\end{proof}

\section{More on Experiments}
\label{app:exp}

Our experiments were run on 2.60GHz Intel Xeon E5-2670 CPU with 256 GB of memory. 

In the main paper, we plotted the jointree widths for random networks with a maximum number
of parents ($p=5$). Here we show the complete experimental results for the random networks with $p = 3,5,7$ generated according to the method of~\cite{DarwicheECAI20b}
which we discussed in the main paper. 
We record the widths and normalized widths for each twin jointrees. Recall that if a jointree has clusters \(\C_1, \ldots, \C_n\), then the normalized width is
\(\log_2 \sum_{i=1}^n 2^{|\C_i|}\).
Table~\ref{tab:exp1} shows the complete results for rNET and Table~\ref{tab:exp2} shows the complete results for rSCM.

In addition, we report results on random networks generated by a second method proposed in~\cite{sbia/IdeC02}. Given a number of nodes $n$ and a maximum degree\footnote{The degree of a node is the number of its parents and children. } $d$ for each node, the method generates a random network by repeatedly adding/removing random edges from the current DAG. We refer to these random networks as rNET2. Similarly to what we did for rNET, we then added a unique root as parent for each internal variable in rNET2. We refer to these modified networks as rSCM2. For each combination of $n \in \{50,100,150,200\}$ and $d \in \{5,10,15\}$, we generated $50$ random base networks and reported the average widths and normalized widths for each twin jointree. Table~\ref{tab:exp3} shows the complete results for rNET2 and Table~\ref{tab:exp4} shows the complete results for rSCM2. The patterns in these tables resemble the ones for rNET and rSCM.

\begin{table*}[tb]
\scriptsize
\centering
\def\arraystretch{1.25}
\begin{tabular}{| *{9}{c|} | *{6}{c|}}
\hline
\multirow{2}{*}{\makecell{num\\ vars}} & \multirow{2}{*}{\makecell{max\\ pars}} & \multirow{2}{*}{stats} & \multicolumn{2}{c|}{\makecell{BASE-\\MF}} & \multicolumn{2}{c|}{\makecell{TWIN-\\ALG1}} & \multicolumn{2}{c||}{\makecell{TWIN-\\MF}} & \multicolumn{2}{c|}{\makecell{BASE-\\MF-RLS}} & \multicolumn{2}{c|}{\makecell{TWIN-\\THM3}} & \multicolumn{2}{c|}{\makecell{TWIN-\\MF-RLS}}\\
\cline{4-15}
 & & & wd & nwd & wd & nwd & wd & nwd & wd & nwd & wd & nwd &  wd & nwd \\
\hline

\multirow{6}{*}{50} 
& \multirow{2}{*}{3} & mean & 7.5 & 11.5 & 14.4 & 17.1 & 10.0 & 14.0 & 5.2 & 11.9 & 7.6 & 13.2 & 5.6 & 13.0\\
\cline{3-15}
& & std & 1.4 & 0.9 & 2.8 & 2.4 & 2.0 & 1.4 & 1.0 & 0.7 & 1.2 & 0.7 & 1.2 & 0.7\\
\cline{2-15}
&  \multirow{2}{*}{5} & mean & 14.3 & 17.4 & 26.8 & 29.2 & 15.9 & 19.6 & 7.2 & 15.0 & 10.0 & 16.2 & 7.3 & 16.0\\
\cline{3-15}
& & std & 2.0 & 1.7 & 4.0 & 3.8 & 1.9 & 1.7 & 1.2 & 0.8 & 2.1 & 0.9 & 1.1 & 0.8\\
\cline{2-15}
& \multirow{2}{*}{7} & mean & 19.5 & 22.4 & 36.9 & 39.1 & 20.4 & 24.1 & 8.9 & 17.1 & 12.2 & 18.3 & 8.9 & 18.1\\
\cline{3-15}
& & std & 2.0 & 1.9 & 4.1 & 3.9 & 1.8 & 1.7 & 0.8 & 0.7 & 2.4 & 0.8 & 0.9 & 0.7\\
\hline \hline

\multirow{6}{*}{75} 
& \multirow{2}{*}{3} & mean & 10.2 & 13.8 & 18.9 & 21.5 & 14.5 & 17.9 & 6.1 & 12.9 & 8.6 & 14.3 & 6.9 & 14.0\\
\cline{3-15}
& & std & 1.9 & 1.5 & 3.6 & 3.4 & 2.4 & 2.0 & 1.6 & 0.8 & 1.9 & 0.9 & 1.9 & 0.8\\
\cline{2-15}
& \multirow{2}{*}{5} & mean & 21.3 & 23.9 & 39.1 & 41.1 & 23.9 & 27.2 & 8.4 & 16.4 & 11.9 & 17.8 & 9.1 & 17.5\\
\cline{3-15}
& & std & 2.3 & 2.2 & 5.1 & 4.9 & 3.0 & 2.7 & 1.6 & 0.8 & 2.2 & 1.0 & 1.8 & 0.8\\
\cline{2-15}
& \multirow{2}{*}{7} & mean & 28.8 & 31.4 & 54.0 & 56.0 & 30.3 & 33.7 & 11.0 & 18.6 & 16.4 & 21.2 & 11.7 & 19.7 \\
\cline{3-15}
& & std & 2.6 & 2.5 & 5.2 & 5.2 & 2.2 & 2.1 & 1.9 & 0.9 & 4.1 & 2.4 & 2.0 & 0.9\\

\hline \hline

\multirow{6}{*}{100} 
& \multirow{2}{*}{3} & mean & 13.7 & 16.7 & 25.1 & 27.5 & 19.9 & 22.8 & 7.1 & 13.8 & 10.2 & 15.3 & 8.3 & 14.9\\
\cline{3-15}
& & std & 2.4 & 2.1 & 4.7 & 4.4 & 3.3 & 2.8 & 1.7 & 0.8 & 2.2 & 1.2 & 1.9 & 0.9\\
\cline{2-15}
& \multirow{2}{*}{5} & mean & 27.1 & 29.7 & 50.8 & 52.9 & 31.4 & 34.5 & 9.5 & 17.1 & 13.5 & 18.9 & 11.2 & 18.4 \\
\cline{3-15}
& & std & 2.5 & 2.5 & 5.1 & 5.0 & 3.6 & 3.2 & 2.2 & 0.6 & 2.9 & 1.5 & 3.0 & 0.9\\
\cline{2-15}
& \multirow{2}{*}{7} & mean & 38.7 & 40.9 & 72.9 & 74.7 & 40.8 & 43.8 & 14.5 & 20.3 & 22.9 & 26.6 & 15.9 & 21.9 \\
\cline{3-15}
& & std & 3.5 & 3.4 & 6.7 & 6.5 & 3.3 & 3.2 & 3.3 & 1.5 & 7.1 & 5.8 & 3.4 & 1.8\\
\hline \hline

\multirow{6}{*}{125}
& \multirow{2}{*}{3} & mean & 16.7 & 19.6 & 30.8 & 33.0 & 26.9 & 29.3 & 7.9 & 14.6 & 10.5 & 16.1 & 9.2 & 15.9\\
\cline{3-15}
& & std & 2.1 & 1.9 & 4.0 & 3.8 & 3.9 & 3.6 & 2.0 & 0.8 & 2.3 & 1.2 & 2.3 & 1.0\\
\cline{2-15}
& \multirow{2}{*}{5} & mean & 34.9 & 37.3 & 65.1 & 66.9 & 39.1 & 42.1 & 12.1 & 18.3 & 17.2 & 21.4 & 14.8 & 20.3\\
\cline{3-15}
& & std & 3.1 & 2.9 & 5.9 & 5.7 & 3.6 & 3.2 & 2.4 & 1.0 & 3.8 & 2.5 & 3.6 & 1.9\\
\cline{2-15}
& \multirow{2}{*}{7} & mean & 47.9 & 50.1 & 89.8 & 91.7 & 50.6 & 53.5 & 19.7 & 23.7 & 32.5 & 35.1 & 21.7 & 26.1\\
\cline{3-15}
& & std & 3.2 & 3.1 & 6.8 & 6.7 & 3.8 & 3.6 & 4.2 & 3.3 & 8.7 & 8.3 & 4.5 & 3.7\\
\hline \hline

\multirow{6}{*}{150} 
& \multirow{2}{*}{3} & mean & 20.3 & 22.9 & 36.9 & 38.9 & 31.8 & 34.2 & 8.6 & 15.2 & 11.1 & 16.7 & 9.8 & 16.4 \\
\cline{3-15}
& & std & 2.4 & 2.3 & 4.7 & 4.7 & 4.2 & 3.9 & 2.1 & 0.8 & 2.2 & 1.1 & 2.5 & 1.1\\
\cline{2-15}
& \multirow{2}{*}{5} & mean & 41.4 & 43.6 & 76.9 & 78.6 & 49.1 & 51.7 & 14.6 & 19.7 & 21.2 & 24.7 & 18.6 & 22.8\\
\cline{3-15}
& & std & 3.3 & 3.1 & 7.4 & 7.2 & 6.3 & 5.8 & 2.9 & 1.8 & 5.3 & 4.5 & 4.0 & 3.2\\
\cline{2-15}
& \multirow{2}{*}{7} & mean & 56.7 & 58.7 & 106.7 & 108.2 & 60.6 & 63.3 & 22.5 & 26.1 & 36.8 & 39.3 & 25.6 & 29.4\\
\cline{3-15}
& & std & 4.1 & 3.8 & 8.1 & 7.9 & 3.6 & 3.5 & 4.8 & 3.9 & 9.6 & 9.1 & 5.1 & 4.5\\
\hline \hline

\multirow{6}{*}{200} 
& \multirow{2}{*}{3} & mean & 25.8 & 28.2 & 47.2 & 49.1 & 42.9 & 44.8 & 10.7 & 16.4 & 13.6 & 18.3 & 12.3 & 17.9\\
\cline{3-15}
& & std & 2.5 & 2.5 & 4.9 & 4.9 & 5.4 & 5.2 & 2.9 & 1.5 & 2.9 & 1.5 & 3.2 & 1.9\\
\cline{2-15}
& \multirow{2}{*}{5} & mean & 55.1 & 57.2  & 102.4  & 104.0 & 65.7 & 68.0 & 19.9 & 23.8 & 29.5 & 32.3 & 26.0 & 29.4\\
\cline{3-15}
& & std & 3.6 & 3.5 & 7.8 & 7.6 & 7.0 & 6.8 & 4.2 & 3.3 & 8.1 & 7.6 & 6.1 & 5.3\\
\cline{2-14}
& \multirow{2}{*}{7} & mean & 75.3 & 77.4 & 141.5 & 143.1 & 80.4 & 82.7 & 34.3 & 37.1 & 57.5 & 59.8 & 39.0 & 42.3\\
\cline{3-15}
& & std & 4.1 & 3.9 & 8.4 & 8.2 & 4.3 & 4.1 & 4.8 & 4.7 & 10.2 & 10.2 & 5.1 & 4.8\\
\hline \hline

\multirow{6}{*}{250} 
& \multirow{2}{*}{3} & mean & 31.6 & 34.0 & 58.1 & 60.0 & 55.7 & 57.4 & 12.1 & 17.4 & 15.0 & 19.5 & 14.1 & 19.2\\
\cline{3-15}
& & std & 3.1 & 2.9 & 6.1 & 6.0 & 6.4 & 6.2 & 2.8 & 1.5 & 3.1 & 2.1 & 3.2 & 2.0\\
\cline{2-15}
& \multirow{2}{*}{5} & mean & 68.6 & 70.5 & 128.5 & 129.9 & 84.9 & 86.8 & 25.8 & 28.9 & 38.9 & 41.3 & 36.1 & 38.9\\
\cline{3-15}
& & std & 4.1 & 3.8 & 7.9 & 7.7 & 9.9 & 9.6 & 4.7 & 4.4 & 9.4 & 9.2 & 7.5 & 7.1\\
\cline{2-15}
& \multirow{2}{*}{7} & mean & 95.6 & 97.4 & 180.3 & 181.7 & 102.7 & 104.7 & 44.2 & 46.7 & 74.6 & 76.6 & 49.5 & 52.7\\
\cline{3-15}
& & std & 4.9 & 4.7 & 10.1 & 9.9 & 5.5 & 5.3 & 6.5 & 6.3 & 13.2 & 13.1 & 6.8 & 6.7\\
\hline \hline

\multirow{6}{*}{300} 
& \multirow{2}{*}{3} & mean & 40.1 & 42.3 & 73.9 & 75.6 & 69.9 & 71.5 & 13.9 & 18.4 & 17.4 & 21.2 & 16.3 & 20.5\\
\cline{3-15}
& & std & 3.9 & 3.6 & 7.6 & 7.3 & 8.0 & 7.9 & 2.7 & 1.8 & 3.9 & 3.3 & 3.4 & 2.7\\
\cline{2-15}
& \multirow{2}{*}{5} & mean & 82.4 & 84.1 & 153.3 & 154.5 & 104.8 & 106.4 & 32.8 & 35.7 & 50.1 & 52.3 & 46.1 & 48.6\\
\cline{3-15}
& & std & 4.7 & 4.5 & 9.7 & 9.6 & 14.6 & 14.3 & 5.4 & 5.3 & 10.4 & 10.2 & 8.8 & 8.5\\
\cline{2-15}
& \multirow{2}{*}{7} & mean & 114.3 & 116.0 & 215.9 & 217.1 & 123.4 & 125.2 & 55.5 & 58.0 & 95.2 & 97.1 & 62.2 & 65.3\\
\cline{3-15}
& & std & 4.1 & 4.0 & 8.4 & 8.4 & 5.8 & 5.6 & 7.2 & 7.2 & 14.3 & 14.2 & 7.5 & 7.3\\
\hline

\end{tabular}

\caption{Widths (wd) and normalized widths (nwd) of various twin jointrees under rNET~\protect\cite{DarwicheECAI20b}. Refer to the main paper for the details of different jointree construction methods. All the thinned jointrees are constructed by bounding the functional chain lengths by $10$; see~\protect\cite{uai/ChenDarwiche22} for details.}
\label{tab:exp1}
\end{table*}

\begin{table*}[tb]
\scriptsize
\centering
\def\arraystretch{1.25}
\begin{tabular}{| *{9}{c|} | *{6}{c|}}
\hline
\multirow{2}{*}{\makecell{num\\ vars}} & \multirow{2}{*}{\makecell{max\\ pars}} & \multirow{2}{*}{stats} & \multicolumn{2}{c|}{\makecell{BASE-\\MF}} & \multicolumn{2}{c|}{\makecell{TWIN-\\ALG1}} & \multicolumn{2}{c||}{\makecell{TWIN-\\MF}} & \multicolumn{2}{c|}{\makecell{BASE-\\MF-RLS}} & \multicolumn{2}{c|}{\makecell{TWIN-\\THM3}} & \multicolumn{2}{c|}{\makecell{TWIN-\\MF-RLS}}\\
\cline{4-15}
 & & & wd & nwd & wd & nwd & wd & nwd & wd & nwd & wd & nwd &  wd & nwd \\
\hline

\multirow{6}{*}{50} 
& \multirow{2}{*}{3} & mean & 7.5 & 11.9 & 14.4 & 17.4 & 14.0 & 17.1 & 11.1 & 15.6 & 14.7 & 18.3 & 14.2 & 18.3\\
\cline{3-15}
& & std & 1.4 & 0.7 & 2.8 & 2.2 & 2.7 & 2.3 & 2.5 & 2.0 & 3.0 & 2.3 & 2.6 & 2.2\\
\cline{2-15}
& \multirow{2}{*}{5} & mean & 14.3 & 17.4 & 26.9 & 29.3 & 26.4 & 28.9 & 19.0 & 22.7 & 23.5 & 26.5 & 22.9 & 26.2\\
\cline{3-15}
& & std & 2.0 & 1.7 & 4.0 & 3.8 & 3.9 & 3.7 & 2.5 & 2.2 & 3.3 & 2.9 & 2.8 & 2.4\\
\cline{2-15}
& \multirow{2}{*}{7} & mean & 19.5 & 22.4 & 37.0 & 39.3 & 37.0 & 39.1 & 24.3 & 27.7 & 31.0 & 33.5 & 29.4 & 32.4\\
\cline{3-15}
& & std & 2.0 & 1.9 & 4.1 & 3.9 & 4.2 & 4.0 & 2.6 & 2.2 & 3.5 & 3.2 & 3.4 & 2.9\\
\hline \hline

\multirow{6}{*}{75} 
& \multirow{2}{*}{3} & mean & 10.2 & 14.0 & 18.9 & 21.6 & 18.5 & 21.3 & 14.9 & 18.5 & 19.1 & 22.1 & 18.1 & 21.5\\
\cline{3-15}
& & std & 1.9 & 1.3 & 3.6 & 3.4 & 3.6 & 3.4 & 3.2 & 2.8 & 4.1 & 3.6 & 3.7 & 3.2\\
\cline{2-15}
& \multirow{2}{*}{5} & mean & 21.3 & 23.9 & 39.1 & 41.1 & 38.5 & 40.7 & 26.9 & 29.7 & 33.5 & 35.8 & 31.8 & 34.5\\
\cline{3-15}
& & std & 2.3 & 2.2 & 5.1 & 4.9 & 5.1 & 4.9 & 2.7 & 2.5 & 3.8 & 3.6 & 3.7 & 3.3\\
\cline{2-15}
& \multirow{2}{*}{7} & mean & 28.8 & 31.4 & 54.1 & 56.2 & 53.9 & 55.9 & 34.5 & 36.9 & 44.9 & 46.9 & 42.3 & 44.5 \\
\cline{3-15}
& & std & 2.6 & 2.5 & 5.3 & 5.3 & 5.2 & 5.1 & 3.4 & 3.1 & 5.5 & 5.2 & 4.9 & 4.6\\
\hline \hline

\multirow{6}{*}{100} 
& \multirow{2}{*}{3} & mean & 13.7 & 16.8 & 25.1 & 27.5 & 24.4 & 26.9 & 19.5 & 22.4 & 24.9 & 27.1 & 23.1 & 25.9 \\
\cline{3-15}
& & std & 2.4 & 2.0 & 4.7 & 4.4 & 4.7 & 4.4 & 3.6 & 3.2 & 4.6 & 4.2 & 4.0 & 3.8\\
\cline{2-15}
& \multirow{2}{*}{5} & mean & 27.1 & 29.7 & 50.8 & 52.9 & 50.8 & 52.8 & 34.6 & 37.0 & 43.9 & 45.8 & 40.5 & 42.8 \\
\cline{3-15}
& & std & 2.5 & 2.5 & 5.4 & 5.2 & 5.1 & 5.0 & 3.9 & 3.7 & 5.4 & 5.2 & 4.8 & 4.5\\
\cline{2-15}
& \multirow{2}{*}{7} & mean & 38.7 & 40.9 & 73.0 & 74.8 & 72.8 & 74.5 & 45.8 & 47.8 & 61.3 & 62.9 & 57.4 & 59.4\\
\cline{3-15}
& & std & 3.5 & 3.4 & 6.7 & 6.5 & 6.8 & 6.7 & 4.7 & 4.5 & 7.9 & 7.7 & 7.4 & 7.1\\
\hline \hline

\multirow{6}{*}{125}
& \multirow{2}{*}{3} & mean & 16.7 & 19.6 & 30.8 & 33.0 & 30.5 & 32.7 & 23.8 & 26.5 & 29.5 & 31.8 & 28.0 & 30.6 \\
\cline{3-15}
& & std & 2.1 & 1.8 & 4.0 & 3.8 & 3.7 & 3.5 & 2.8 & 2.6 & 4.0 & 3.6 & 3.6 & 3.3\\
\cline{2-15}
& \multirow{2}{*}{5} & mean & 34.9 & 37.3 & 65.1 & 66.9 & 64.9 & 66.6 & 44.4 & 46.5 & 56.5 & 58.1 & 51.8 & 54.0\\
\cline{3-15}
& & std & 3.1 & 2.9 & 5.9 & 5.7 & 6.1 & 6.0 & 5.0 & 4.7 & 7.0 & 6.7 & 5.6 & 5.4\\
\cline{2-15}
& \multirow{2}{*}{7} & mean & 47.9 & 50.1 & 90.1 & 91.9 & 89.9 & 91.6 & 55.4 & 57.5 & 76.1 & 77.6 & 71.2 & 73.2\\
\cline{3-15}
& & std & 3.2 & 3.1 & 6.7 & 6.6 & 6.6 & 6.4 & 3.9 & 3.7 & 7.3 & 7.0 & 6.5 & 6.2\\
\hline \hline

\multirow{6}{*}{150} 
& \multirow{2}{*}{3} & mean & 20.3 & 22.9 & 36.9 & 38.9 & 36.2 & 38.2 & 27.3 & 30.0 & 34.0 & 36.1 & 31.9 & 34.3\\
\cline{3-15}
& & std & 2.4 & 2.3 & 4.7 & 4.7 & 4.4 & 4.3 & 3.5 & 3.3 & 4.4 & 4.2 & 4.5 & 4.2\\
\cline{2-15}
& \multirow{2}{*}{5} & mean & 41.4 & 43.6 & 76.9 & 78.6 & 76.3 & 77.9 & 50.6 & 52.5 & 65.3 & 66.8 & 60.6 & 62.5\\
\cline{3-15}
& & std & 3.3 & 3.1 & 7.4 & 7.2 & 6.9 & 6.9 & 4.3 & 4.2 & 7.3 & 7.1 & 6.5 & 6.2\\
\cline{2-15}
& \multirow{2}{*}{7} & mean & 56.7 & 58.7 & 106.7 & 108.2 & 106.0 & 107.5 & 65.7 & 67.4 & 91.1 & 92.4 & 83.2 & 85.0\\
\cline{3-15}
& & std & 4.1 & 3.8 & 8.1 & 7.9 & 7.5 & 7.4 & 4.5 & 4.4 & 8.0 & 7.9 & 7.0 & 6.9\\
\hline \hline

\multirow{6}{*}{200} 
& \multirow{2}{*}{3} & mean & 25.8 & 28.3 & 47.2 & 49.1 & 46.2 & 48.2 & 35.5 & 37.6 & 44.2 & 45.9 & 40.7 & 42.8\\
\cline{3-15}
& & std & 2.5 & 2.5 & 5.0 & 4.9 & 4.9 & 4.9 & 4.5 & 4.3 & 6.4 & 6.1 & 5.6 & 5.3\\
\cline{2-15}
& \multirow{2}{*}{5} & mean & 55.0 & 57.2 & 102.5 & 104.1 & 102.0 & 103.5 & 67.7 & 69.4 & 88.7 & 90.0 & 82.6 & 84.1\\
\cline{3-15}
& & std & 3.6 & 3.5 & 7.3 & 7.2 & 7.6 & 7.5 & 5.5 & 5.4 & 8.8 & 8.6 & 7.8 & 7.6\\
\cline{2-15}
& \multirow{2}{*}{7} & mean & 75.3 & 77.4 & 141.5 & 143.1 & 75.3 & 77.4 & 87.0 & 88.6 & 124.6 & 125.8 & 115.2 & 116.7\\
\cline{3-15}
& & std & 4.1 & 3.9 & 8.4 & 8.2 & 4.1 & 3.9 & 4.6 & 4.4 & 9.2 & 9.2 & 8.4 & 8.3\\
\hline \hline

\multirow{6}{*}{250} 
& \multirow{2}{*}{3} & mean & 31.6 & 34.0 & 58.1 & 60.0 & 57.5 & 59.4 & 43.8 & 45.8 & 54.3 & 55.9 & 49.8 & 51.9 \\
\cline{3-15}
& & std & 3.1 & 3.0 & 6.0 & 6.0 & 6.6 & 6.5 & 5.2 & 4.9 & 7.7 & 7.4 & 6.3 & 6.0\\
\cline{2-15}
& \multirow{2}{*}{5} & mean & 68.6 & 70.5 & 128.5 & 129.9 & 127.1 & 128.5 & 82.1 & 83.6 & 110.3 & 111.5 & 101.7 & 103.2\\
\cline{3-15}
& & std & 4.1 & 3.8 & 7.9 & 7.7 & 7.2 & 7.1 & 5.6 & 5.5 & 8.9 & 8.8 & 7.2 & 7.1\\
\cline{2-15}
& \multirow{2}{*}{7} & mean & 95.7 & 97.4 & 180.4 & 181.7 & 179.3 & 180.7 & 109.1 & 110.4 & 157.5 & 158.7 & 146.6 & 147.9\\
\cline{3-15}
& & std & 4.9 & 4.7 & 10.1 & 9.9 & 8.9 & 8.8 & 6.0 & 5.9 & 12.2 & 12.2 & 10.9 & 10.8\\
\hline \hline

\multirow{6}{*}{300} 
& \multirow{2}{*}{3} & mean & 40.1 & 42.3 & 73.9 & 75.6 & 72.4 & 74.1 & 54.1 & 55.8 & 66.7 & 68.1 & 61.8 & 63.6\\
\cline{3-15}
& & std & 3.9 & 3.6 & 7.6 & 7.3 & 7.5 & 7.4 & 5.1 & 5.1 & 6.8 & 6.7 & 7.1 & 7.0\\
\cline{2-15}
& \multirow{2}{*}{5} & mean & 82.5 & 84.1 & 153.3 & 154.5 & 152.7 & 154.0 & 99.1 & 100.4 & 135.0 & 136.2 & 124.3 & 125.8\\
\cline{3-15}
& & std & 4.7 & 4.5 & 9.7 & 9.6 & 10.4 & 10.3 & 6.3 & 6.2 & 10.3 & 10.2 & 9.9 & 9.8\\
\cline{2-15}
& \multirow{2}{*}{7} & mean & 114.4 & 116.0 & 215.9 & 217.2 & 215.1 & 216.3 & 131.5 & 132.7 & 194.1 & 195.2 & 178.8 & 180.1 \\
\cline{3-15}
& & std & 4.1 & 4.0 & 8.4 & 8.3 & 9.1 & 9.1 & 5.7 & 5.6 & 12.8 & 12.7 & 11.2 & 11.1\\
\hline

\end{tabular}

\caption{Widths (wd) and normalized widths (nwd) of various twin jointrees under rSCM~\protect\cite{DarwicheECAI20b}. Refer to the main paper for the details of different jointree construction methods. All the thinned jointrees are constructed by bounding the functional chain lengths by $10$; see~\protect\cite{uai/ChenDarwiche22} for details.}
\label{tab:exp2}
\end{table*}

\begin{table*}[tb]
\scriptsize
\centering
\def\arraystretch{1.25}
\begin{tabular}{| *{9}{c|} | *{6}{c|}}
\hline
\multirow{2}{*}{\makecell{num\\ vars}} & \multirow{2}{*}{\makecell{max\\ degree}} &
\multirow{2}{*}{stats} & \multicolumn{2}{c|}{\makecell{BASE-\\MF}} & \multicolumn{2}{c|}{\makecell{TWIN-\\ALG1}} & \multicolumn{2}{c||}{\makecell{TWIN-\\MF}} & \multicolumn{2}{c|}{\makecell{BASE-\\MF-RLS}} & \multicolumn{2}{c|}{\makecell{TWIN-\\THM3}} & \multicolumn{2}{c|}{\makecell{TWIN-\\MF-RLS}}\\
\cline{4-15}
 & & & wd & nwd & wd & nwd & wd & nwd & wd & nwd & wd & nwd &  wd & nwd \\
\hline

\multirow{6}{*}{50} 
& \multirow{2}{*}{5} & mean & 20.0 & 22.4 & 37.3 & 39.0 & 20.3 & 23.6 & 8.7 & 14.5 & 13.5 & 17.1 & 8.7 & 15.5\\
\cline{3-15}
& & std & 1.3 & 1.1 & 3.3 & 3.1 & 1.3 & 1.1 & 1.4 & 0.6 & 2.9 & 1.8 & 1.4 & 0.7\\
\cline{2-15}
& \multirow{2}{*}{10} & mean & 32.9 & 35.4 & 55.5 & 57.7 & 32.9 & 36.4 & 13.7 & 18.6 & 21.6 & 24.7 & 13.7 & 19.6\\
\cline{3-15}
& & std & 1.0 & 0.9 & 11.9 & 11.3 & 1.0 & 0.9 & 1.8 & 1.0 & 6.0 & 4.5 & 1.8 & 1.0\\
\cline{2-15}
& \multirow{2}{*}{15} & mean & 38.1 & 40.7 & 62.5 & 64.9 & 38.1 & 41.7 & 19.7 & 23.4 & 30.6 & 33.1 & 19.7 & 24.4\\
\cline{3-15}
& & std & 1.1 & 0.9 & 15.6 & 14.8 & 1.1 & 0.9 & 2.1 & 1.6 & 8.3 & 7.2 & 2.1 & 1.6\\
\hline \hline

\multirow{6}{*}{100} 
& \multirow{2}{*}{5} & mean & 40.0 & 41.8 & 76.7 & 78.1 & 40.1 & 42.7 & 14.5 & 18.4 & 23.5 & 25.8 & 14.6 & 19.4\\
\cline{3-15}
& & std & 1.2 & 1.4 & 6.1 & 5.9 & 1.7 & 1.5 & 2.1 & 1.6 & 5.0 & 4.6 & 2.1 & 1.6\\
\cline{2-15}
& \multirow{2}{*}{10} & mean & 65.1 & 67.4 & 110.6 & 112.7 & 65.5 & 68.6 & 33.0 & 35.9 & 52.3 & 54.5 & 33.0 & 36.9 \\
\cline{3-15}
& & std & 1.7 & 1.5 & 25.1 & 24.3 & 1.8 & 1.5 & 2.5 & 2.4 & 11.8 & 11.2 & 2.5 & 2.4\\
\cline{2-15}
& \multirow{2}{*}{15} & mean & 75.1 & 77.5 & 114.7 & 117.1 & 75.2 & 78.5 & 45.3 & 48.0 & 66.0 & 68.5 & 45.3 & 49.0 \\
\cline{3-15}
& & std & 1.8 & 1.6 & 33.0 & 32.2 & 1.8 & 1.6 & 3.1 & 3.1 & 18.7 & 17.7 & 3.1 & 3.1\\
\hline \hline

\multirow{6}{*}{150} 
& \multirow{2}{*}{5} & mean & 59.7 & 61.2 & 110.6 & 112.0 & 60.3 & 62.4 & 21.5 & 24.5 & 36.6 & 38.6 & 21.5 & 25.5 \\
\cline{3-15}
& & std & 2.6 & 2.3 & 17.2 & 16.8 & 2.5 & 2.3 & 2.5 & 2.3 & 7.8 & 7.4 & 2.5 & 2.3\\
\cline{2-15}
& \multirow{2}{*}{10} & mean & 97.1 & 99.0 & 159.6 & 161.7 & 97.3 & 100.1 & 49.8 & 52.5 & 80.9 & 83.1 & 49.8 & 53.5\\
\cline{3-15}
& & std & 2.0 & 1.8 & 37.0 & 36.7 & 2.0 & 1.8 & 3.6 & 3.6 & 20.1 & 19.3 & 3.6 & 3.6\\
\cline{2-15}
& \multirow{2}{*}{15} & mean & 109.7 & 111.8 & 168.9 & 171.3 & 109.8 & 112.8 & 66.7 & 69.1 & 92.7 & 95.3 & 72.3 & 75.6 \\
\cline{3-15}
& & std & 2.5 & 2.2 & 49.8 & 49.2 & 2.5 & 2.3 & 4.2 & 4.2 & 28.4 & 27.6 & 4.2 & 4.2\\
\hline \hline

\multirow{6}{*}{200} 
& \multirow{2}{*}{3} & mean & 109.7 & 111.8 & 168.9 & 171.3 & 109.8 & 112.8 & 66.7 & 69.1 & 92.7 & 95.3 & 72.3 & 75.6\\
\cline{3-15}
& & std & 3.0 & 2.8 & 23.3 & 22.9 & 2.6 & 2.5 & 3.3 & 3.3 & 10.0 & 9.6 & 3.2 & 3.2\\
\cline{2-15}
& \multirow{2}{*}{5} & mean & 127.0 & 128.5 & 193.6 & 195.8 & 127.2 & 129.6 & 65.0 & 67.4 & 97.2 & 99.6 & 65.0 & 68.4 \\
\cline{3-15}
& & std & 2.8 & 2.7 & 56.4 & 55.9 & 3.0 & 2.9 & 4.7 & 4.6 & 27.8 & 27.0 & 4.7 & 4.6\\
\cline{2-15}
& \multirow{2}{*}{7} & mean & 141.1 & 142.7 & 210.8	& 212.9	& 141.3 & 143.9 & 82.9 & 85.2 & 121.3 & 123.7 & 82.9 & 86.3\\
\cline{3-15}
& & std & 3.2 & 3.1 & 61.0 & 60.6 & 3.3 & 3.1 & 4.8 & 4.7 & 33.9 & 33.2 & 4.8 & 4.7\\
\hline

\end{tabular}

\caption{Widths (wd) and normalized widths (nwd) of various twin jointrees under rNET2~\protect\cite{sbia/IdeC02}. Refer to the main paper for the details of different jointree construction methods. All the thinned jointrees are constructed by bounding the functional chain lengths by $10$; see~\protect\cite{uai/ChenDarwiche22} for details.}
\label{tab:exp3}
\end{table*}

\begin{table*}[tb]
\scriptsize
\centering
\def\arraystretch{1.25}
\begin{tabular}{| *{9}{c|} | *{6}{c|}}
\hline
\multirow{2}{*}{\makecell{num\\ vars}} & \multirow{2}{*}{\makecell{max\\ degree}} &
\multirow{2}{*}{stats} & \multicolumn{2}{c|}{\makecell{BASE-\\MF}} & \multicolumn{2}{c|}{\makecell{TWIN-\\ALG1}} & \multicolumn{2}{c||}{\makecell{TWIN-\\MF}} & \multicolumn{2}{c|}{\makecell{BASE-\\MF-RLS}} & \multicolumn{2}{c|}{\makecell{TWIN-\\THM3}} & \multicolumn{2}{c|}{\makecell{TWIN-\\MF-RLS}}\\
\cline{4-15}
 & & & wd & nwd & wd & nwd & wd & nwd & wd & nwd & wd & nwd &  wd & nwd \\
\hline

\multirow{6}{*}{50} 
& \multirow{2}{*}{5} & mean & 20.0 & 22.4 & 36.2 & 39.9 & 37.2 & 39.2 & 27.9 & 30.2 & 39.6 & 41.3 & 37.1 & 39.1\\
\cline{3-15}
& & std & 1.3 & 1.1 & 2.4 & 2.3 & 2.7 & 2.4 & 2.0 & 2.0 & 3.4 & 3.1 & 2.9 & 2.6\\
\cline{2-15}
& \multirow{2}{*}{10} & mean & 32.9 & 35.4 & 64.8 & 66.7 & 64.9 & 66.8 & 37.4 & 39.8 & 54.0 & 55.8 & 50.6 & 52.9\\
\cline{3-15}
& & std & 1.0 & 0.9 & 2.4 & 2.3 & 2.5 & 2.5 & 1.5 & 1.2 & 3.1 & 2.9 & 2.3 & 2.3\\
\cline{2-15}
& \multirow{2}{*}{15} & mean & 38.1 & 40.7 & 75.3 & 77.4 & 76.4 & 78.5 & 40.8 & 43.4 & 61.1 & 63.2 & 60.3 & 62.5\\
\cline{3-15}
& & std & 1.1 & 0.9 & 2.4 & 2.3 & 3.4 & 3.1 & 1.3 & 1.2 & 3.5 & 3.4 & 3.4 & 3.4\\
\hline \hline

\multirow{6}{*}{100} 
& \multirow{2}{*}{5} & mean & 40.1 & 41.7 & 78.4 & 79.8 & 77.7 & 79.1 & 54.8 & 56.2 & 80.6 & 81.8 & 74.3 & 75.7\\
\cline{3-15}
& & std & 1.5 & 1.4 & 3.6 & 3.5 & 3.5 & 3.3 & 3.1 & 3.1 & 6.2 & 6.1 & 4.7 & 4.6\\
\cline{2-15}
& \multirow{2}{*}{10} & mean & 65.1 & 67.4 & 129.3 & 131.0 & 128.6 & 130.4 & 75.1 &	76.6 & 116.2 & 117.5 & 106.4 & 108.2 \\
\cline{3-15}
& & std & 1.7 & 1.5 & 3.6 & 3.4 & 3.9 & 3.6 & 2.5 & 2.3 & 5.6 & 5.4 & 4.1 & 4.0\\
\cline{2-15}
& \multirow{2}{*}{15} & mean & 75.1 & 77.5 & 149.6 & 151.5 & 149.3 & 151.2 & 81.2 & 83.1 & 131.0 & 132.5 & 125.7 & 127.8\\
\cline{3-15}
& & std & 1.8 & 1.6 & 3.8 & 3.5 & 3.4 & 3.3 & 2.2 & 2.0 & 5.0 & 4.9 & 4.9 & 4.8\\
\hline \hline

\multirow{6}{*}{150} 
& \multirow{2}{*}{5} & mean & 59.7 & 61.1 &	118.5 & 119.7 & 118.1 & 119.4 & 83.0 & 84.3 & 125.1 & 126.2 & 113.8 & 115.0\\
\cline{3-15}
& & std & 2.5 & 2.3 & 4.8 & 4.7 & 4.9 & 4.8 & 3.7 & 3.6 & 6.1 & 6.0 & 4.5 & 4.4\\
\cline{2-15}
& \multirow{2}{*}{10} & mean & 97.1 & 99.0 & 193.4 & 194.9 & 193.3 & 194.8 & 111.1 & 112.5 & 174.7 & 175.8 & 161.1 & 162.7\\
\cline{3-15}
& & std & 2.0 & 1.8 & 4.2 & 4.0 & 4.2 & 4.0 & 2.3 & 2.2 & 5.5 & 5.5 & 5.0 & 4.9\\
\cline{2-15}
& \multirow{2}{*}{15} & mean & 109.7 & 91.8 & 218.8 & 220.4 & 218.8 & 220.4 & 120.3 & 121.9 & 196.6 & 197.9 & 186.0 & 187.7\\
\cline{3-15}
& & std & 2.5 & 2.2 & 5.2 & 5.0 & 5.4 & 5.1 & 2.9 & 2.7 & 6.0 & 5.9 & 5.4 & 5.3\\
\hline \hline

\multirow{6}{*}{200} 
& \multirow{2}{*}{5} & mean & 75.6 & 76.6 & 150.7 & 151.8 & 150.8 & 151.9 & 102.5 & 103.6 & 155.0 &	156.1 &	143.3 & 144.4\\
\cline{3-15}
& & std & 2.8 & 2.6 & 5.6 & 5.5 & 5.3 & 5.2 & 4.9 & 4.8 & 7.2 & 7.1 & 5.4 & 5.4\\
\cline{2-15}
& \multirow{2}{*}{10} & mean & 127.0 & 128.5 & 253.4 & 254.6 & 252.7 & 254.0 & 149.4 & 148.6 & 233.0 & 234.2 & 212.9 & 214.3\\
\cline{3-15}
& & std & 2.8 & 2.7 & 5.5 & 5.5 & 4.5 & 4.4 & 3.4 & 3.4 & 7.7 & 7.7 & 6.2 & 6.1\\
\cline{2-15}
& \multirow{2}{*}{15} & mean & 141.1 & 142.7 & 281.7 & 283.0 & 281.5 & 282.8 & 156.3 & 159.6 & 255.6 & 256.7 & 238.6 & 240.2\\
\cline{3-15}
& & std & 3.2 & 3.1 & 6.4 & 6.2 & 6.5 & 6.4 & 4.3 & 4.2 & 9.7 & 9.7 & 7.0 & 7.0\\
\hline

\end{tabular}

\caption{Widths (wd) and normalized widths (nwd) of various twin jointrees under rSCM2~\protect\cite{sbia/IdeC02}. Refer to the main paper for the details of different jointree construction methods. All the thinned jointrees are constructed by bounding the functional chain lengths by $10$; see~\protect\cite{uai/ChenDarwiche22} for details.}
\label{tab:exp4}
\end{table*}

\end{document}